\newtheorem{theorem}{Theorem}
\newtheorem{lemma}{Lemma}
\newtheorem{definition}{Definition}
\newcommand{\vs}[1]{\vspace{#1}}
\newcommand{\noallowdisplaybreaks}{\interdisplaylinepenalty=10000}
\newcommand{\sign}{\mathop{\rm sign}}
\newcommand{\argmin}{\mathop{\rm arg~min}\limits}
\newcommand{\expect}{\mathop{\mathbb{E}}}
\newcommand{\s}{\mathrm{S}}
\renewcommand{\d}{\mathrm{D}}
\renewcommand{\u}{\mathrm{U}}
\newcommand{\cmark}{\ding{51}}
\newcommand{\xmark}{\ding{55}}
\def\maketag@@@#1{\hbox{\m@th\normalsize\normalfont#1}}
\let\oldalign\align
\def\align{\def\f@size{10}\check@mathfonts\oldalign}
\renewcommand{\hat}{\widehat}
\renewcommand{\tilde}{\widetilde}
\newcommand{\reducespace}{\vs{-10pt}}
\icmltitlerunning{Classification from Pairwise Similarity and Unlabeled Data}
\begin{document}

\allowdisplaybreaks

\twocolumn[
\icmltitle{Classification from Pairwise Similarity and Unlabeled Data}




\begin{icmlauthorlist}
\icmlauthor{Han Bao}{u-tokyo,aip}
\icmlauthor{Gang Niu}{aip}
\icmlauthor{Masashi Sugiyama}{aip,u-tokyo}
\end{icmlauthorlist}

\icmlaffiliation{u-tokyo}{The University of Tokyo, Japan}
\icmlaffiliation{aip}{RIKEN, Japan}

\icmlcorrespondingauthor{Han Bao}{tsutsumi@ms.k.u-tokyo.ac.jp}

\icmlkeywords{binary classification, weakly-supervised classification, semi-supervised learning, semi-supervised clustering, metric learning, similarity, PU classification, class-prior estimation}

\vskip 0.3in
]



\printAffiliationsAndNotice{}  

\begin{abstract}
Supervised learning needs a huge amount of labeled data,
which can be a big bottleneck under the situation where there is a privacy concern or labeling cost is high.
To overcome this problem, we propose a new weakly-supervised learning setting
where only \emph{similar (S)} data pairs (two examples belong to the same class) and \emph{unlabeled (U)} data points are needed
instead of fully labeled data, which is called \emph{SU classification}.
We show that an unbiased estimator of the classification risk can be obtained only from SU data,
and the estimation error of its empirical risk minimizer achieves the optimal parametric convergence rate.
Finally, we demonstrate the effectiveness of the proposed method through experiments.
\end{abstract}

\section{Introduction}
In supervised classification, we need a vast amount of labeled data in the training phase.
However, in many real-world problems, it is time-consuming and laborious to label a huge amount of unlabeled data.
To deal with this problem, \emph{weakly-supervised classification}~\cite{Zhou:2018} has been explored in various setups,
including semi-supervised classification~\cite{Chapelle:2005,Belkin:2006,Chapelle:2010,Miyato:2016,Laine:2017,Sakai:2017a,Tarvainen:2017,Luo:2018},
multiple instance classification~\cite{Li:2015,Miech:2017,Bao:2018},
and positive-unlabeled (PU) classification~\cite{Elkan:2008,Plessis:2014,Plessis:2015,Niu:2016,Kiryo:2017}.

Another line of research from the clustering viewpoint is \emph{semi-supervised clustering},
where pairwise similarity and dissimilarity data (a.k.a.~must-link and cannot-link constraints) are utilized to guide unsupervised clustering to a desired solution.
The common approaches are
(i) constrained clustering~\cite{Wagstaff:2001,Basu:2002,Basu:2004,Li:2009:ICCV}, which utilize pairwise links as constraints on clustering.
(ii) metric learning~\cite{Xing:2002,Bilenko:2004,Weinberger:2005,Davis:2007,Li:2008,Niu:2012}, which perform ($k$-means) clustering on learned metrics
(iii) matrix completion~\cite{Yi:2013,Chiang:2015}, which recover unknown entries in a similarity matrix.

\begin{table}[t]
  \centering
  \caption{Explanations of classification and clustering.}
  \label{tab:problem-definition}
  \begin{minipage}{\columnwidth}
    \begin{tabularx}{\columnwidth}{c|X} \hline
      Problem & Explanation \\ \hline \hline
      Classification &
      The goal is to minimize the true risk (given the zero-one loss) of an inductive classifier.
      To this end, an empirical risk (given a surrogate loss) on the training data is minimized for training the classifier.
      The training and testing phases can be clearly distinguished.
      Classification requires the existence of the underlying joint density.
      \\ \hline
      Clustering &
      The goal is to partition the data at hand into clusters.
      To this end, density-/margin-/information-based measures are optimized for implementing the low-density separation based on the cluster assumption.
      Most of the clustering methods are designed for in-sample inference\footnote{
        Discriminative clustering methods are designed for out-of-sample inference,
        such as maximum margin clustering~\cite{Xu:2005} and information maximization clustering~\cite{Krause:2010,Sugiyama:2011}.
      }.
      Clustering does not need the underlying joint density.
      \\ \hline
    \end{tabularx}
  \end{minipage}
  \reducespace
\end{table}

Semi-supervised clustering and weakly-supervised classification
are similar in that they do not use fully-supervised data.
However, they are different from the learning theoretic viewpoint---weakly-supervised classification methods are justified as supervised learning methods,
while semi-supervised clustering methods are still evaluated as unsupervised learning
(see Table~\ref{tab:problem-definition}).
Indeed, weakly-supervised learning methods based on empirical risk minimization~\cite{Plessis:2014,Plessis:2015,Niu:2016,Sakai:2017a} were shown that
their estimation errors achieve the optimal parametric convergence rate,
while such generalization guarantee is not available for semi-supervised clustering methods.

The goal of this paper is to propose a novel weakly-supervised learning method called \emph{SU classification},
where only \emph{similar (S)} data pairs (two examples belong to the same class) and \emph{unlabeled (U)} data points are employed,
in order to bridge these two different paradigms.
In SU classification, the information available for training a classifier is similar to semi-supervised clustering.
However, our proposed method gives an \emph{inductive model},
which learns decision functions from training data and can be applied for out-of-sample prediction (i.e., prediction of unseen test data).
Furthermore, the proposed method can not only separate two classes but also \emph{identify which class is positive} (class identification) under certain conditions.

SU classification is particularly useful to predict people's \emph{sensitive matters} such as religion, politics, and opinions on racial issues---people often hesitate to give explicit answers to these matters,
instead indirect questions might be easier to answer: ``Which person do you have the same belief as?"\footnote{
  This questioning can be regarded as one type of randomized response (indirect questioning) techniques~\cite{Warner:1965,Fisher:1993},
  which is a survey method to avoid social desirability bias.
}

For this SU classification problem, our contributions in this paper are three-fold:
\vs{-3pt}
\begin{enumerate}
  \item
  We propose an empirical risk minimization method for SU classification (Section~\ref{sec:su-learning}).
  This enables us to obtain an inductive classifier.
  Under certain loss conditions together with the linear-in-parameter model, its objective function becomes even convex in the parameters.

  \item
  We theoretically establish an estimation error bound for our SU classification method (Section~\ref{sec:theory}), showing that the proposed method achieves the optimal parametric convergence rate.

  \item
  We experimentally demonstrate the practical usefulness of the proposed SU classification method (Section~\ref{sec:experiments}).
\end{enumerate}
\vs{-3pt}

Related problem settings are summarized in Figure~\ref{fig:problem-settings}.

\begin{figure}[t]
  \centering
  \includegraphics[width=\columnwidth]{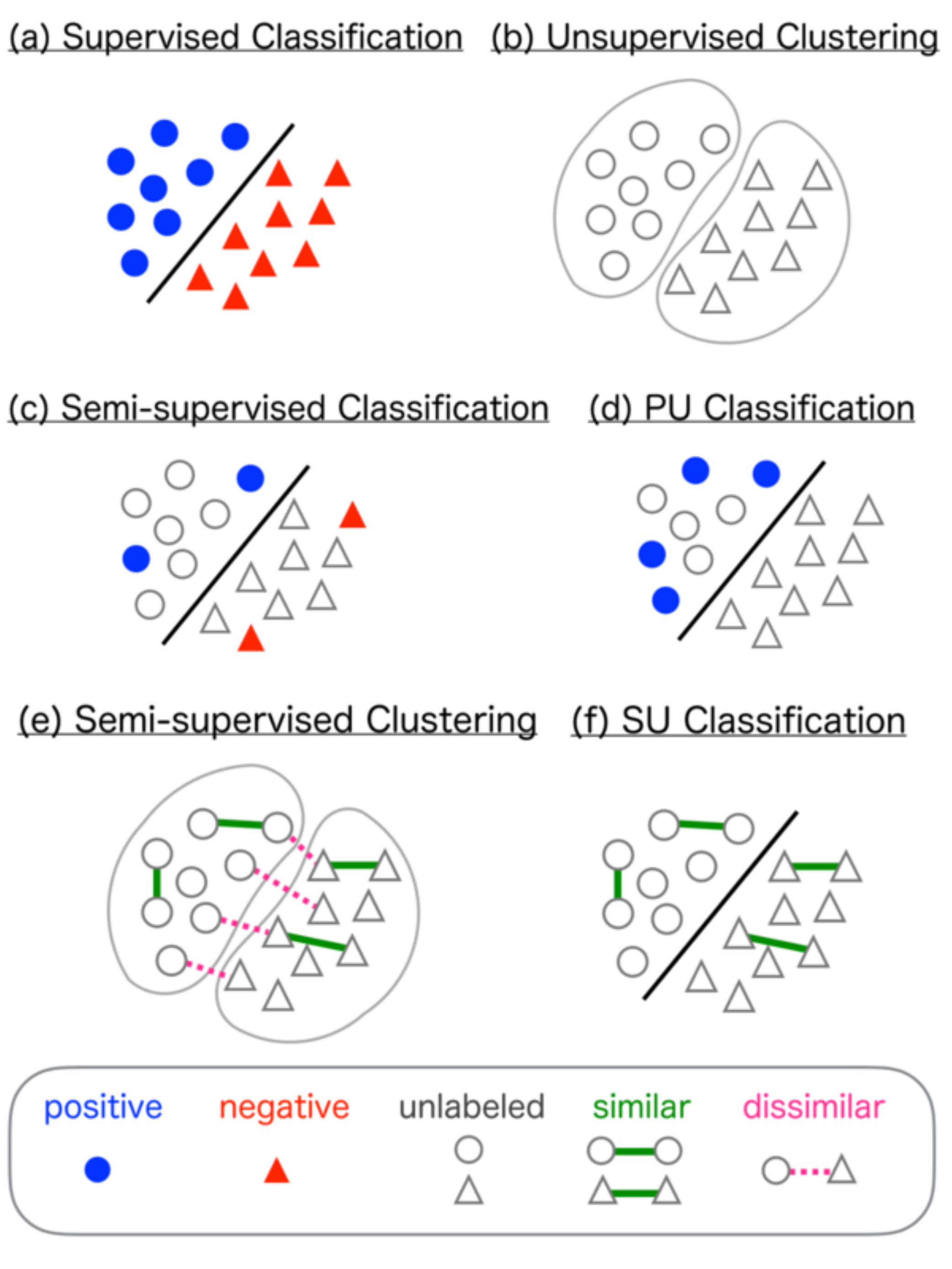}
  \caption{
    Illustrations of SU classification and other related problem settings.
  }
  \label{fig:problem-settings}
\end{figure}

\section{Classification from Pairwise Similarity and Unlabeled Data}
\label{sec:su-learning}
In this section, we propose a learning method to train a classifier 
from pairwise similarity and unlabeled data.

\subsection{Preliminaries}
\label{sec:preliminaries}
We formulate the standard binary classification problem briefly.
Let $\mathcal{X} \subset \mathbb{R}^d$ be a $d$-dimensional example space
and $\mathcal{Y} = \{+1, -1\}$ be a binary label space.
We assume that labeled data $(\bm{x},y) \in \mathcal{X} \times \mathcal{Y}$ is drawn
from the joint probability distribution with density $p(\bm{x},y)$.
The goal of binary classification is
to obtain a classifier $f: \mathcal{X} \rightarrow \mathbb{R}$
which minimizes the classification risk defined as
\begin{align}
  R(f) \triangleq \expect_{(X,Y) \sim p} \left[\ell(f(X), Y)\right]
  \label{eq:risk}
  ,
\end{align}
where $\mathbb{E}_{(X,Y) \sim p} [\cdot]$ denotes the expectation over the joint distribution $p(X,Y)$
and $\ell: \mathbb{R} \times \mathcal{Y} \rightarrow \mathbb{R}_{+}$ is a loss function.
The loss function $\ell(z, t)$ measures how well the true class label $t \in \mathcal{Y}$ is estimated by an output of a classifier $z \in \mathbb{R}$,
generally yielding a small/large value if $t$ is well/poorly estimated by $z$.

In standard supervised classification scenarios, we are given
positive and negative training data independently following $p(\bm{x},y)$.
Then, based on these training data, the classification risk \eqref{eq:risk}
is empirically approximated and the empirical risk minimizer is obtained.
However, in many real-world problems, collecting labeled training data is costly.
The goal of this paper is to train a binary classifier only from
pairwise similarity and unlabeled data, which are cheaper to collect
than fully labeled data.

\subsection{Pairwise Similarity and Unlabeled Data}

First, we discuss underlying distributions of similar data pairs and unlabeled data points,
in order to perform the empirical risk minimization.

\textbf{Pairwise Similarity:}
If $\bm{x}$ and $\bm{x}'$ belong to the same class,
they are said to be \emph{pairwise similar} (S).
We assume that similar data pairs are drawn following
\noallowdisplaybreaks
\begin{align}
  p_{\s}(\bm{x},\bm{x}')
  &= p(\bm{x}, \bm{x}'| y=y'=+1 \vee y=y'=-1) \nonumber \\
  &= \frac{\pi_+^2 p_+(\bm{x}) p_+(\bm{x}') + \pi_-^2 p_-(\bm{x}) p_-(\bm{x}')}{\pi_+^2 + \pi_-^2}
  \label{eq:pairwise-similarity-conditional}
  ,
\end{align}
\allowdisplaybreaks
where
$\pi_+ \triangleq p(y = +1)$ and $\pi_- \triangleq p(y = -1)$
are the class-prior probabilities satisfying $\pi_+ + \pi_- = 1$,
and
$p_+(\bm{x}) \triangleq p(\bm{x} | y = +1)$ and $p_-(\bm{x}) \triangleq p(\bm{x} | y = -1)$
are the class-conditional densities.
Eq.~\eqref{eq:pairwise-similarity-conditional} means that we draw two labeled data independently following $p(\bm{x},y)$,
and we accept/reject them if they belong to the same class/different classes.

\textbf{Unlabeled Data:}
We assume that unlabeled (U) data points are drawn following
the marginal density $p(\bm{x})$,
which can be decomposed into the sum of the class-conditional densities as
\vs{-3pt}
\begin{align}
  p(\bm{x})
  &= \pi_+p_+(\bm{x}) + \pi_-p_-(\bm{x})
  \label{eq:marginal}
  .
\end{align}
\vs{-5pt}

Our goal is to train a classifier only from SU data,
which we call \emph{SU classification}.
We assume that we have similar pairs $\mathcal{D}_\s$ and an unlabeled dataset $\mathcal{D}_\u$ as
\noallowdisplaybreaks
\begin{alignat*}{2}
  \mathcal{D}_\s &\triangleq \{(\bm{x}_{\s,i},\bm{x}_{\s,i}')\}_{i=1}^{n_\s} && \stackrel{\text{i.i.d.}}{\sim} p_\s(\bm{x},\bm{x}'), \\
  \mathcal{D}_\u &\triangleq \{\bm{x}_{\u,i}\}_{i=1}^{n_\u} && \stackrel{\text{i.i.d.}}{\sim} p(\bm{x})
  .
\end{alignat*}
\allowdisplaybreaks
We also use a notation $\tilde{\mathcal{D}}_\s \triangleq \{\tilde{\bm{x}}_{\s,i}\}_{i=1}^{2n_\s}$ to denote pointwise similar data obtained by ignoring pairwise relations in $\mathcal{D}_\s$.

\begin{lemma}
  \label{lem:pointwise-similar-conditional}
  $\tilde{\mathcal{D}}_\s = \{\tilde{\bm{x}}_{\s,i}\}_{i=1}^{2n_\s}$ are independently drawn following
  \begin{align}
    \tilde{p}_\s(\bm{x}) = \frac{\pi_+^2p_+(\bm{x}) + \pi_-^2p_-(\bm{x})}{\pi_\s}
    \label{eq:pointwise-similar-conditional}
    ,
  \end{align}
  where $\pi_\s \triangleq \pi_+^2 + \pi_-^2$.
\end{lemma}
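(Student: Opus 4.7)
The plan is to derive $\tilde{p}_\s(\bm{x})$ simply as the marginal of the pairwise similar density $p_\s(\bm{x},\bm{x}')$ given in \eqref{eq:pairwise-similarity-conditional}, since each $\tilde{\bm{x}}_{\s,i}\in\tilde{\mathcal D}_\s$ is by construction either the first or the second coordinate of some pair $(\bm{x}_{\s,j},\bm{x}_{\s,j}')\sim p_\s$.

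\paragraph{Key steps.}
First, I would compute the marginal of the first coordinate by integrating out $\bm{x}'$:
\begin{align*}
\int p_\s(\bm{x},\bm{x}')\,\mathrm{d}\bm{x}'
&=\frac{\pi_+^2 p_+(\bm{x})\!\int p_+(\bm{x}')\mathrm{d}\bm{x}' + \pi_-^2 p_-(\bm{x})\!\int p_-(\bm{x}')\mathrm{d}\bm{x}'}{\pi_+^2+\pi_-^2}\\
&=\frac{\pi_+^2 p_+(\bm{x})+\pi_-^2 p_-(\bm{x})}{\pi_\s},
\end{align*}
using $\int p_\pm(\bm{x}')\mathrm{d}\bm{x}'=1$ and $\pi_\s=\pi_+^2+\pi_-^2$. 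By the symmetry of $p_\s$ in its two arguments, the marginal of the second coordinate is the same. Thus every element $\tilde{\bm{x}}_{\s,i}$ has marginal density $\tilde p_\s$. Second, I would observe that since the pairs $(\bm{x}_{\s,j},\bm{x}_{\s,j}')$ are i.i.d.\ across $j$, any two points $\tilde{\bm{x}}_{\s,i}$ and $\tilde{\bm{x}}_{\s,i'}$ coming from \emph{different} pairs are independent, and by the computation above each is distributed according to $\tilde p_\s$.

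\paragraph{Main obstacle.}
The subtle point is that two pointwise similar data coming from the \emph{same} pair are not literally independent: conditional on the shared class label they are both drawn from the same $p_\pm$, so the joint $p_\s(\bm{x},\bm{x}')$ factorizes as $\tilde p_\s(\bm{x})\tilde p_\s(\bm{x}')$ only when $p_+\equiv p_-$. The lemma should therefore be read as asserting that each $\tilde{\bm{x}}_{\s,i}$ is \emph{marginally} distributed according to $\tilde p_\s$ (while remaining independent across distinct pairs). This suffices for the subsequent empirical-risk computations, where only the expectations $\mathbb{E}_{\tilde p_\s}[\cdot]$ of individual points matter and linearity of expectation bypasses the within-pair dependence. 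I would simply state this interpretation explicitly and rely on the marginalization above.
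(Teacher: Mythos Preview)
Your proposal is correct and follows essentially the same approach as the paper: marginalize $p_\s(\bm{x},\bm{x}')$ over $\bm{x}'$ using $\int p_\pm(\bm{x}')\,\mathrm{d}\bm{x}'=1$ to obtain $\tilde p_\s(\bm{x})$, then invoke symmetry and the i.i.d.\ assumption across pairs. Your caveat about within-pair dependence is well taken and in fact more careful than the paper's own proof, which also only establishes the marginal and then simply asserts that ``both $\bm{x}_{\s,i}$ and $\bm{x}_{\s,i}'$ are drawn following $\tilde p_\s$'' without discussing independence; as you note, the downstream uses (unbiasedness of the empirical risk, Theorem~\ref{thm:minimum-variance-pairing}) rely only on linearity of expectation over the marginals, so this reading is exactly what is needed.
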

A proof is given in Appendix~\ref{sec:proof-of-pointwise-similar-conditional}.

Lemma~\ref{lem:pointwise-similar-conditional} states that a similar data pair $(\bm{x}_\s,\bm{x}_\s')$ is essentially symmetric, and $\bm{x}_\s, \bm{x}_\s'$ can be regarded as being independently drawn following $\tilde{p}_\s$,
if we assume the pair $(\bm{x}_\s,\bm{x}_\s')$ is drawn following $p_\s$.
This perspective is important when we analyze the variance of the risk estimator (Section~\ref{sec:variance-minimization}), and estimate the class-prior (Section~\ref{sec:class-prior-estimation}).

\subsection{Risk Expression with SU Data}

Below, we attempt to express the classification risk \eqref{eq:risk}
only in terms of SU data.
Assume $\pi_+ \ne \frac{1}{2}$,
and let
$\tilde{\ell}(z)$,
$\mathcal{L}_{\s,\ell}(z)$
and $\mathcal{L}_{\u,\ell}(z)$ be
  \begin{align*}
    \tilde{\ell}(z) &\triangleq \ell(z,+1) - \ell(z,-1), \\
    \mathcal{L}_{\s,\ell}(z) &\triangleq \frac{1}{2\pi_+ - 1}\tilde{\ell}(z), \\
    \mathcal{L}_{\u,\ell}(z) &\triangleq - \frac{\pi_-}{2\pi_+ - 1}\ell(z,+1) + \frac{\pi_+}{2\pi_+ - 1}\ell(z,-1)
    .
  \end{align*}
Then we have the following theorem.

\noallowdisplaybreaks
\begin{theorem}
  \label{thm:su-risk}
  The classification risk \eqref{eq:risk} can be equivalently expressed as
  \begin{align}
    R_{\mathrm{SU},\ell}(f)
    &= \pi_\s\expect_{(X,X')\sim p_{\s}}\left[ \frac{\mathcal{L}_{\s,\ell}(f(X)) + \mathcal{L}_{\s,\ell}(f(X'))}{2} \right] \nonumber \\
    & \quad + \expect_{X \sim p}\left[ \mathcal{L}_{\u,\ell}(f(X)) \right]
    \nonumber
    .
  \end{align}
\end{theorem}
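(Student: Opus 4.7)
The plan is to start from the standard decomposition
\[
R(f) = \pi_+ \expect_{X\sim p_+}[\ell(f(X),+1)] + \pi_- \expect_{X\sim p_-}[\ell(f(X),-1)],
\]
and show that the proposed $R_{\mathrm{SU},\ell}(f)$ equals this quantity by matching coefficients of $\ell(f(x),+1)$ and $\ell(f(x),-1)$ against $p_+(x)$ and $p_-(x)$. So the proof is a direct computation; the only subtlety is exploiting the symmetry of $p_\s$ to collapse the pairwise expectation into a pointwise one.

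First I would apply Lemma~\ref{lem:pointwise-similar-conditional}. Since $(X,X')\sim p_\s$ has identical marginals $\tilde{p}_\s$, the pairwise term in the theorem reduces to a pointwise expectation:
\[
\pi_\s\expect_{(X,X')\sim p_\s}\!\left[\tfrac{\mathcal{L}_{\s,\ell}(f(X))+\mathcal{L}_{\s,\ell}(f(X'))}{2}\right]
= \pi_\s \expect_{X\sim \tilde{p}_\s}[\mathcal{L}_{\s,\ell}(f(X))].
\]
By \eqref{eq:pointwise-similar-conditional}, this equals $\int \bigl(\pi_+^2 p_+(x) + \pi_-^2 p_-(x)\bigr)\,\mathcal{L}_{\s,\ell}(f(x))\,dx$. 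Similarly, by \eqref{eq:marginal}, the unlabeled term equals $\int\bigl(\pi_+ p_+(x) + \pi_- p_-(x)\bigr)\mathcal{L}_{\u,\ell}(f(x))\,dx$.

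Next I would substitute the definitions of $\mathcal{L}_{\s,\ell}$ and $\mathcal{L}_{\u,\ell}$ and collect terms. Writing $a(x) \triangleq \ell(f(x),+1)$ and $b(x) \triangleq \ell(f(x),-1)$, the coefficient of $a(x)$ in the integrand is
\[
\tfrac{1}{2\pi_+-1}\bigl(\pi_+^2 p_+(x) + \pi_-^2 p_-(x)\bigr) - \tfrac{\pi_-}{2\pi_+-1}\bigl(\pi_+ p_+(x) + \pi_- p_-(x)\bigr),
\]
and analogously for $b(x)$. Simplification uses the key identity $\pi_+-\pi_- = 2\pi_+-1$ (valid because $\pi_++\pi_-=1$), which makes the $(2\pi_+-1)$ in the denominator cancel cleanly. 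The $a(x)$ coefficient collapses to $\pi_+ p_+(x)$ and, by a symmetric calculation, the $b(x)$ coefficient collapses to $\pi_- p_-(x)$. Summing recovers $R(f)$.

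There is no real obstacle; the main care point is handling the sign of $2\pi_+-1$ (the assumption $\pi_+\ne\tfrac12$ is exactly what keeps the denominators nonzero) and making sure the pairwise-to-pointwise reduction is invoked correctly. The cancellation is the only place where things could plausibly go wrong, and it is entirely driven by $\pi_+-\pi_- = 2\pi_+-1$.
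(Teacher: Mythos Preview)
Your proposal is correct and considerably more direct than the paper's own argument. The paper proves Theorem~\ref{thm:su-risk} via an intermediate representation $R_{\mathrm{PSD},\ell}(f)$ (Lemma~\ref{lem:psd-estimator}) that mixes positive, similar, and dissimilar expectations; it first rewrites $R(f)$ as a pairwise expectation, decomposes it over the four label combinations, substitutes the definitions of $p_\s$ and $p_\d$, and only then eliminates the positive and dissimilar terms using \eqref{eq:marginal} and \eqref{eq:pairwise-marginal} to reach $R_{\mathrm{SU},\ell}$. Your route bypasses all of this: you collapse the pairwise similar expectation to a pointwise one over $\tilde{p}_\s$ immediately (which the paper itself does later, in Lemma~\ref{lem:su-estimator}), expand both terms against $p_+$ and $p_-$, and check the two coefficient identities using $\pi_+-\pi_-=2\pi_+-1$. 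The paper's detour is perhaps more ``constructive'' in that it suggests how one might have \emph{discovered} the SU risk by peeling off the positive and dissimilar pieces, whereas your argument is a clean verification of an identity already in hand; for proving the theorem as stated, your approach is shorter and arguably clearer.
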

\allowdisplaybreaks
A proof is given in Appendix~\ref{sec:proof-of-su-estimator}.

\begin{figure*}[t]
  \begin{minipage}{0.24\hsize}
    \centering
    \includegraphics[width=\columnwidth]{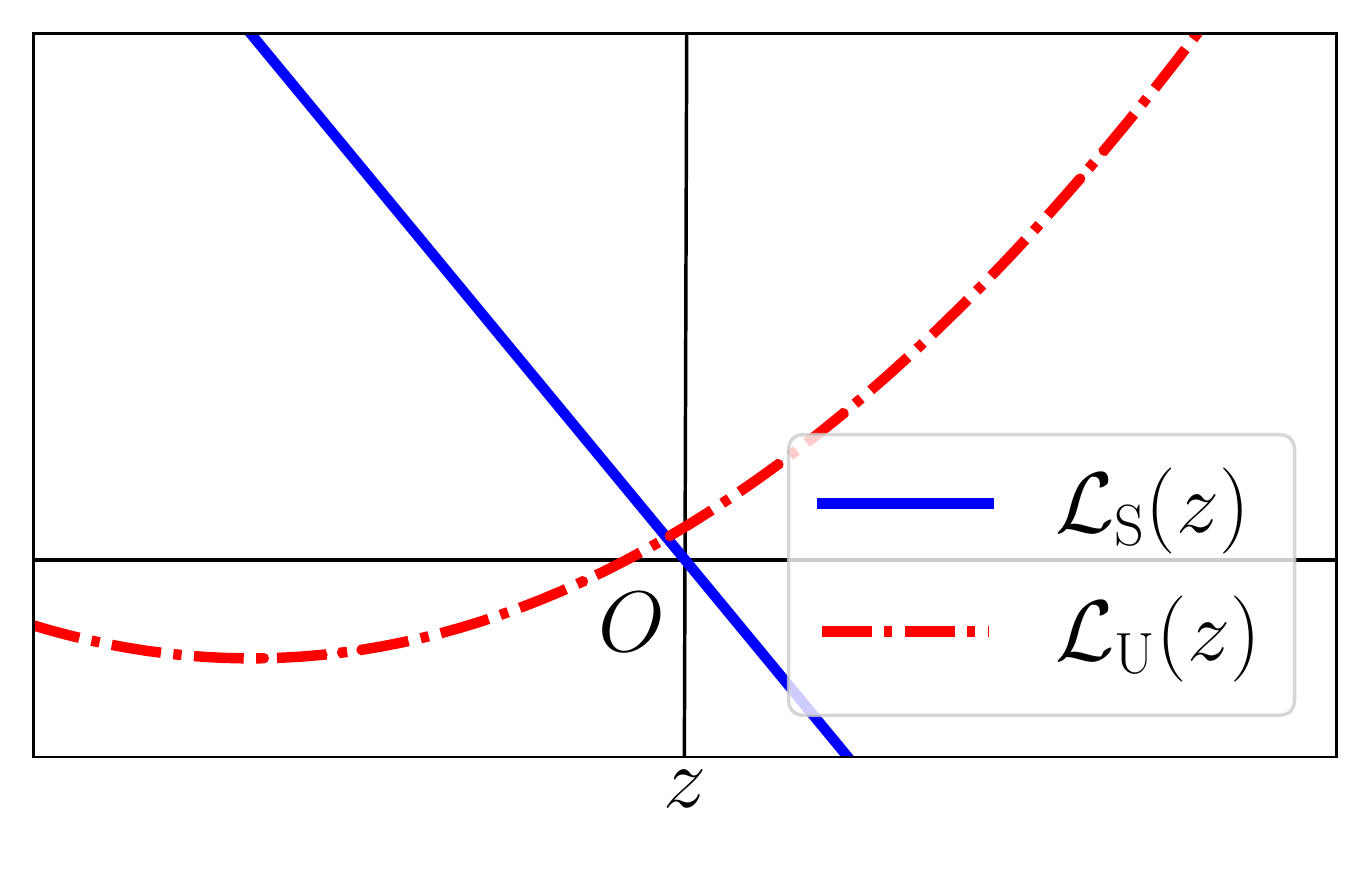}
    \caption*{(a) Squared Loss, $\pi_+=\frac{3}{4}$}
  \end{minipage}
  \begin{minipage}{0.24\hsize}
    \centering
    \includegraphics[width=\columnwidth]{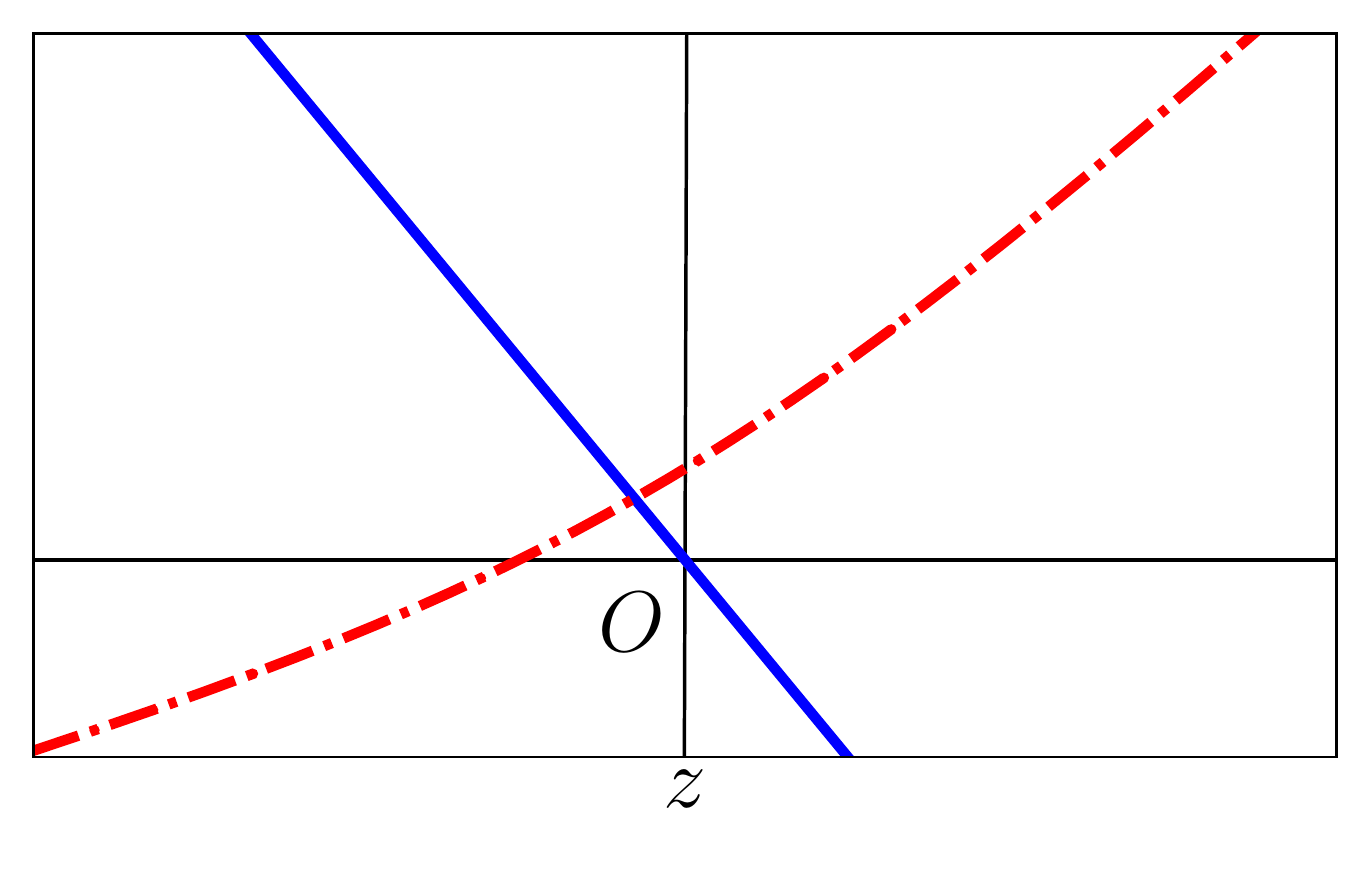}
    \caption*{(b) Logistic Loss, $\pi_+=\frac{3}{4}$}
  \end{minipage}
  \begin{minipage}{0.24\hsize}
    \centering
    \includegraphics[width=\columnwidth]{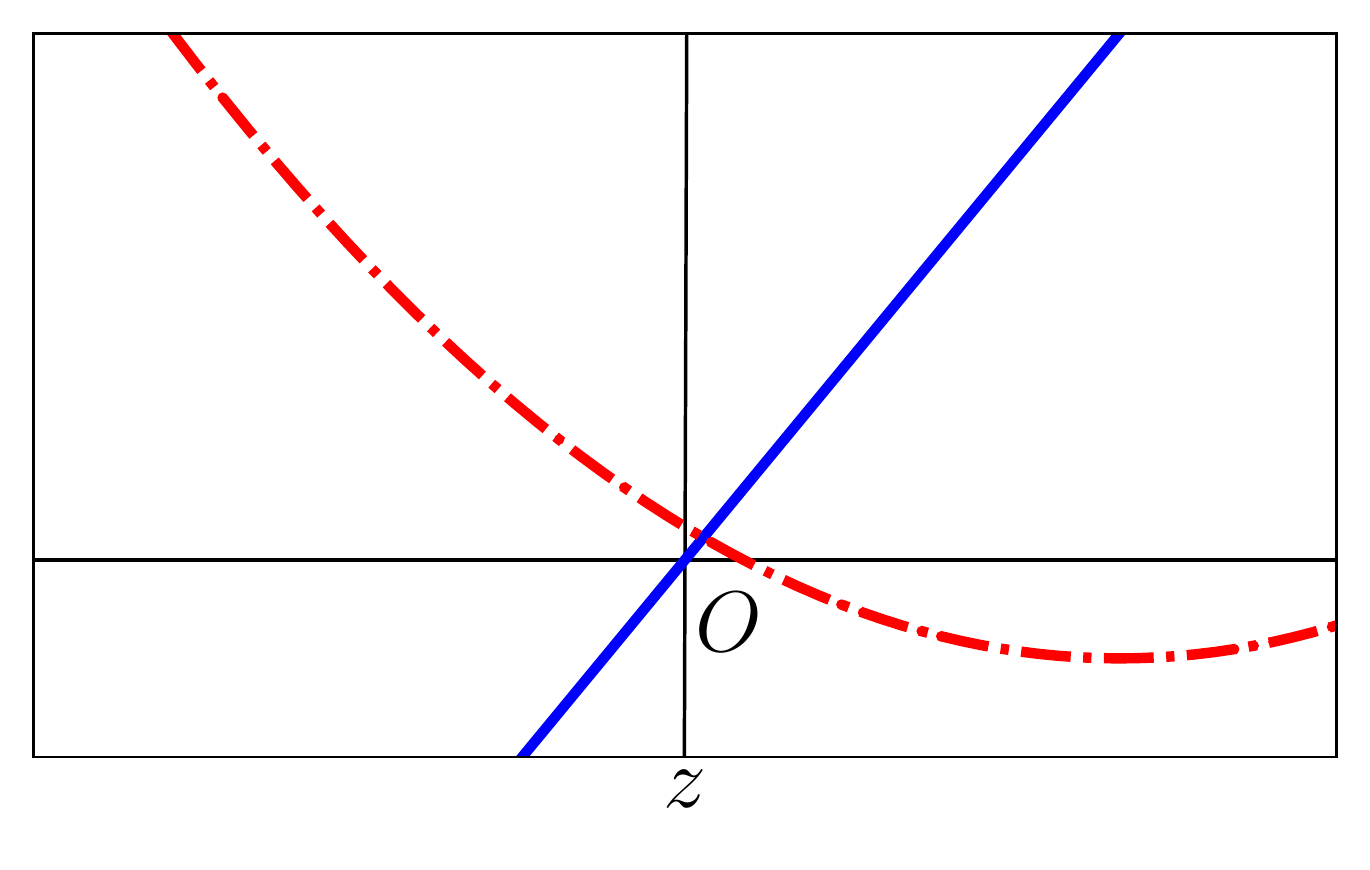}
    \caption*{(c) Squared Loss, $\pi_+=\frac{1}{4}$}
  \end{minipage}
  \begin{minipage}{0.24\hsize}
    \centering
    \includegraphics[width=\columnwidth]{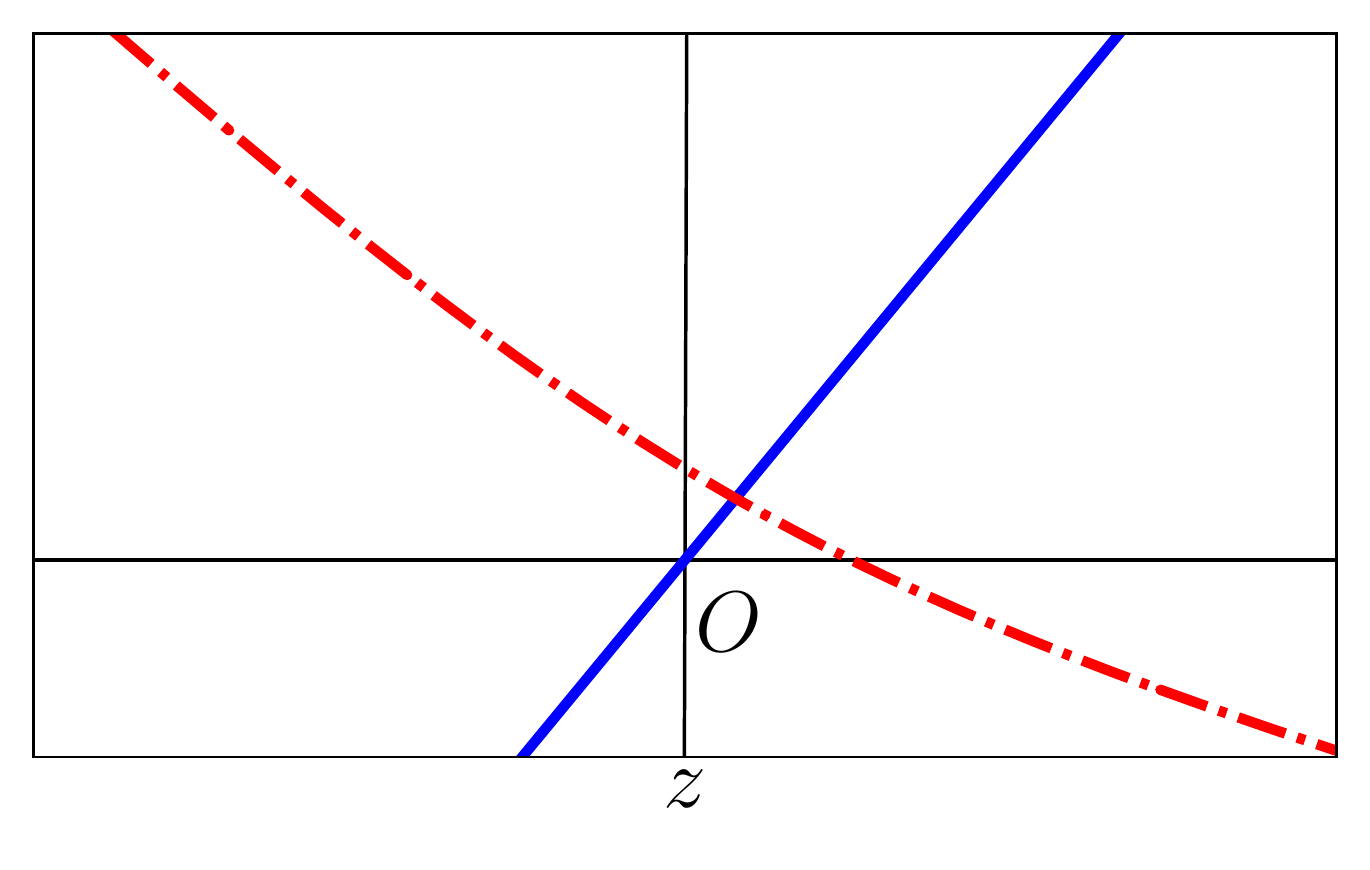}
    \caption*{(d) Logistic Loss, $\pi_+=\frac{1}{4}$}
  \end{minipage}
  \caption{
  $\mathcal{L}_{\s,\ell}$ and $\mathcal{L}_{\u,\ell}$ appearing in Eq.~\eqref{eq:su-estimator}
  are illustrated with different loss functions and class-priors.
  }
  \label{fig:su-loss-functions}
\end{figure*}

According to Theorem~\ref{thm:su-risk}, the following is a natural candidate for an unbiased estimator of the classification risk~\eqref{eq:risk}:
\noallowdisplaybreaks
\begin{align}
  \hat{R}&_{\mathrm{SU},\ell}(f) \nonumber
  \\
  &= \frac{\pi_\s}{n_\s} \sum_{i=1}^{n_\s} \frac{\mathcal{L}_{\s,\ell}(f(\bm{x}_{\s,i})) + \mathcal{L}_{\s,\ell}(f(\bm{x}_{\s,i}'))}{2} \nonumber \\
  & \qquad + \frac{1}{n_\u} \sum_{i=1}^{n_\u} \mathcal{L}_{\u,\ell}(f(\bm{x}_{\u,i})) \nonumber
  \\
  &= \frac{\pi_\s}{2n_\s} \sum_{i=1}^{2n_\s} \mathcal{L}_{\s,\ell}(f(\tilde{\bm{x}}_{\s,i}))
  + \frac{1}{n_\u} \sum_{i=1}^{n_\u} \mathcal{L}_{\u,\ell}(f(\bm{x}_{\u,i}))
  \label{eq:su-estimator}
  ,
\end{align}
\allowdisplaybreaks
where in the last line we use the decomposed version of similar pairs $\tilde{\mathcal{D}}_\s$ instead of $\mathcal{D}_\s$,
since the loss form is symmetric.

$\mathcal{L}_{\s,\ell}$ and $\mathcal{L}_{\u,\ell}$ are illustrated in Figure~\ref{fig:su-loss-functions}.

\subsection{Minimum-Variance Risk Estimator}
\label{sec:variance-minimization}

Eq.~\eqref{eq:su-estimator} is one of the candidates of an unbiased SU risk estimator.
Indeed, due to the symmetry of $(\bm{x},\bm{x}') \sim p_\s(\bm{x},\bm{x}')$, we have the following lemma.
\begin{lemma}
  \label{lem:pairing-fact}
  The first term of $R_{\mathrm{SU},\ell}(f)$, i.e.,
  \begin{align}
    \pi_\s \expect_{(X,X') \sim p_\s} \left[ \frac{\mathcal{L}_{\s,\ell}(f(X)) + \mathcal{L}_{\s,\ell}(f(X'))}{2} \right],
    \label{eq:s-risk}
  \end{align}
  can be equivalently expressed as
  \begin{align*}
    \pi_\s\expect_{(X,X') \sim p_\s} \left[ \alpha\mathcal{L}_{\s,\ell}(f(X)) + (1-\alpha)\mathcal{L}_{\s,\ell}(f(X')) \right],
  \end{align*}
  where $\alpha \in [0, 1]$ is an arbitrary weight.
\end{lemma}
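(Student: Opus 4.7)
The plan is to exploit the manifest symmetry of the joint density $p_\s(\bm{x},\bm{x}')$ in its two arguments. Inspecting the closed form in \eqref{eq:pairwise-similarity-conditional}, we see that
\[
p_\s(\bm{x},\bm{x}') = \frac{\pi_+^2 p_+(\bm{x}) p_+(\bm{x}') + \pi_-^2 p_-(\bm{x}) p_-(\bm{x}')}{\pi_+^2 + \pi_-^2}
\]
is invariant under the swap $(\bm{x},\bm{x}') \leftrightarrow (\bm{x}',\bm{x})$. This is essentially the same observation already used in Lemma~\ref{lem:pointwise-similar-conditional}, so I would simply quote it.

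From this symmetry it follows that for any integrable function $g$,
\[
\expect_{(X,X')\sim p_\s}[g(X)] = \expect_{(X,X')\sim p_\s}[g(X')],
\]
because the change of variables $(\bm{x},\bm{x}') \mapsto (\bm{x}',\bm{x})$ leaves the density unchanged while swapping the roles of the two coordinates. Applying this with $g = \mathcal{L}_{\s,\ell}\circ f$ yields
\[
\expect_{(X,X')\sim p_\s}[\mathcal{L}_{\s,\ell}(f(X))] = \expect_{(X,X')\sim p_\s}[\mathcal{L}_{\s,\ell}(f(X'))].
\]

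Given this equality, for any $\alpha\in[0,1]$ the two expressions
\[
\alpha\,\expect[\mathcal{L}_{\s,\ell}(f(X))] + (1-\alpha)\,\expect[\mathcal{L}_{\s,\ell}(f(X'))]
\quad\text{and}\quad
\tfrac{1}{2}\expect[\mathcal{L}_{\s,\ell}(f(X))] + \tfrac{1}{2}\expect[\mathcal{L}_{\s,\ell}(f(X'))]
\]
both collapse to the common value $\expect[\mathcal{L}_{\s,\ell}(f(X))]$, and multiplying through by $\pi_\s$ gives the claim. There is no real obstacle here: the statement is a direct linearity-plus-symmetry argument, and the only thing to be careful about is pointing the reader to \eqref{eq:pairwise-similarity-conditional} so that the symmetry of $p_\s$ is explicit rather than assumed.
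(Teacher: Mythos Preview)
Your argument is correct and essentially the same as the paper's. The paper routes the equality through the common marginal $\tilde{p}_\s$ (via Lemma~\ref{lem:pointwise-similar-conditional}), writing $\expect_{(X,X')\sim p_\s}[\mathcal{L}_{\s,\ell}(f(X))]=\expect_{X\sim\tilde{p}_\s}[\mathcal{L}_{\s,\ell}(f(X))]=\expect_{(X,X')\sim p_\s}[\mathcal{L}_{\s,\ell}(f(X'))]$, whereas you obtain the same identity directly from the exchange symmetry of $p_\s$; both are one-line linearity-plus-symmetry arguments.
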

A proof is given in Appendix~\ref{sec:proof-of-pairing-fact}.
By Lemma~\ref{lem:pairing-fact},
\begin{align}
  \frac{\pi_\s}{n_\s}\sum_{i=1}^{n_\s}\left\{\alpha\mathcal{L}_{\s,\ell}(f(\bm{x}_{\s,i})) + (1-\alpha)\mathcal{L}_{\s,\ell}(f(\bm{x}_{\s,i}')) \right\}
  \label{eq:another-s-risk}
\end{align}
is also an unbiased estimator of Eq.~\eqref{eq:s-risk}.
Then, a natural question arises: \textit{is the risk estimator~\eqref{eq:su-estimator} best among all $\alpha$?}
We answer this question by the following theorem.
\begin{theorem}
  \label{thm:minimum-variance-pairing}
  The estimator
  \begin{align}
    \frac{\pi_\s}{n_\s} \sum_{i=1}^{n_\s} \frac{\mathcal{L}_{\s,\ell}(f(\bm{x}_{\s,i})) + \mathcal{L}_{\s,\ell}(f(\bm{x}_{\s,i}'))}{2}
    \label{eq:minimum-variance-s-estimator}
  \end{align}
  has the minimum variance among estimators in the form Eq.~\eqref{eq:another-s-risk} with respect to $\alpha \in [0,1]$.
\end{theorem}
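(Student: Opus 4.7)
The plan is to write out the variance of the estimator in Eq.~\eqref{eq:another-s-risk} as an explicit quadratic function of $\alpha$, then minimize it directly. Since the $n_\s$ pairs $(\bm{x}_{\s,i}, \bm{x}_{\s,i}')$ are i.i.d.\ draws from $p_\s$, the variance factors as
\begin{align*}
\frac{\pi_\s^2}{n_\s}\,\mathrm{Var}_{(X,X')\sim p_\s}\!\left[\alpha\mathcal{L}_{\s,\ell}(f(X)) + (1-\alpha)\mathcal{L}_{\s,\ell}(f(X'))\right],
\end{align*}
so the $n_\s$ and $\pi_\s^2$ factors can be dropped and only the per-pair variance needs to be analyzed.

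The key structural fact I would use is the marginal symmetry of $p_\s$: by Lemma~\ref{lem:pointwise-similar-conditional}, both $X$ and $X'$ individually follow $\tilde{p}_\s$. Setting $L \triangleq \mathcal{L}_{\s,\ell}(f(X))$ and $L' \triangleq \mathcal{L}_{\s,\ell}(f(X'))$, this gives $\mathrm{Var}(L) = \mathrm{Var}(L') \eqqcolon \sigma^2$. Writing $c \triangleq \mathrm{Cov}(L, L')$, the per-pair variance becomes
\begin{align*}
V(\alpha) = \sigma^2\bigl(\alpha^2 + (1-\alpha)^2\bigr) + 2c\,\alpha(1-\alpha).
\end{align*}

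Differentiating in $\alpha$ yields $V'(\alpha) = 2(2\alpha - 1)(\sigma^2 - c)$, and $V''(\alpha) = 4(\sigma^2 - c)$. Thus, provided $\sigma^2 \ge c$, the function $V$ is convex in $\alpha$ and minimized at $\alpha = 1/2$, which recovers Eq.~\eqref{eq:minimum-variance-s-estimator}. The inequality $\sigma^2 \ge c$ follows immediately from $\mathrm{Var}(L - L') = 2(\sigma^2 - c) \ge 0$ (equivalently, from the Cauchy--Schwarz bound $|c| \le \sigma^2$); in the degenerate case $\sigma^2 = c$, the function $V$ is constant and $\alpha = 1/2$ remains a (non-unique) minimizer, which still matches the claim.

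I do not anticipate a serious obstacle here: the result is essentially a one-parameter quadratic minimization problem, and all the probabilistic content is encapsulated in the marginal symmetry already provided by Lemma~\ref{lem:pointwise-similar-conditional}. The only point to be careful about is to state explicitly that the sign of $V''(\alpha)$ is non-negative so that the stationary point $\alpha=1/2$ is indeed a minimum rather than a maximum, and to handle the equality case $\sigma^2 = c$ cleanly.
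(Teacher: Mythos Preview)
Your proposal is correct and follows the same overall strategy as the paper: write the variance of the $\alpha$-weighted estimator as a quadratic in $\alpha$ and locate the minimizer. One point where your treatment is in fact more careful than the paper's own argument: you explicitly retain the within-pair covariance $c=\mathrm{Cov}(L,L')$, whereas the paper's appendix factors the cross term as $\mathbb{E}_{p_\s}[LL']=\tilde{\mu}_1^2$, i.e., it implicitly treats $X$ and $X'$ as independent under $p_\s$. Since $p_\s$ is a nontrivial mixture of two product distributions, this independence generally fails; your route via $\mathrm{Var}(L-L')=2(\sigma^2-c)\ge 0$ closes that gap cleanly and still delivers the minimizer $\alpha=1/2$, including the degenerate case $\sigma^2=c$.
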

A proof is given in Appendix~\ref{sec:proof-of-minimum-variance}.

Thus, the variance minimality (with respect to $\alpha$ in Eq.~\eqref{eq:another-s-risk}) of the risk estimator~\eqref{eq:su-estimator} is guaranteed by Theorem~\ref{thm:minimum-variance-pairing}.
We use this risk estimator in the following sections.

\subsection{Practical Implementation}

Here, we investigate the objective function when the linear-in-parameter model $f(\bm{x}) = \bm{w}^\top \bm{\phi}(\bm{x}) + w_0$ is employed as a classifier,
where $\bm{w} \in \mathbb{R}^d$ and $w_0 \in \mathbb{R}$ are parameters and $\bm{\phi}: \mathbb{R}^d \rightarrow \mathbb{R}^b$ is basis functions.
In general, the bias parameter $w_0$ can be ignored~\footnote{
  Let $\tilde{\bm{\phi}}(\bm{x}) \triangleq [\bm{\phi}(\bm{x})^\top\; 1]^\top$ and $\tilde{\bm{w}} \triangleq [\bm{w}^\top\; w_0]^\top$
  then $\bm{w}^\top\bm{\phi}(\bm{x}) + w_0 = \tilde{\bm{w}}^\top \tilde{\bm{\phi}}(\bm{x})$.
}.
We formulate SU classification as the following empirical risk minimization problem using Eq.~\eqref{eq:su-estimator} together with the $\ell_2$ regularization:
\begin{align}
  \hat{\bm{w}} = \min_{\bm{w}} \hat{J}_\ell(\bm{w})
  \label{eq:optimization-problem}
  ,
\end{align}
\vs{-10pt}
where
\noallowdisplaybreaks
\begin{align}
  \hat{J}_{\ell}(\bm{w})
  &\triangleq \frac{\pi_\s}{2n_\s} \sum_{i=1}^{2n_\s} \mathcal{L}_{\s,\ell}(\bm{w}^\top \bm{\phi}(\tilde{\bm{x}}_{\s,i}))
  \nonumber \\
  & \phantom{(\pi_+} + \frac{1}{n_\u} \sum_{i=1}^{n_\u} \mathcal{L}_{\u,\ell}(\bm{w}^\top \bm{\phi}(\bm{x}_{\u,i}))
  + \frac{\lambda}{2}\|\bm{w}\|^2
  \label{eq:su-objective-function}
  ,
\end{align}
\allowdisplaybreaks
and $\lambda > 0$ is the regularization parameter.
We need the class-prior $\pi_+$ (included in $\pi_\s$) to solve this optimization problem.
We discuss how to estimate $\pi_+$ in Section~\ref{sec:class-prior-estimation}.

Next, we will investigate appropriate choices of the loss function $\ell$.
From now on, we focus on \emph{margin loss functions}~\cite{Mohri:2012}:
$\ell$ is said to be a margin loss function if there exists $\psi: \mathbb{R} \rightarrow \mathbb{R}_+$ such that $\ell(z,t) = \psi(tz)$.

In general, our objective function~\eqref{eq:su-objective-function} is non-convex even if a convex loss function is used for $\ell$~\footnote{
  In general, $\mathcal{L}_{\u,\ell}$ is non-convex because either $-\frac{\pi_-}{2\pi_+ - 1}\ell(\cdot,+1)$ or $\frac{\pi_+}{2\pi_+ - 1}\ell(\cdot,-1)$ is convex and the other is concave.
  $\mathcal{L}_{\s,\ell}$ is not always convex even if $\ell$ is convex, either.
}.
However, the next theorem, inspired by \citet{Natarajan:2013} and \citet{Plessis:2015}, states that a certain loss function will result in a convex objective function.
\begin{theorem}
  \label{thm:convex-su}
  If the loss function $\ell(z,t)$ is a convex margin loss,
  twice differentiable in $z$ almost everywhere (for every fixed $t \in \{\pm 1\}$),
  and satisfies the condition
  \vs{-3pt}
  \begin{align}
    \ell(z,+1) - \ell(z,-1) = -z \nonumber
    ,
  \end{align}
  \vs{-5pt}
  then $\hat{J}_\ell(\bm{w})$ is convex.
\end{theorem}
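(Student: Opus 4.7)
The plan is to show that under the hypothesis, $\mathcal{L}_{\s,\ell}(z)$ becomes affine in $z$ and $\mathcal{L}_{\u,\ell}(z)$ becomes convex in $z$. Once these are established, convexity of $\hat{J}_\ell(\bm{w})$ follows from standard rules: composing a convex function of $z$ with the affine map $\bm{w} \mapsto \bm{w}^\top \bm{\phi}(\bm{x})$ yields a convex function of $\bm{w}$, finite sums with nonnegative weights preserve convexity, and the regularizer $\tfrac{\lambda}{2}\|\bm{w}\|^2$ is strongly convex.

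The first piece is immediate: substituting $\ell(z,+1) - \ell(z,-1) = -z$ directly into the definition yields $\mathcal{L}_{\s,\ell}(z) = -z/(2\pi_+-1)$, which is affine in $z$ (recall $\pi_+\neq 1/2$ is standing). For the second piece I would eliminate $\ell(z,+1)$ in the definition of $\mathcal{L}_{\u,\ell}$ via $\ell(z,+1) = \ell(z,-1) - z$ and collect terms. Using $\pi_+ - \pi_- = 2\pi_+ - 1$, the coefficient in front of $\ell(z,-1)$ collapses to $1$, giving
\[
\mathcal{L}_{\u,\ell}(z) = \ell(z,-1) + \frac{\pi_-}{2\pi_+-1}\,z.
\]
Since $\ell(\cdot,-1)$ is convex by hypothesis and the remaining term is linear in $z$, $\mathcal{L}_{\u,\ell}$ is convex in $z$. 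Equivalently, invoking the twice-differentiability clause, differentiating the identity twice gives $\partial_z^2\ell(z,+1) = \partial_z^2\ell(z,-1)$ almost everywhere, whence $\partial_z^2 \mathcal{L}_{\u,\ell}(z) = \partial_z^2 \ell(z,-1) \geq 0$ a.e., reaching the same conclusion through the second-derivative test.

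Combining these observations, every summand in Eq.~\eqref{eq:su-objective-function} other than the regularizer is a convex function of $z$ evaluated at $z = \bm{w}^\top \bm{\phi}(\cdot)$, hence convex in $\bm{w}$; adding the strongly convex $\ell_2$ term preserves convexity. The only non-routine step is spotting that the stated identity is precisely what is needed to cancel the ``difference-of-convex-functions'' structure that makes $\mathcal{L}_{\u,\ell}$ nonconvex in general (as noted in the footnote preceding the theorem). Once this cancellation is identified, the rest is bookkeeping, and I do not expect any serious obstacle.
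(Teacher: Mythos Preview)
Your argument is correct. The key algebraic step---substituting $\ell(z,+1)=\ell(z,-1)-z$ into $\mathcal{L}_{\u,\ell}$ and observing that the coefficient of $\ell(z,-1)$ collapses to $(\pi_+-\pi_-)/(2\pi_+-1)=1$---is exactly what makes $\mathcal{L}_{\u,\ell}$ manifestly convex, and from there the composition-with-affine and sum-with-regularizer steps are routine.

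The paper takes a different route: it expands $\hat{J}_\ell(\bm{w})$ (already using $\tilde{\ell}(z)=-z$ to linearize the S-term), differentiates twice in $\bm{w}$, and computes the full Hessian. It then invokes the margin-loss representation $\ell(z,t)=\psi(tz)$ to argue that $\partial_z^2\ell(z,+1)=\partial_z^2\ell(z,-1)=\psi''$ (since $t^2=1$), so that the Hessian reduces to $\lambda I+\tfrac{1}{n_\u}\psi''\sum_i\bm{\phi}(\bm{x}_{\u,i})\bm{\phi}(\bm{x}_{\u,i})^\top\succeq 0$. Your approach reaches the same ``equal second derivatives'' fact by differentiating the linear-odd identity $\ell(z,+1)-\ell(z,-1)=-z$ twice, but your primary argument does not even need that: the algebraic cancellation works at the level of convex functions, so you never have to compute a Hessian, and neither the margin-loss form nor the twice-differentiability hypothesis is actually used. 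In that sense your proof is more elementary and slightly more general; the paper's Hessian computation, on the other hand, makes the role of $\psi''\ge 0$ and the regularizer explicit in the curvature of $\hat{J}_\ell$.
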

A proof of Theorem~\ref{thm:convex-su} is given in Appendix~\ref{sec:proof-of-convex-su}.

\begin{table}[t]
  \centering
  \caption{A selected list of margin loss functions satisfying the conditions in Theorem~\ref{thm:convex-su}.}
  \label{tab:loss-functions}
  \begin{tabular}{cc} \hline
    Loss name         & $\psi(m)$ \\ \hline
    Squared loss    & $\frac{1}{4}(m - 1)^2$ \\
    Logistic loss     & $\log(1 + \exp(-m))$ \\
    Double hinge loss & $\max(-m, \max(0, \frac{1}{2} - \frac{1}{2}m))$ \\
    \hline
  \end{tabular}
  \reducespace
\end{table}

Examples of margin loss functions satisfying the conditions in Theorem~\ref{thm:convex-su} are shown in Table~\ref{tab:loss-functions}
(also illustrated in Figure~\ref{fig:loss-function}).
Below, as special cases, we show the objective functions for the squared and the double-hinge losses.
The detailed derivations are given in Appendix~\ref{sec:optimization-problem}.

\begin{figure}[t]
  \centering
  \includegraphics[width=\columnwidth]{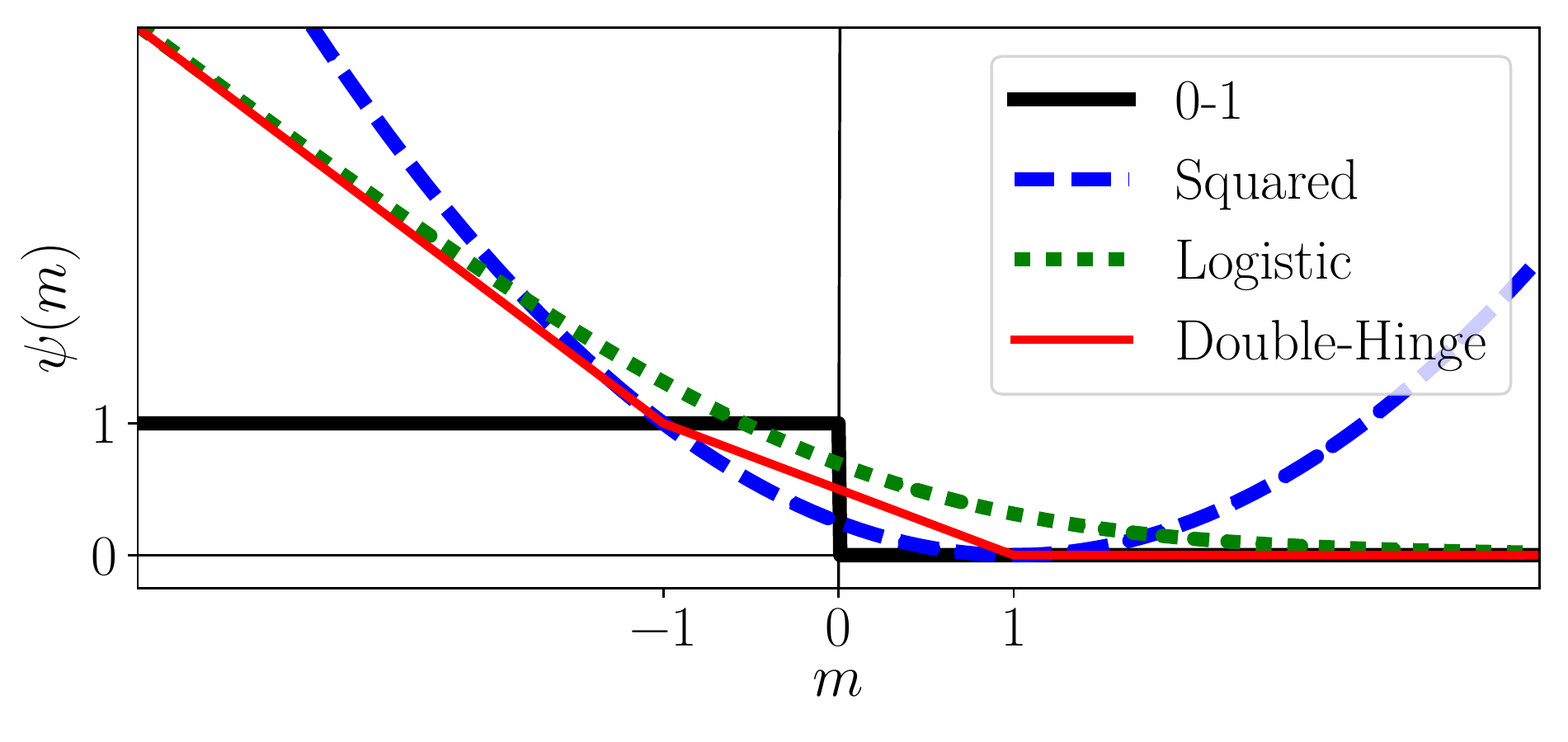}
  \caption{Comparison of loss functions.}
  \label{fig:loss-function}
\end{figure}

\textbf{Squared Loss: }
The squared loss is $\ell_\mathrm{SQ}(z,t) = \frac{1}{4}(tz - 1)^2$.
Substituting $\ell_\mathrm{SQ}$ into Eq.~\eqref{eq:su-objective-function}, the objective function is
\noallowdisplaybreaks
\begin{align*}
  \hat{J}_\mathrm{SQ}(\bm{w})
  &= \bm{w}^\top\left(\frac{1}{4n_\u}X_\u^\top X_\u + \frac{\lambda}{2}I\right)\bm{w} \\
  & \quad + \frac{1}{2\pi_+ - 1}\left(-\frac{\pi_\s}{2n_\s}\bm{1}^\top X_\s + \frac{1}{2n_\u}\bm{1}^\top X_\u\right)\bm{w}
  ,
\end{align*}
\allowdisplaybreaks
where $\bm{1}$ is the vector whose elements are all ones, $I$ is the identity matrix,
$X_\s \triangleq [\bm{\phi}(\tilde{\bm{x}}_{\s,1})\;\cdots\;\bm{\phi}(\tilde{\bm{x}}_{\s,2n_{\s}})]^\top$,
and $X_\u \triangleq [\bm{\phi}(\bm{x}_{\u,1})\;\cdots\;\bm{\phi}(\bm{x}_{\u,n_\u})]^\top$.
The minimizer of this objective function can be obtained analytically as
\noallowdisplaybreaks
\begin{align}
  \bm{w}
  &=
  \frac{n_\u}{2\pi_+ - 1} \nonumber \\
  &\cdot\left(X_\u^\top X_\u + 2\lambda n_\u I\right)^{-1}
  \left(\frac{\pi_\s}{n_\s}X_\s^\top\bm{1} - \frac{1}{n_\u}X_\u^\top\bm{1}\right)
  \nonumber
  .
\end{align}
\allowdisplaybreaks
Thus the optimization problem can be easily implemented and solved highly efficiently if the number of basis functions is not so large.

\textbf{Double-Hinge Loss:}
Since the hinge loss $\ell_\mathrm{H}(z,t)=\max(0,1-tz)$ does not satisfy the conditions in Theorem~\ref{thm:convex-su},
the double-hinge loss $\ell_\mathrm{DH}(z,t)=\max(-tz,\max(0,\frac{1}{2}-\frac{1}{2}tz))$ is proposed by \citet{Plessis:2015} as an alternative.
Substituting $\ell_\mathrm{DH}$ into Eq.~\eqref{eq:su-objective-function}, we can reformulate the optimization problem as follows:
\noallowdisplaybreaks
\begin{align*}
  \min_{\bm{w},\bm{\xi},\bm{\eta}} &
  -\frac{\pi_\s}{2n_\s(2\pi_+ - 1)} \bm{1}^\top X_\s\bm{w}
  -\frac{\pi_-}{n_\s(2\pi_+ - 1)} \bm{1}^\top \bm{\xi} \\
  &+\frac{\pi_+}{n_\u(2\pi_+ - 1)} \bm{1}^\top \bm{\eta}
  +\frac{\lambda}{2} \bm{w}^\top\bm{w} \\
  \text{s.t.} \qquad
  & \bm{\xi} \ge \bm{0}, \quad
    \bm{\xi} \ge \frac{1}{2}\bm{1} + \frac{1}{2}X_\u\bm{w}, \quad
    \bm{\xi} \ge X_\u\bm{w}, \\
  & \bm{\eta} \ge \bm{0}, \quad
    \bm{\eta} \ge \frac{1}{2}\bm{1} - \frac{1}{2}X_\u\bm{w}, \quad
    \bm{\eta} \ge -X_\u\bm{w},
\end{align*}
\allowdisplaybreaks
where $\ge$ for vectors denotes the element-wise inequality.
This optimization problem is a quadratic program (QP).
The transformation into the standard QP form is given in Appendix~\ref{sec:optimization-problem}.

\section{Relation between Class-Prior and SU Classification}

In Section~\ref{sec:su-learning}, we assume that the class-prior $\pi_+$ is given in advance.
In this section, we first clarify the relation between behaviors of the proposed method and $\pi_+$,
then we propose an algorithm to estimate $\pi_+$ in case we do not have $\pi_+$ in advance.

\subsection{Class-Prior-Dependent Behaviors of Proposed Method}
\label{sec:behaviors-of-su-learning}

We discuss the following three different cases on prior knowledge of $\pi_+$ (summarized in Table~\ref{tab:behaviors-of-su-learning}).

\begin{table}[t]
  \caption{
    Behaviors of the proposed method on class identification and class separation, depending on prior knowledge of the class-prior.
  }
  \label{tab:behaviors-of-su-learning}
  \centering
  \begin{tabular}{cc|cc} \hline
    Case & Prior knowledge & Identification & Separation \\ \hline
    1 & exact $\pi_+$ & \cmark & \cmark \\
    2 & nothing & \xmark & \cmark \\
    3 & $\mathrm{sign}(\pi_+ - \pi_-)$ & \cmark & \cmark \\
    \hline
  \end{tabular}
  \reducespace
\end{table}

\textbf{(Case 1) The class-prior is given:}
In this case, we can directly solve the optimization problem~\eqref{eq:optimization-problem}.
The solution does not only separate data but also \emph{identifies classes}, i.e., determine which class is positive.

\textbf{(Case 2) No prior knowledge on the class-prior is given:}
In this case, we need to estimate $\pi_+$ before solving~\eqref{eq:optimization-problem}.
If we assume $\pi_+ > \pi_-$, the estimation method in Section~\ref{sec:class-prior-estimation} gives an estimator of $\pi_+$.
Thus, we can regard the larger cluster as positive class and solve the optimization problem~\eqref{eq:optimization-problem}.
This time the solution just separates data because we have no prior information for class identifiability.

\textbf{(Case 3) Magnitude relation of the class-prior is given:}
Finally, consider the case where we know \emph{which class has a larger class-prior}.
In this case, we also need to estimate $\pi_+$, but surprisingly, we can identify classes.
For example, if the negative class has a larger class-prior, first we estimate the class-prior (let $\hat{\pi}$ be an estimated value).
Since Algorithm~\ref{alg:prior-estimation} given in Sec.~\ref{sec:class-prior-estimation} always gives an estimate of the class-prior of the larger class,
the positive class-prior is given as $\pi_+ = 1 - \hat{\pi}$.
After that, it reduces to Case 1.

\emph{Remark:} In all of the three cases above, our proposed method gives an \emph{inductive model},
which is applicable to out-of-sample prediction without any modification.
On the other hand, most of the unsupervised/semi-supervised clustering methods are designed for in-sample prediction,
which can only give predictions for data at hand given in advance.

\subsection{Class-Prior Estimation from Pairwise Similarity and Unlabeled Data}
\label{sec:class-prior-estimation}

We propose a class-prior estimation algorithm only from SU data.
First, let us begin with connecting the pairwise marginal distribution $p(\bm{x},\bm{x}')$ and $p_\s(\bm{x},\bm{x}')$
when two examples $\bm{x}$ and $\bm{x}'$ are drawn independently:
\begin{align}
  p&(\bm{x},\bm{x}')
  = p(\bm{x})p(\bm{x}')
  \nonumber \\
  &= \pi_+^2p_+(\bm{x})p_+(\bm{x}') + \pi_-^2p_-(\bm{x})p_-(\bm{x}') \nonumber \\
  & \qquad \pi_+\pi_-p_+(\bm{x})p_-(\bm{x}') + \pi_+\pi_-p_-(\bm{x})p_+(\bm{x}')
  \nonumber \\
  &= \pi_\s p_\s(\bm{x},\bm{x}') + \pi_\d p_\d(\bm{x},\bm{x}'),
  \label{eq:pairwise-marginal}
\end{align}
where Eq.~\eqref{eq:pairwise-similarity-conditional} was used to derive the last line,
$\pi_\d \triangleq 2\pi_+\pi_-$,
and
\noallowdisplaybreaks
\begin{align}
  p&_\mathrm{D}(\bm{x},\bm{x}') \nonumber
  \\
  &= p(\bm{x},\bm{x}' | (y=+1 \wedge y'=-1) \vee (y=-1 \wedge y'=+1)) \nonumber
  \\
  &= \frac{\pi_+\pi_-p_+(\bm{x})p_-(\bm{x}') + \pi_+\pi_-p_-(\bm{x})p_+(\bm{x}')}{2\pi_+\pi_-}
  \label{eq:pairwise-dissimilarity-conditional}
  .
\end{align}
\allowdisplaybreaks
Marginalizing out $\bm{x}'$ in Eq.~\eqref{eq:pairwise-marginal} as Lemma~\ref{lem:pointwise-similar-conditional}, we obtain
\vs{-5pt}
\begin{align}
  p(\bm{x}) = \pi_\s \tilde{p}_\s(\bm{x}) + \pi_\mathrm{D}\tilde{p}_\mathrm{D}(\bm{x}), \nonumber
\end{align}
where $\tilde{p}_\s$ is defined in Eq.~\eqref{eq:pointwise-similar-conditional}
and $\tilde{p}_\mathrm{D}(\bm{x}) \triangleq (p_+(\bm{x}) + p_-(\bm{x}))/2$.
Since we have samples $\mathcal{D}_\u$ and $\tilde{\mathcal{D}}_\s$ drawn from $p$ and $\tilde{p}_\s$ respectively (see Eqs.~\eqref{eq:marginal} and \eqref{eq:pointwise-similar-conditional}),
we can estimate $\pi_\s$ by mixture proportion estimation\footnote{
  Given a distribution $F$ which is a convex combination of distributions $G$ and $H$ such that $F = (1-\kappa)G + \kappa H$,
  the mixture proportion estimation problem is to estimate $\kappa\in[0,1]$ only with samples from $F$ and $H$.
  In our case, $F$, $H$, and $\kappa$ correspond to $p(\bm{x})$, $\tilde{p}_\s(\bm{x})$, and $\pi_\s$, respectively.
  See, e.g., \citet{Scott:2015}.
} methods \cite{Scott:2015,Ramaswamy:2016,Plessis:2017}.

After estimating $\pi_\s$, we can calculate $\pi_+$.
By the discussion in Section~\ref{sec:behaviors-of-su-learning}, we assume $\pi_+ > \pi_-$.
Then, following
$2\pi_\s - 1 = \pi_\s - \pi_\d = (\pi_+ - \pi_-)^2 = (2\pi_+ - 1)^2 \ge 0$,
we obtain $\pi_+ = \frac{\sqrt{2\pi_\s - 1} + 1}{2}$.
We summarize a wrapper of mixture proportion estimation in Algorithm~\ref{alg:prior-estimation}.

\begin{algorithm}[t]
  \caption{Prior estimation from SU data. $\texttt{CPE}$ is a class-prior estimation algorithm.}
  \label{alg:prior-estimation}
  \begin{algorithmic}
    \REQUIRE $\mathcal{D}_\u=\{\bm{x}_{\u,i}\}_{i=1}^{n_\u}$ (samples from $p$), $\tilde{\mathcal{D}}_\s = \{\tilde{\bm{x}}_{\s,i}\}_{i=1}^{2n_\s}$ (samples from $\tilde{p}_\s$)
    \ENSURE class-prior $\pi_+$
    \STATE $\pi_\s \leftarrow \texttt{CPE}(\mathcal{D}_\u, \tilde{\mathcal{D}}_\s)$
    \STATE $\pi_+ \leftarrow \frac{\sqrt{2\pi_\s - 1} + 1}{2}$
  \end{algorithmic}
\end{algorithm}

\section{Estimation Error Bound}
\label{sec:theory}

In this section, we establish an estimation error bound for the proposed method.
Hereafter, let $\mathcal{F} \subset \mathbb{R}^\mathcal{X}$ be a function class of a specified model.

\begin{definition}
  Let $n$ be a positive integer,
  $Z_1,\dots,Z_n$ be i.i.d.~random variables drawn from a probability distribution with density $\mu$,
  $\mathcal{H} = \{h: \mathcal{Z} \rightarrow \mathbb{R}\}$ be a class of measurable functions,
  and $\bm{\sigma}=(\sigma_1,\dots,\sigma_n)$ be Rademacher variables,
  i.e., random variables taking $+1$ and $-1$ with even probabilities.
  Then (expected) Rademacher complexity of $\mathcal{H}$ is defined as
  \begin{align*}
    \mathfrak{R}(\mathcal{H};n,\mu) \triangleq \expect_{Z_1,\dots,Z_n\sim\mu} \expect_{\bm{\sigma}} \left[ \sup_{h \in \mathcal{H}} \frac{1}{n}\sum_{i=1}^n \sigma_i h(Z_i) \right]
    .
  \end{align*}
\end{definition}

In this section, we assume for any probability density $\mu$, our model class $\mathcal{F}$ satisfies
\begin{align}
  \mathfrak{R}(\mathcal{F};n,\mu) \le \frac{C_\mathcal{F}}{\sqrt{n}} \label{eq:model-assumption}
\end{align}
for some constant $C_\mathcal{F} > 0$.
This assumption is reasonable because many model classes such as the linear-in-parameter model class
$\mathcal{F} = \{f(\bm{x}) = \bm{w}^\top\bm{\phi}(\bm{x}) \mid \|\bm{w}\| \le C_{\bm{w}}, \|\bm{\phi}\|_\infty \le C_{\bm{\phi}}\}$
($C_{\bm{w}}$ and $C_{\bm{\phi}}$ are positive constants) satisfy it~\cite{Mohri:2012}.

Subsequently, let
$f^* \triangleq \argmin_{f \in \mathcal{F}} R(f)$
be the true risk minimizer, and
$\hat{f} \triangleq \argmin_{f \in \mathcal{F}} \hat{R}_{\mathrm{SU},\ell}(f)$
be the empirical risk minimizer.

\begin{theorem}
  \label{thm:estimation-error-bound}
  Assume the loss function $\ell$ is $\rho$-Lipschitz with respect to the first argument ($0 < \rho < \infty$),
  and all functions in the model class $\mathcal{F}$ are bounded, i.e.,
  there exists a constant $C_b$ such that $\|f\|_\infty \le C_b$ for any $f \in \mathcal{F}$.
  Let $C_\ell \triangleq \sup_{t\in\{\pm 1\}} \ell(C_b,t)$.
  For any $\delta > 0$, with probability at least $1 - \delta$,
  \begin{align}
    R(\hat{f}) - R(f^*) \le C_{\mathcal{F},\ell,\delta}\left(
      \frac{2\pi_\s}{\sqrt{2n_\s}} + \frac{1}{\sqrt{n_\u}}
    \right)
    ,
    \label{eq:bound}
  \end{align}
  \vs{-3pt}
  where
  \vs{-5pt}
  \begin{align*}
    C_{\mathcal{F},\ell,\delta} &= \frac{4\rho C_\mathcal{F} + \sqrt{2C_\ell^2\log\frac{4}{\delta}}}{|2\pi_+ - 1|} .
  \end{align*}
\end{theorem}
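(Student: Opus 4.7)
The plan is the standard estimation-error analysis via uniform deviation, symmetrization, Talagrand's contraction lemma, and McDiarmid's inequality. The starting point is the decomposition
$R(\hat f) - R(f^*) = R_{\mathrm{SU},\ell}(\hat f) - R_{\mathrm{SU},\ell}(f^*) \le 2\sup_{f\in\mathcal{F}}|R_{\mathrm{SU},\ell}(f) - \hat R_{\mathrm{SU},\ell}(f)|$,
which uses the unbiasedness from Theorem~\ref{thm:su-risk} together with the optimality of $\hat f$ under $\hat R_{\mathrm{SU},\ell}$. The task therefore reduces to controlling the uniform deviation $\Phi := \sup_{f\in\mathcal{F}}|R_{\mathrm{SU},\ell}(f) - \hat R_{\mathrm{SU},\ell}(f)|$ with high probability.

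Before proceeding, I would record the Lipschitz constants and sup-norm bounds of the two reweighted losses. Since $\mathcal{L}_{\s,\ell}(z)=(\ell(z,+1)-\ell(z,-1))/(2\pi_+-1)$ is $\tfrac{2\rho}{|2\pi_+-1|}$-Lipschitz and bounded by $\tfrac{2C_\ell}{|2\pi_+-1|}$ on the range of $\mathcal{F}$, while $\mathcal{L}_{\u,\ell}$ is $\tfrac{\rho}{|2\pi_+-1|}$-Lipschitz and bounded by $\tfrac{C_\ell}{|2\pi_+-1|}$, every appearance of the $|2\pi_+-1|^{-1}$ factor in the final bound is traceable to these constants. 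Each follows from the triangle inequality combined with the $\rho$-Lipschitzness of $\ell$ in its first argument and the $C_b$-boundedness of $\mathcal{F}$.

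For the expectation, I would invoke standard symmetrization---treating the S-part as $2n_\s$ i.i.d.\ draws via Lemma~\ref{lem:pointwise-similar-conditional} rather than $n_\s$ pairs---followed by Talagrand's contraction lemma to peel off the two Lipschitz losses. Combined with assumption~\eqref{eq:model-assumption}, this yields
$\expect\Phi \le \tfrac{2\rho C_\mathcal{F}}{|2\pi_+-1|}\bigl(\tfrac{2\pi_\s}{\sqrt{2n_\s}}+\tfrac{1}{\sqrt{n_\u}}\bigr)$
after using the identity $\pi_\s\sqrt{2/n_\s}=2\pi_\s/\sqrt{2n_\s}$; doubling via the uniform-deviation step recovers the $4\rho C_\mathcal{F}$ prefactor in $C_{\mathcal{F},\ell,\delta}$. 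For concentration of $\Phi$ around its mean, I would apply McDiarmid's inequality to $\Phi$ viewed as a function of the $2n_\s+n_\u$ independent points. The per-coordinate bounded differences are $\tfrac{2\pi_\s C_\ell}{n_\s|2\pi_+-1|}$ for an S-sample and $\tfrac{2C_\ell}{n_\u|2\pi_+-1|}$ for a U-sample, so the sum of squared differences is proportional to $2\pi_\s^2/n_\s+1/n_\u$; applying $\sqrt{a+b}\le\sqrt a+\sqrt b$ delivers the same shape $\tfrac{2\pi_\s}{\sqrt{2n_\s}}+\tfrac{1}{\sqrt{n_\u}}$, while splitting the failure probability across the two tails contributes the $\log(4/\delta)$ factor.

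The main obstacle is not any single conceptual step but the bookkeeping: one must simultaneously track the $\pi_\s$ prefactor on the S-term, the two distinct Lipschitz constants of $\mathcal{L}_{\s,\ell}$ versus $\mathcal{L}_{\u,\ell}$, and the passage from $n_\s$ pairs to $2n_\s$ independent draws justified by Lemma~\ref{lem:pointwise-similar-conditional}, so that the Rademacher piece and the McDiarmid piece both package into the single factor $\tfrac{2\pi_\s}{\sqrt{2n_\s}}+\tfrac{1}{\sqrt{n_\u}}$ multiplied by $C_{\mathcal{F},\ell,\delta}$. Any stray factor of two or misplaced $\pi_\s$ would fail to reproduce the compact form~\eqref{eq:bound}.
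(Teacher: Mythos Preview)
Your plan is correct and very close to the paper's argument. The one organizational difference is that the paper does not apply McDiarmid jointly to all $2n_\s+n_\u$ points; instead it first splits the supremum via
\[
\sup_{f\in\mathcal F}\bigl|R_{\tilde\s\u,\ell}-\hat R_{\tilde\s\u,\ell}\bigr|
\le
\pi_\s\sup_{f\in\mathcal F}\bigl|R_{\tilde\s,\ell}-\hat R_{\tilde\s,\ell}\bigr|
+\sup_{f\in\mathcal F}\bigl|R_{\u,\ell}-\hat R_{\u,\ell}\bigr|
\]
and then applies the packaged uniform-deviation bound (Theorem~3.1 of \citet{Mohri:2012}, i.e., symmetrization and McDiarmid already combined) to each piece separately, with a union bound at confidence $1-\delta/2$ each. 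Within each piece it also decomposes $\mathcal L_{\s,\ell}$ and $\mathcal L_{\u,\ell}$ back into their $\ell(\cdot,+1)$ and $\ell(\cdot,-1)$ summands rather than using the combined Lipschitz constants you computed. This split-then-bound structure is precisely where the $\log(4/\delta)$ in $C_{\mathcal F,\ell,\delta}$ originates. A single joint McDiarmid as you propose would instead produce $\log(1/\delta)$ (one-sided) with prefactor $\sqrt{8C_\ell^2}$ rather than $\sqrt{2C_\ell^2}$, so your remark that $\log(4/\delta)$ comes from ``two tails'' of one McDiarmid application does not match how the constant actually falls out. Either route gives the same rate and the identical Rademacher contribution $4\rho C_{\mathcal F}$; only the concentration constant differs by a harmless factor.
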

A proof is given in Appendix~\ref{sec:proof-of-estimation-error-bound}.

Theorem~\ref{thm:estimation-error-bound} shows that if we have $\pi_+$ in advance, our proposed method is consistent,
i.e., $R(\hat{f}) \rightarrow R(f^*)$ as $n_\s \rightarrow \infty$ and $n_\u \rightarrow \infty$.
The convergence rate is $\mathcal{O}_p(1/\sqrt{n_\s} + 1/\sqrt{n_\u})$, where $\mathcal{O}_p$ denotes the order in probability.
This order is the optimal parametric rate for the empirical risk minimization without additional assumptions~\cite{Mendelson:2008}.

\section{Experiments}
\label{sec:experiments}

In this section, we empirically investigate the performance of class-prior estimation and the proposed method for SU classification.

\textbf{Datasets:}
Datasets are obtained from the {\it UCI Machine Learning Repository}~\cite{Lichman:2013}, the {\it LIBSVM}~\cite{CC01a}, and the {\it ELENA project}~\footnote{\url{https://www.elen.ucl.ac.be/neural-nets/Research/Projects/ELENA/elena.htm}}.
We randomly subsample the original datasets, to maintain that similar pairs consist of positive and negative pairs with the ratio of $\pi_+^2$ to $\pi_-^2$ (see Eq.~\eqref{eq:pairwise-similarity-conditional}), while the ratios of unlabeled and test data are $\pi_+$ to $\pi_-$ (see Eq.~\eqref{eq:marginal}).

\subsection{Class-Prior Estimation}

First, we study empirical performance of class-prior estimation.
We conduct experiments on benchmark datasets.
Different dataset sizes $\{200, 400, 800, 1600\}$ are tested, where half of the data are S pairs and the other half are U data.

In Figure~\ref{fig:prior-estimation}, KM1 and KM2 are plotted,
which are proposed by \citet{Ramaswamy:2016}.
We used them as \texttt{CPE} in Algorithm~\ref{alg:prior-estimation} \footnote{We used the author's implementations published in \url{http://web.eecs.umich.edu/~cscott/code/kernel_CPE.zip}.}.
Since $\pi_\s = \pi_+^2 + \pi_-^2 = 2(\pi_+ - \frac{1}{2})^2 + \frac{1}{2} \ge \frac{1}{2}$, we use additional heuristic to set $\lambda_{\mathrm{left}} = 2$ in Algorithm 1 of \citet{Ramaswamy:2016}.

\begin{figure}[t]
  \centering
  \includegraphics[width=0.48\columnwidth]{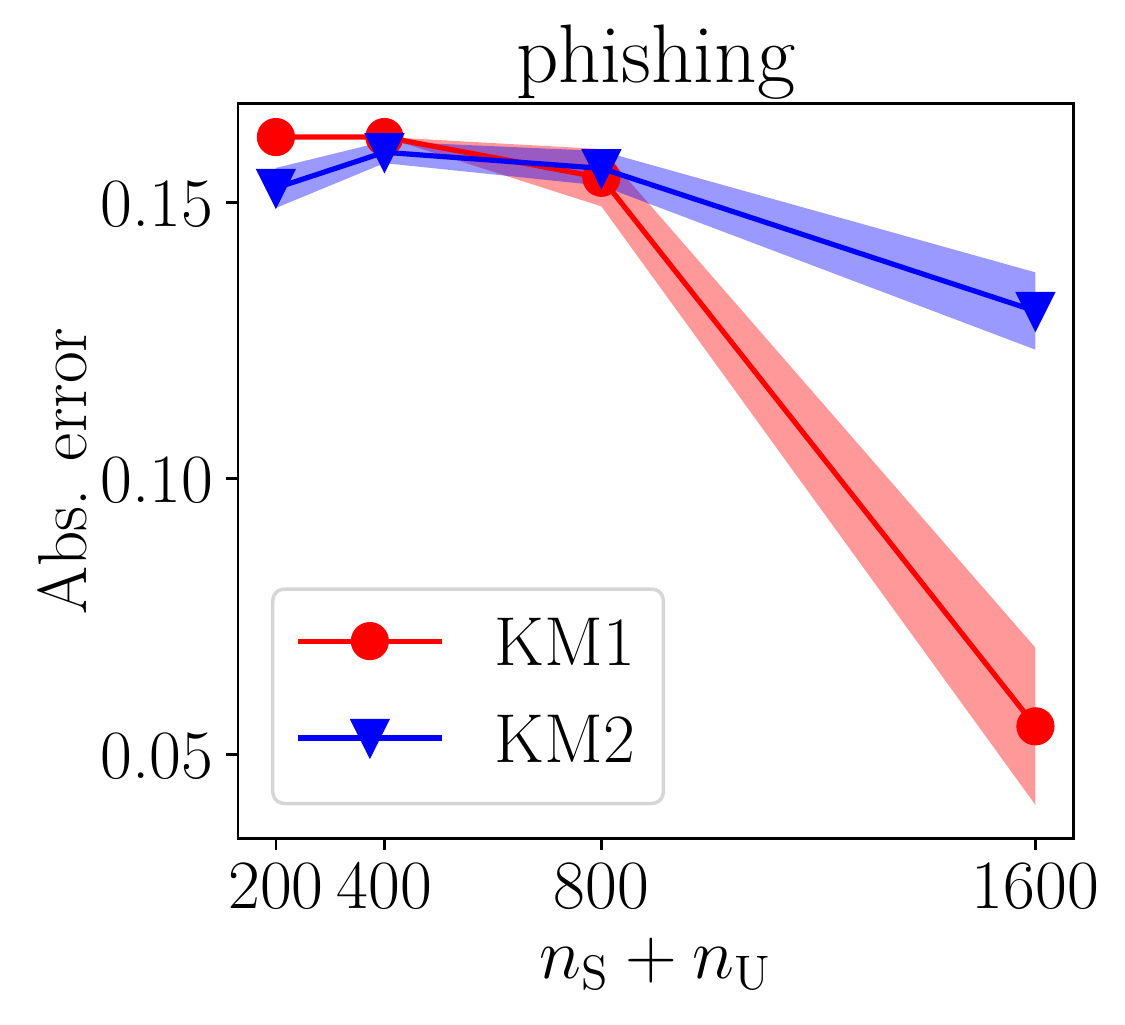}
  \includegraphics[width=0.48\columnwidth]{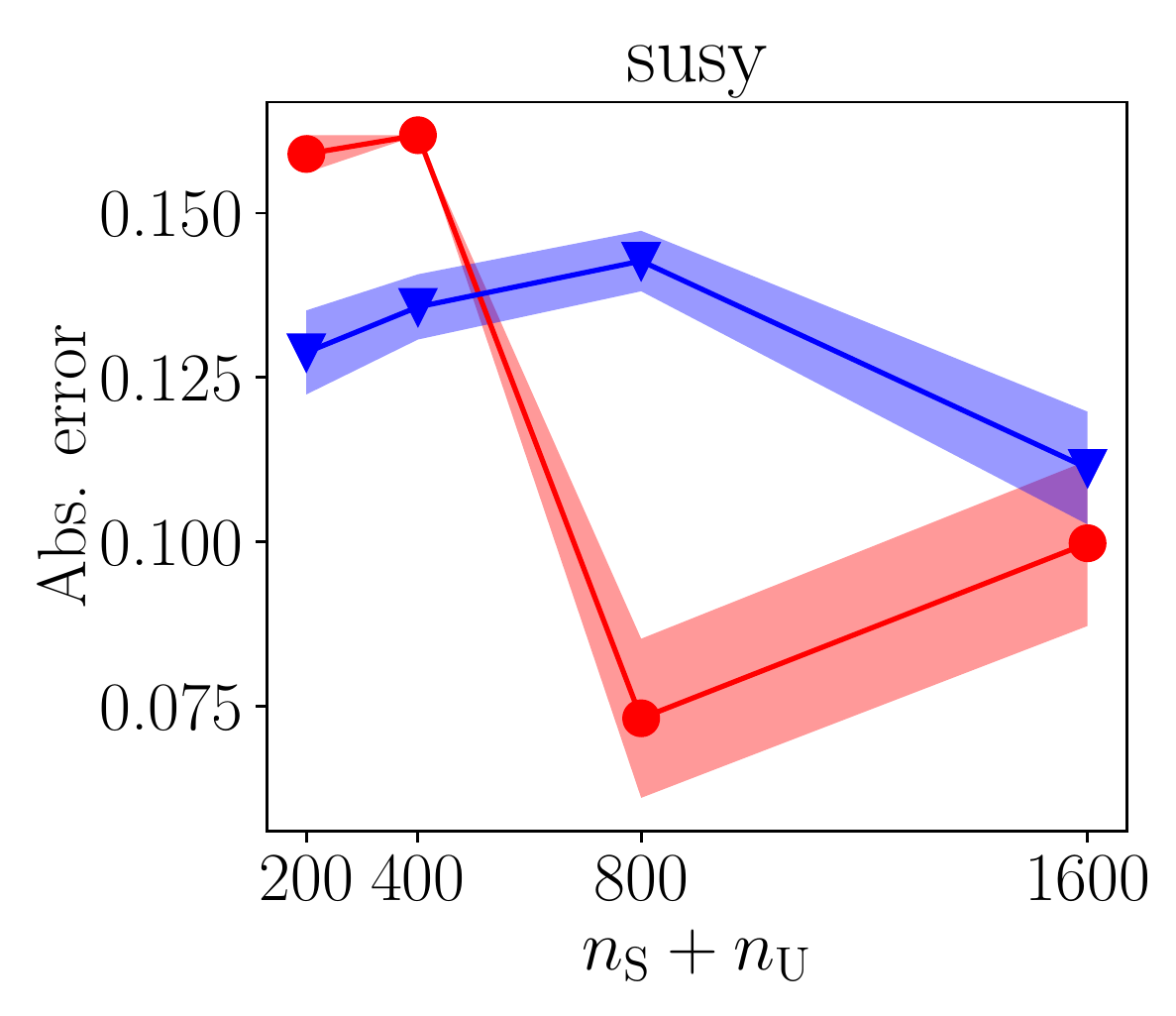}
  \caption{
    Estimation errors of the class-prior (absolute value of difference between true class-prior and estimated class-prior) from SU data over 100 trials are plotted in the vertical axes.
    For all experiments, true class-prior $\pi_+$ is set to $0.7$.
  }
  \label{fig:prior-estimation}
  \reducespace
\end{figure}

\subsection{Classification Complexity}
\label{sec:sample-complexity}

\begin{figure*}[t]
  \centering
  \begin{minipage}{0.24\hsize}
    \centering
    \includegraphics[width=\columnwidth]{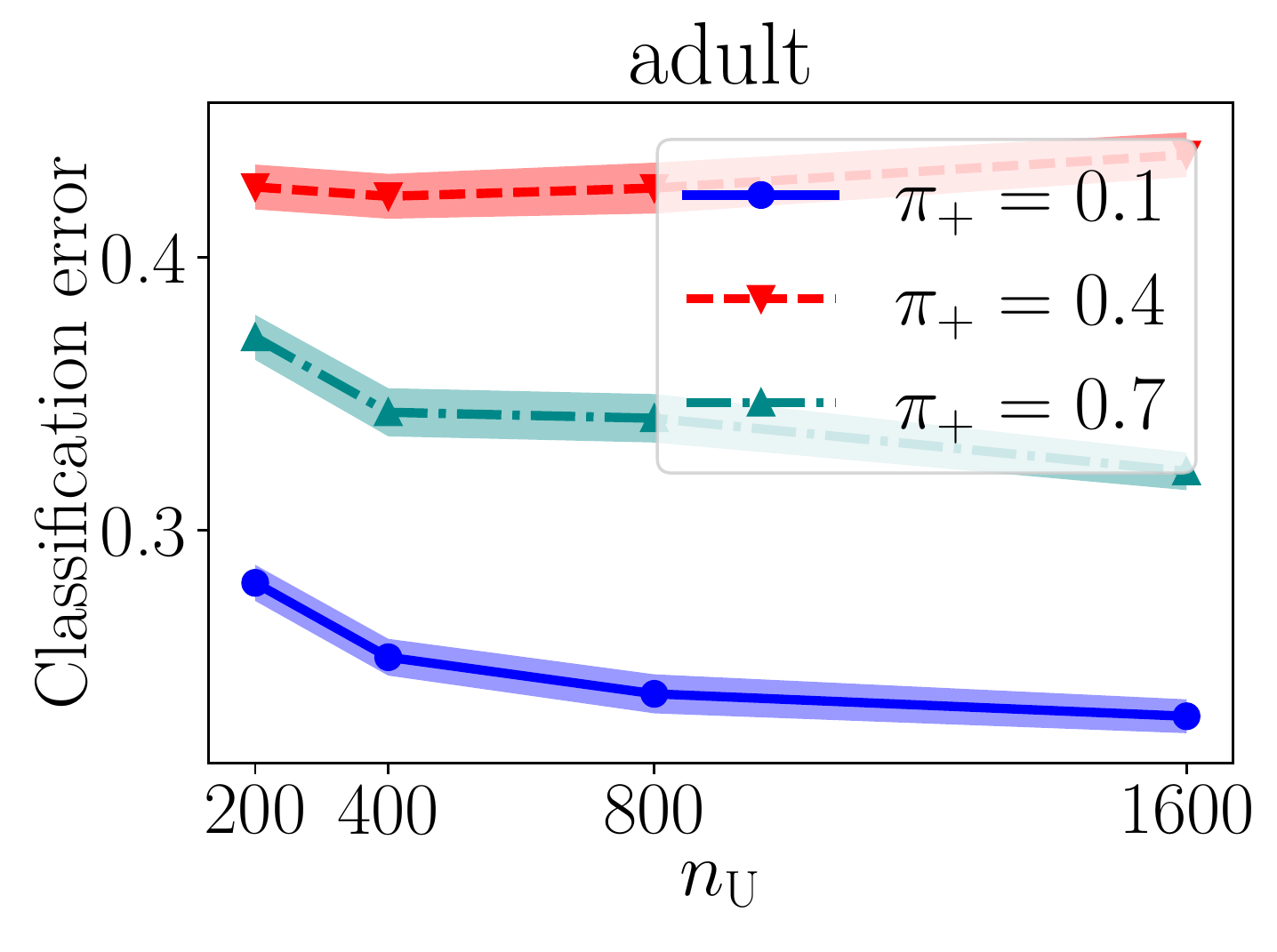}
  \end{minipage}
  \begin{minipage}{0.24\hsize}
    \centering
    \includegraphics[width=\columnwidth]{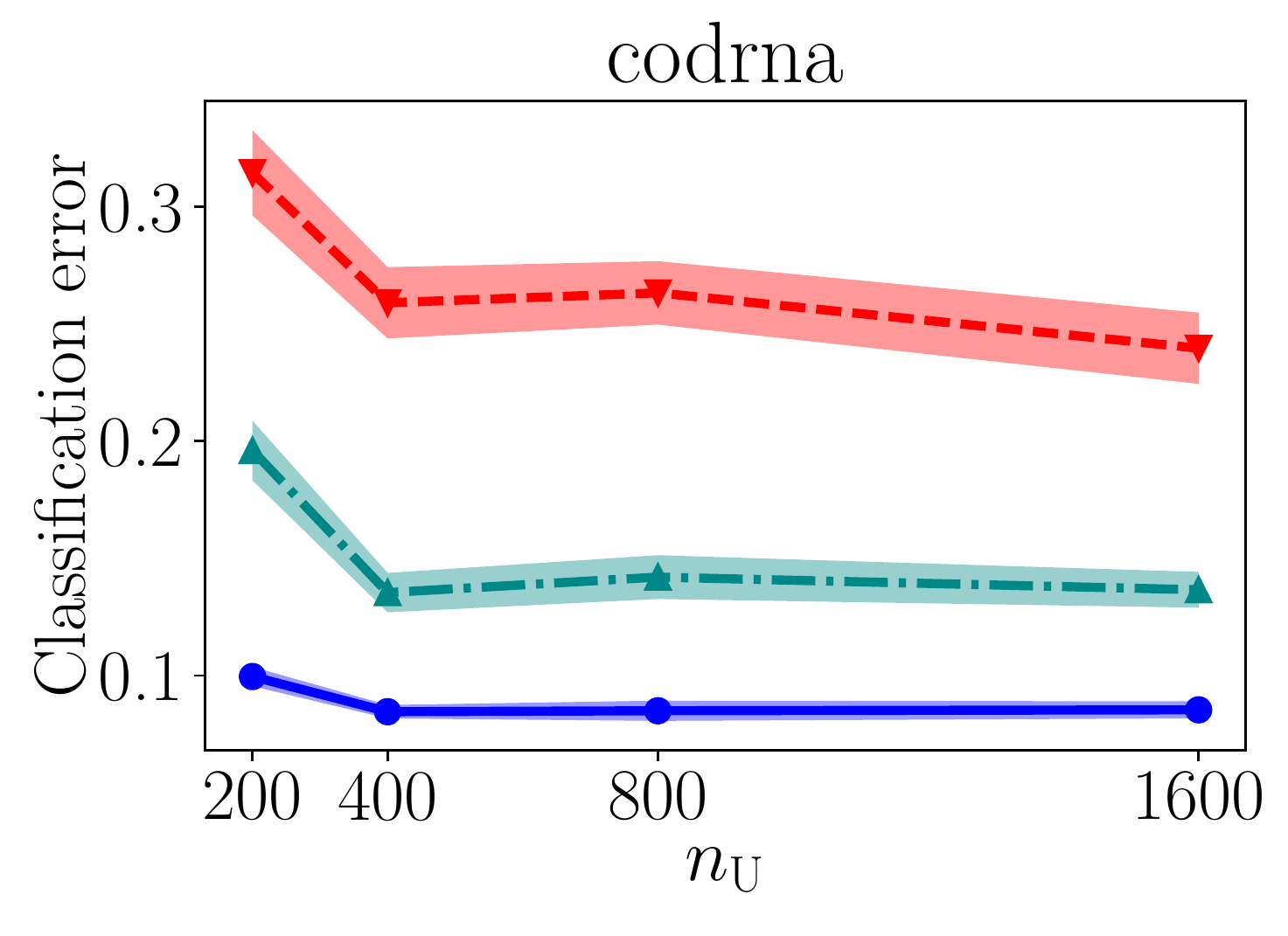}
  \end{minipage}
  \begin{minipage}{0.24\hsize}
    \centering
    \includegraphics[width=\columnwidth]{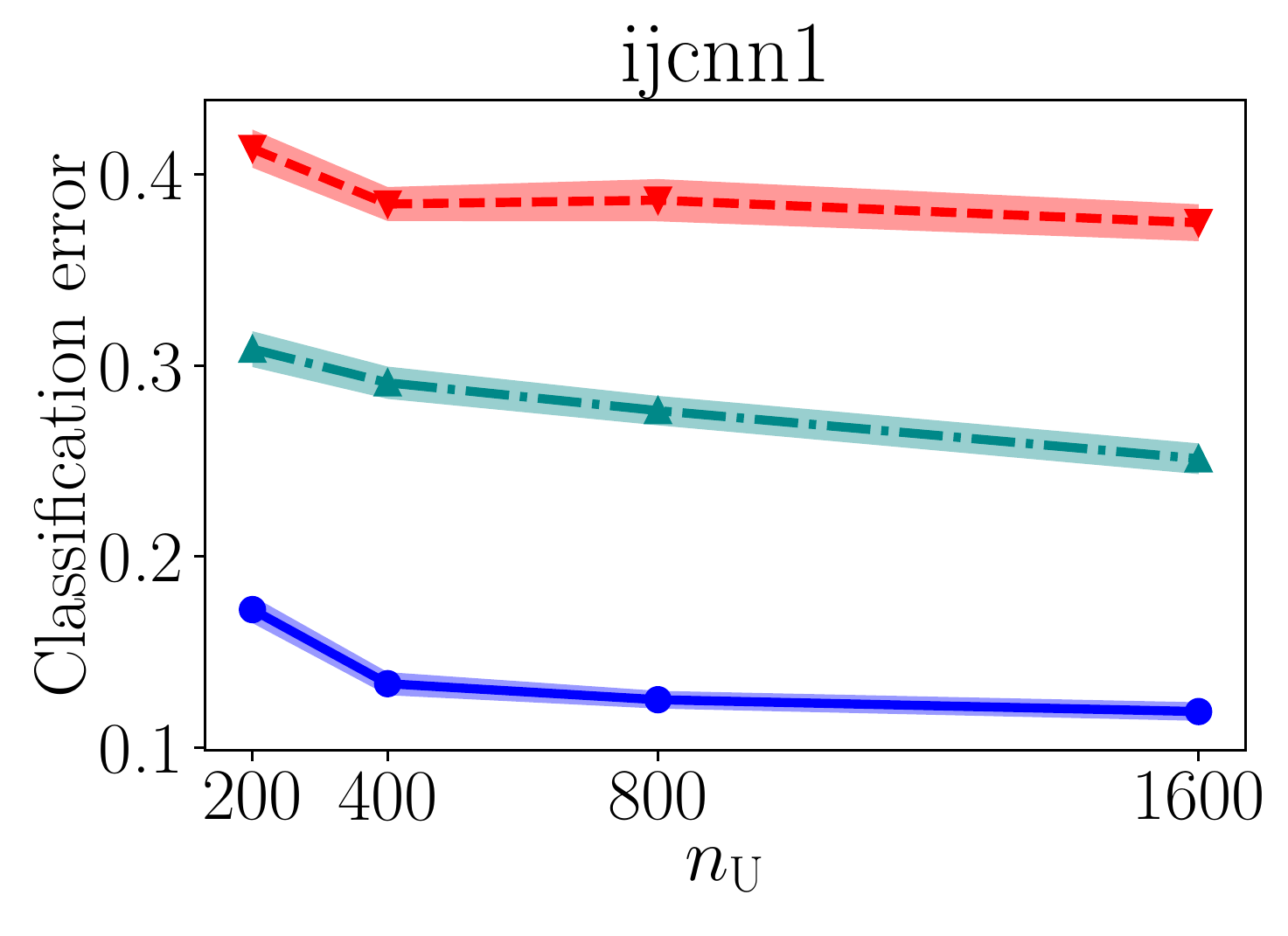}
  \end{minipage}
  \begin{minipage}{0.24\hsize}
    \centering
    \includegraphics[width=\columnwidth]{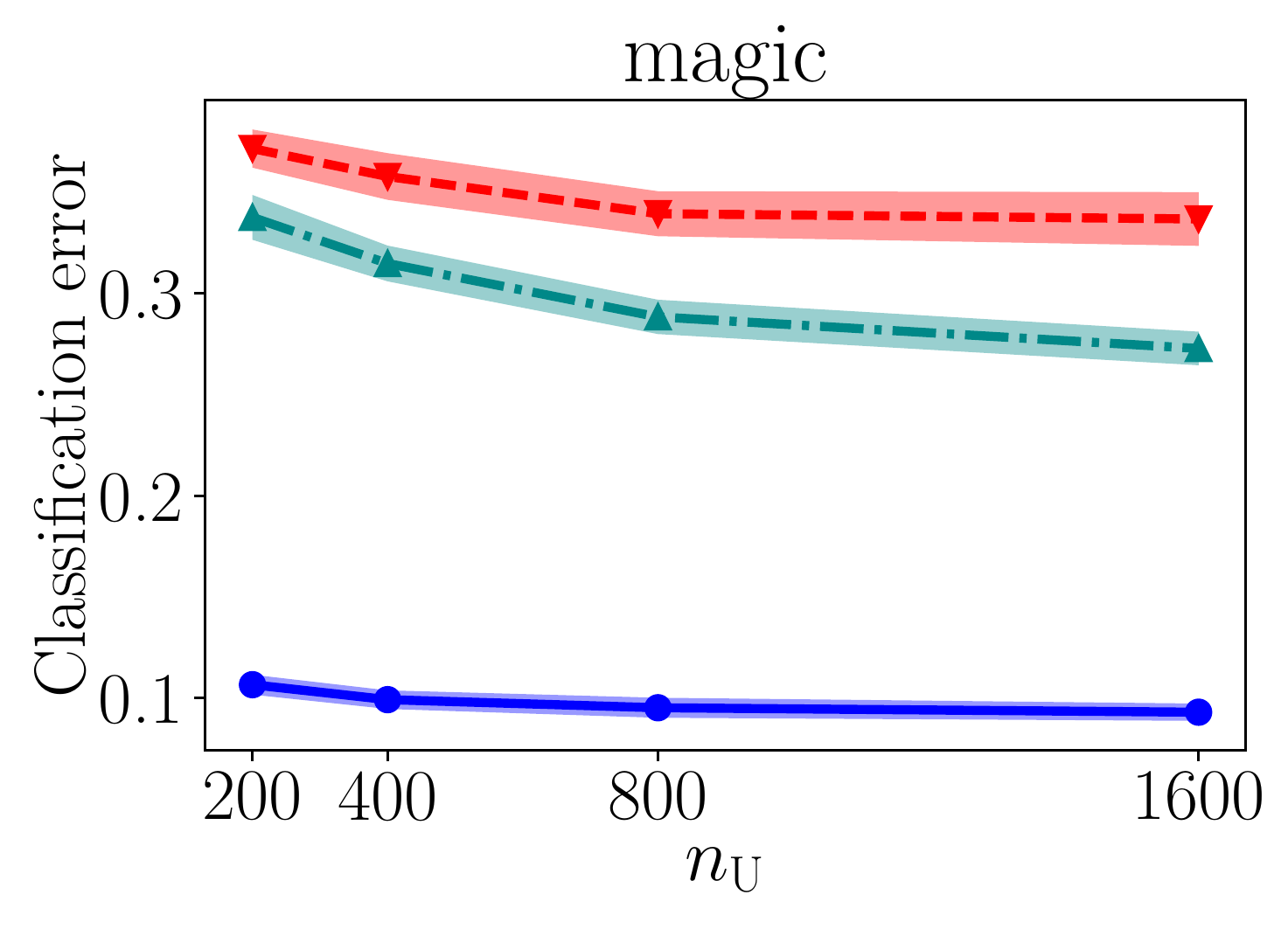}
  \end{minipage}
  \begin{minipage}{0.24\hsize}
    \centering
    \includegraphics[width=\columnwidth]{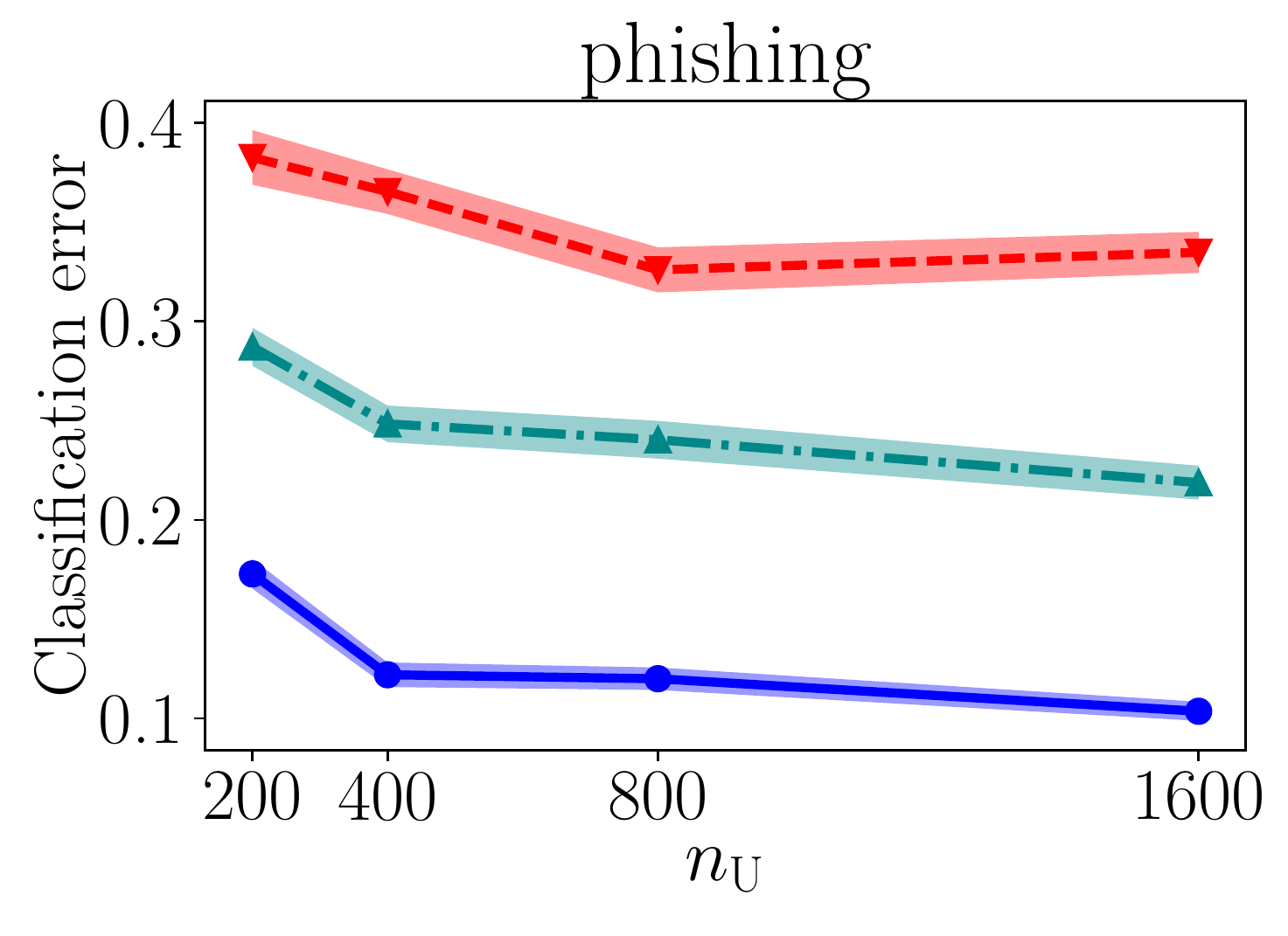}
  \end{minipage}
  \begin{minipage}{0.24\hsize}
    \centering
    \includegraphics[width=\columnwidth]{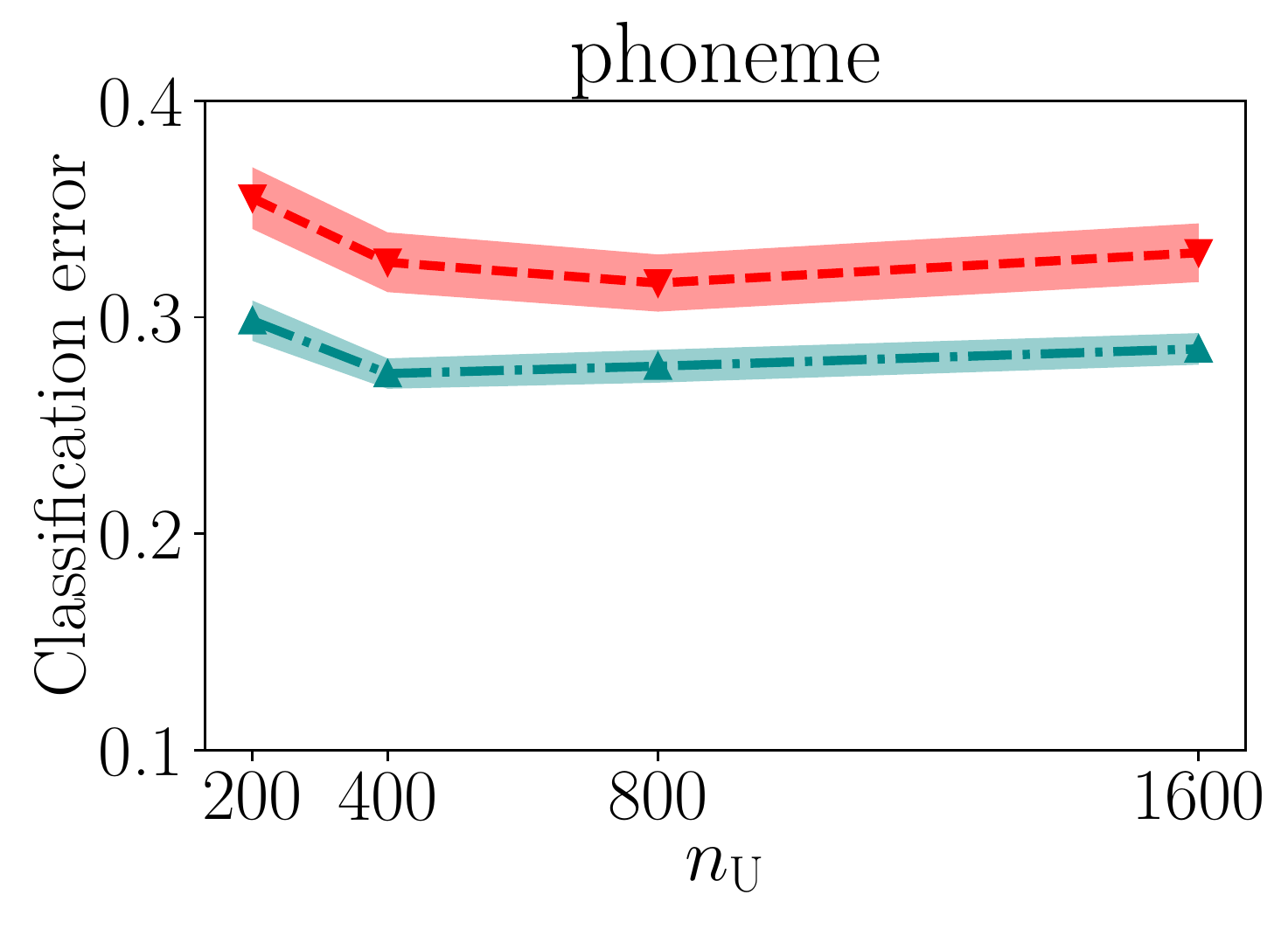}
  \end{minipage}
  \begin{minipage}{0.24\hsize}
    \centering
    \includegraphics[width=\columnwidth]{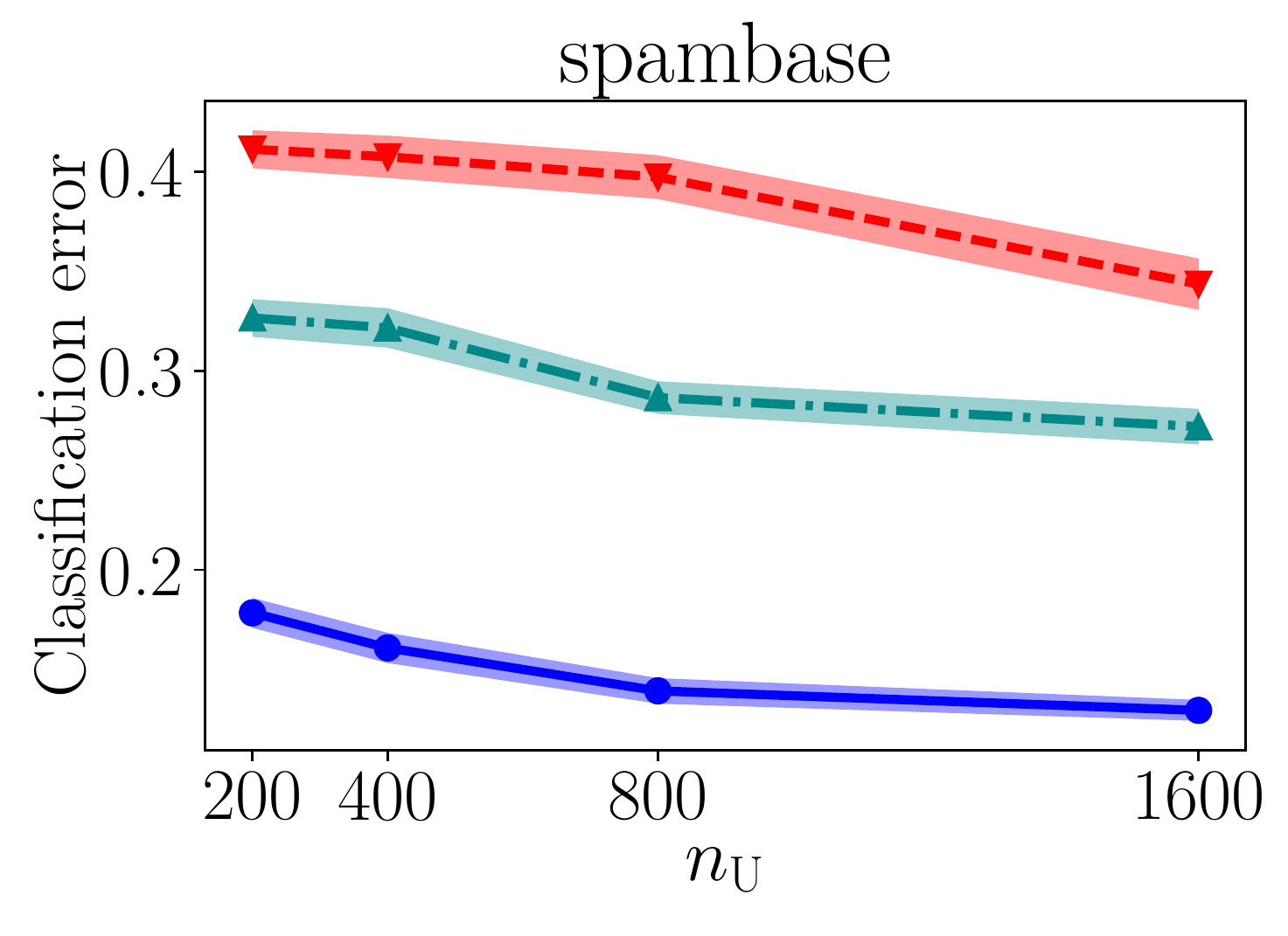}
  \end{minipage}
  \begin{minipage}{0.24\hsize}
    \centering
    \includegraphics[width=\columnwidth]{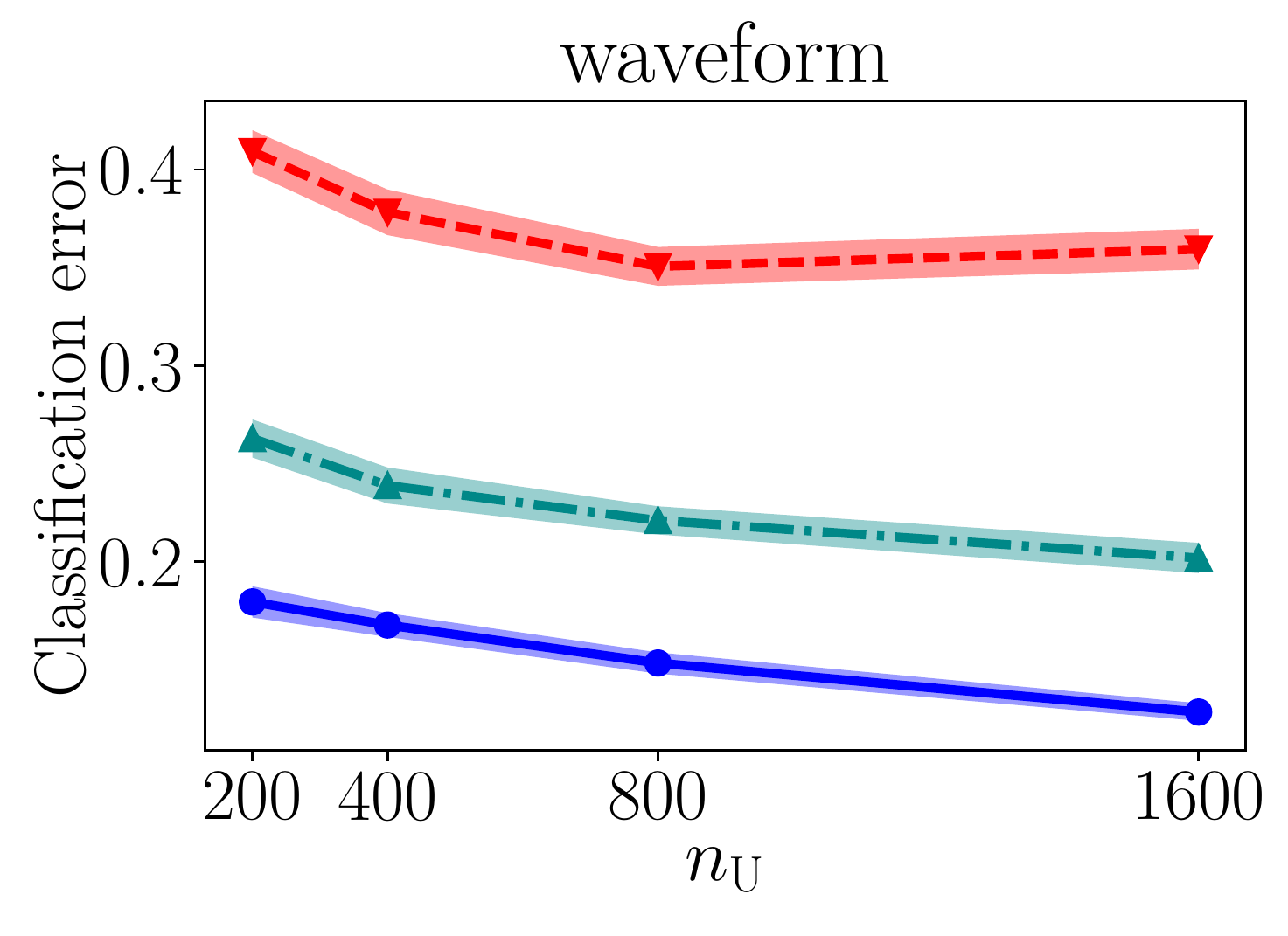}
  \end{minipage}
  \caption{
    Average classification error (vertical axes) and standard error (shaded areas) over 50 trials.
    Different $n_\u \in \{200, 400, 800, 1600\}$ are tested, while $n_\s$ is fixed to $200$.
    For each dataset, results with different class-priors ($\pi_+ \in \{0.1, 0.4, 0.7\}$) are plotted, which is assumed to be known in advance.
    Dataset ``phoneme'' does not have a plot for $\pi_+ = 0.1$ because the number of data in the original dataset is insufficient to subsample SU dataset with $\pi_+ = 0.1$.
  }
  \label{fig:nu-shift}
\end{figure*}
We empirically investigate our proposed method in terms of the relationship between classification performance and the number of training data.
We conduct experiments on benchmark datasets with the fixed number of S pairs (fixed to 200),
and the different numbers of U data $\{200, 400, 800, 1600\}$.

The experimental results are shown in Figure~\ref{fig:nu-shift}.
It indicates that the classification error decreases as $n_\u$ grows, which well agree with our theoretical analysis in Theorem~\ref{thm:estimation-error-bound}.
Furthermore, we observe a tendency that classification error becomes smaller as the class-prior becomes farther from $\tfrac{1}{2}$.
This is because $C_{\mathcal{F},\ell,\delta}$ in Eq.~\eqref{eq:bound} has the term $|2\pi_+ - 1|$ in the denominator, which will make the upper bound looser when $\pi_+$ is close to $\tfrac{1}{2}$.

The detailed setting about the proposed method is described below.
Our implementation is available at \url{https://github.com/levelfour/SU_Classification}.

\textbf{Proposed Method (SU):}
We use the linear-in-input model $f(\bm{x}) = \bm{w}^\top \bm{x} + b$.
In Section~\ref{sec:sample-complexity}, the squared loss is used,
and $\pi_+$ is given (Case 1 in Table~\ref{tab:behaviors-of-su-learning}).
In Section~\ref{sec:comparison-with-baseline-methods}, the squared loss and the double-hinge loss are used,
and the class-prior is estimated by Algorithm~\ref{alg:prior-estimation} with KM2~\cite{Ramaswamy:2016} (Case 2 in Table~\ref{tab:behaviors-of-su-learning}).
The regularization parameter $\lambda$ is chosen from $\{10^{-1}, 10^{-4}, 10^{-7}\}$.

To choose hyperparameters, 5-fold cross-validation is used.
Since we do not have any labeled data in the training phase, the validation error cannot be computed directly.
Instead, Eq.~\eqref{eq:su-estimator} equipped with the zero-one loss $\ell_{01}(\cdot) = \frac{1}{2}(1 - \sign(\cdot))$ is used as a proxy to estimate the validation error.
In each experimental trial, the model with minimum validation error is chosen.

\subsection{Benchmark Comparison with Baseline Methods}
\label{sec:comparison-with-baseline-methods}

\begin{table*}[t]
  \centering
  \caption{
    Mean accuracy and standard error of SU classification on different benchmark datasets over 20 trials.
    For all experiments, class-prior $\pi_+$ is set to $0.7$.
    The proposed method does not have oracle $\pi_+$ in advance, instead estimating it.
    Performances are measured by the clustering accuracy $1 - \min(r, 1-r)$, where $r$ is error rate.
    Bold-faces indicate outperforming methods, chosen by one-sided t-test with the significance level $5\%$.
    The result of SERAPH with ``w8a'' is unavailable due to high-dimensionality and memory constraints.
  }
  \label{tab:benchmark}
  \scalebox{0.9}{
  \begin{tabular}{cccccccccc} \hline
    & & \multicolumn{2}{c}{SU(proposed)} & \multicolumn{6}{c}{Baselines} \\ \cmidrule(lr){3-4} \cmidrule(lr){5-10}
    Dataset & Dim & Squared & Double-hinge & KM & ITML & SERAPH & 3SMIC & DIMC & IMSAT(linear) \\ \hline
    adult         & 123  &         64.5 (1.2)  & \textbf{84.5 (0.8)} &         58.1 (1.1)  &         57.9 (1.1)  &         66.5 (1.7)  &         58.5 (1.3)  &         63.7 (1.2)  &         69.8 (0.9) \\
    banana        &   2  & \textbf{67.5 (1.2)} & \textbf{68.2 (1.2)} &         54.3 (0.7)  &         54.8 (0.7)  &         55.0 (1.1)  &         61.9 (1.2)  &         64.3 (1.0)  & \textbf{69.8 (0.9)}\\
    cod-rna       &   8  & \textbf{82.8 (1.3)} &         71.0 (0.9)  &         63.1 (1.1)  &         62.8 (1.0)  &         62.5 (1.4)  &         56.6 (1.2)  &         63.8 (1.1)  &         69.1 (0.9) \\
    higgs         &  28  &         55.1 (1.1)  & \textbf{69.3 (0.9)} & \textbf{66.4 (1.6)} & \textbf{66.6 (1.3)} &         63.4 (1.1)  &         57.0 (0.9)  &         65.0 (1.1)  & \textbf{69.7 (1.4)}\\
    ijcnn1        &  22  &         65.5 (1.3)  & \textbf{73.6 (0.9)} &         54.6 (0.9)  &         55.8 (0.7)  &         59.8 (1.2)  &         58.9 (1.3)  &         66.2 (2.2)  &         68.5 (1.1) \\
    magic         &  10  &         66.0 (2.0)  & \textbf{69.0 (1.3)} &         53.9 (0.6)  &         54.5 (0.7)  &         55.0 (0.9)  &         59.1 (1.4)  &         63.1 (1.1)  & \textbf{70.0 (1.1)}\\
    phishing      &  68  &         75.0 (1.4)  & \textbf{91.3 (0.6)} &         64.4 (1.0)  &         61.9 (1.1)  &         62.4 (1.1)  &         60.1 (1.3)  &         64.8 (1.4)  &         69.4 (0.8) \\
    phoneme       &   5  & \textbf{67.8 (1.5)} & \textbf{70.8 (1.0)} &         65.2 (0.9)  &         66.7 (1.4)  & \textbf{69.1 (1.4)} &         61.3 (1.1)  &         64.5 (1.2)  & \textbf{69.2 (1.1)}\\
    spambase      &  57  &         69.7 (1.4)  & \textbf{85.5 (0.5)} &         60.1 (1.8)  &         54.4 (1.1)  &         65.4 (1.8)  &         61.5 (1.3)  &         63.6 (1.3)  &         70.5 (1.1) \\
    susy          &  18  &         59.8 (1.3)  & \textbf{74.8 (1.2)} &         55.6 (0.7)  &         55.4 (0.9)  &         58.0 (1.0)  &         57.1 (1.2)  &         65.2 (1.0)  &         70.4 (1.2) \\
    w8a           & 300  &         62.1 (1.5)  & \textbf{86.5 (0.6)} &         71.0 (0.8)  &         69.5 (1.5)  &         0.0 (0.0)  &         60.5 (1.5)  &         65.0 (2.0)  &         70.2 (1.2)  \\
    waveform      &  21  &         77.8 (1.3)  & \textbf{87.0 (0.5)} &         56.1 (0.8)  &         54.8 (0.7)  &         56.5 (0.9)  &         56.5 (0.9)  &         65.0 (0.9)  &         69.7 (1.1) \\ \hline
  \end{tabular}
  }
\end{table*}

We compare our proposed method with baseline methods on benchmark datasets.
We conduct experiments on each dataset with 500 similar data pairs, 500 unlabeled data, and 100 test data.
As can be seen from Table~\ref{tab:benchmark}, our proposed method outperforms baselines for many datasets.
The details about the baseline methods are described below.

\textbf{Baseline 1 (KM):}
As a simple baseline, we consider $k$-means clustering~\cite{MacQueen:1967}.
We ignore pair information of S data and apply $k$-means clustering with $k=2$ to U data.

\textbf{Baseline 2 (ITML):}
Information-theoretic metric learning~\cite{Davis:2007} is a metric learning method by regularizing the covariance matrix based on prior knowledge, with pairwise constraints.
We use the identity matrix as prior knowledge, and the slack variable parameter is fixed to $\gamma = 1$, since we cannot employ the cross-validation without any class label information.
Using the obtained metric, $k$-means clustering is applied on test data.

\textbf{Baseline 3 (SERAPH):}
Semi-supervised metric learning paradigm with hyper sparsity~\cite{Niu:2012} is another metric learning method based on entropy regularization.
Hyperparameter choice follows $\textsc{Seraph}_{\mathrm{hyper}}$.
Using the obtained metric, $k$-means clustering is applied on test data.

\textbf{Baseline 4 (3SMIC):}
Semi-supervised SMI-based clustering \cite{Calandriello:2014} models class-posteriors and maximizes mutual information between unlabeled data at hand and their cluster labels.
The penalty parameter $\gamma$ and the kernel parameter $t$ are chosen from $\{10^{-2}, 10^0, 10^2\}$ and $\{4, 7, 10\}$, respectively, via 5-fold cross-validation.

\textbf{Baseline 5 (DIMC):}
DirtyIMC~\cite{Chiang:2015} is a noisy version of inductive matrix completion, where the similarity matrix is recovered from a low-rank feature matrix.
Similarity matrix $S$ is assumed to be expressed as $UU^\top$, where $U$ is low-rank feature representations of input data.
After obtaining $U$, $k$-means clustering is conducted on $U$.
Two hyperparameters $\lambda_M, \lambda_N$ in Eq.~(2) in \cite{Chiang:2015} are set to $\lambda_M = \lambda_N = 10^{-2}$.

\textbf{Baseline 6 (IMSAT):}
Information maximizing self-augmented training~\cite{Hu:2017} is an unsupervised learning method to make a probabilistic classifier that maps similar data to similar representations,
combining information maximization clustering with self-augmented training,
which make the predictions of perturbed data close to the predictions of the original ones.
Instead of data perturbation, self-augmented training can be applied on S data to make each pair of data similar.
Here the logistic regressor $p_{\bm{\theta}}(y|\bm{x}) = (1 + \exp(-\bm{\theta}^\top\bm{x}))^{-1}$ is used as a classification model,
where $\bm{\theta}$ is parameters to learn.
Trade-off parameter $\lambda$ is set to $1$.

\textit{Remark:}
KM, ITML, and SERAPH rely on $k$-means, which is trained by using only training data.
Test prediction is based on the metric between test data and learned cluster centers.
Among the baselines, DIMC can only handle in-sample prediction, so it is trained by using both training and test data at the same time.

\section{Conclusion}
\label{sec:conclusion}

In this paper, we proposed a novel weakly-supervised learning problem named SU classification, where only similar pairs and unlabeled data are needed.
SU classification even becomes class-identifiable under a certain condition on the class-prior (see Table~\ref{tab:behaviors-of-su-learning}).
Its optimization problem with the linear-in-parameter model becomes convex if we choose certain loss functions such as the squared loss and the double-hinge loss.
We established an estimation error bound for the proposed method,
and confirmed that the estimation error decreases with the parametric optimal order,
as the number of similar data and unlabeled data becomes larger.
We also investigated the empirical performance and confirmed that our proposed method performs better than baseline methods.


\section*{Acknowledgements}
This work was supported by JST CREST JPMJCR1403 including the AIP challenge program, Japan.
We thank Ryuichi Kiryo for fruitful discussions on this work.

\bibliography{ref}

\begin{thebibliography}{45}
\providecommand{\natexlab}[1]{#1}
\providecommand{\url}[1]{\texttt{#1}}
\expandafter\ifx\csname urlstyle\endcsname\relax
  \providecommand{\doi}[1]{doi: #1}\else
  \providecommand{\doi}{doi: \begingroup \urlstyle{rm}\Url}\fi

\bibitem[Bao et~al.(2018)Bao, Sakai, Sato, and Sugiyama]{Bao:2018}
Bao, H., Sakai, T., Sato, I., and Sugiyama, M.
\newblock Convex formulation of multiple instance learning from positive and
  unlabeled bags.
\newblock \emph{Neural Networks}, 105:\penalty0 132--141, 2018.

\bibitem[Basu et~al.(2002)Basu, Banerjee, and Mooney]{Basu:2002}
Basu, S., Banerjee, A., and Mooney, R.~J.
\newblock Semi-supervised clustering by seeding.
\newblock In \emph{ICML}, pp.\  27--34, 2002.

\bibitem[Basu et~al.(2004)Basu, Bilenko, and Mooney]{Basu:2004}
Basu, S., Bilenko, M., and Mooney, R.~J.
\newblock A probabilistic framework for semi-supervised clustering.
\newblock In \emph{SIGKDD}, pp.\  59--68, 2004.

\bibitem[Belkin et~al.(2006)Belkin, Niyogi, and Sindhwani]{Belkin:2006}
Belkin, M., Niyogi, P., and Sindhwani, V.
\newblock Manifold regularization: A geometric framework for learning from
  labeled and unlabeled examples.
\newblock \emph{The Journal of Machine Learning Research}, 7:\penalty0
  2399--2434, 2006.

\bibitem[Bilenko et~al.(2004)Bilenko, Basu, and Mooney]{Bilenko:2004}
Bilenko, M., Basu, S., and Mooney, R.~J.
\newblock Integrating constraints and metric learning in semi-supervised
  clustering.
\newblock In \emph{ICML}, pp.\  839--846, 2004.

\bibitem[Calandriello et~al.(2014)Calandriello, Niu, and
  Sugiyama]{Calandriello:2014}
Calandriello, D., Niu, G., and Sugiyama, M.
\newblock Semi-supervised information-maximization clustering.
\newblock \emph{Neural Networks}, 57:\penalty0 103--111, 2014.

\bibitem[Chang \& Lin(2011)Chang and Lin]{CC01a}
Chang, C.-C. and Lin, C.-J.
\newblock {LIBSVM}: A library for support vector machines.
\newblock \emph{ACM Transactions on Intelligent Systems and Technology}, 2011.
\newblock Software available at \url{http://www.csie.ntu.edu.tw/~cjlin/libsvm}.

\bibitem[Chapelle \& Zien(2005)Chapelle and Zien]{Chapelle:2005}
Chapelle, O. and Zien, A.
\newblock Semi-supervised classification by low density separation.
\newblock In \emph{AISTATS 2005}, pp.\  57--64, 2005.

\bibitem[Chapelle et~al.(2010)Chapelle, Schlkopf, and Zien]{Chapelle:2010}
Chapelle, O., Schlkopf, B., and Zien, A.
\newblock \emph{Semi-Supervised Learning}.
\newblock MIT Press, 1st edition, 2010.

\bibitem[Chiang et~al.(2015)Chiang, Hsieh, and Dhillon]{Chiang:2015}
Chiang, K.-Y., Hsieh, C.-J., and Dhillon, I.~S.
\newblock Matrix completion with noisy side information.
\newblock In \emph{NIPS}, pp.\  3447--3455, 2015.

\bibitem[Davis et~al.(2007)Davis, Kulis, Jain, Sra, and Dhillon]{Davis:2007}
Davis, J.~V., Kulis, B., Jain, P., Sra, S., and Dhillon, I.~S.
\newblock Information-theoretic metric learning.
\newblock In \emph{ICML}, pp.\  209--216, 2007.

\bibitem[du~Plessis et~al.(2014)du~Plessis, Niu, and Sugiyama]{Plessis:2014}
du~Plessis, M.~C., Niu, G., and Sugiyama, M.
\newblock Analysis of learning from positive and unlabeled data.
\newblock In \emph{NIPS}, pp.\  703--711, 2014.

\bibitem[du~Plessis et~al.(2015)du~Plessis, Niu, and Sugiyama]{Plessis:2015}
du~Plessis, M.~C., Niu, G., and Sugiyama, M.
\newblock Convex formulation for learning from positive and unlabeled data.
\newblock In \emph{ICML}, pp.\  1386--1394, 2015.

\bibitem[du~Plessis et~al.(2017)du~Plessis, Niu, and Sugiyama]{Plessis:2017}
du~Plessis, M.~C., Niu, G., and Sugiyama, M.
\newblock Class-prior estimation for learning from positive and unlabeled data.
\newblock \emph{Machine Learning}, 106\penalty0 (4):\penalty0 463--492, 2017.

\bibitem[Elkan \& Noto(2008)Elkan and Noto]{Elkan:2008}
Elkan, C. and Noto, K.
\newblock Learning classifiers from only positive and unlabeled data.
\newblock In \emph{SIGKDD}, pp.\  213--220, 2008.

\bibitem[Fisher(1993)]{Fisher:1993}
Fisher, R.
\newblock Social desirability bias and the validity of indirect questioning.
\newblock \emph{Journal of Consumer Research}, 20\penalty0 (2):\penalty0
  303--315, 1993.

\bibitem[Hu et~al.(2017)Hu, Miyato, Tokui, Matsumoto, and Sugiyama]{Hu:2017}
Hu, W., Miyato, T., Tokui, S., Matsumoto, E., and Sugiyama, M.
\newblock Learning discrete representations via information maximizing
  self-augmented training.
\newblock In \emph{ICML}, pp.\  1558--1567, 2017.

\bibitem[Kiryo et~al.(2017)Kiryo, Niu, du~Plessis, and Sugiyama]{Kiryo:2017}
Kiryo, R., Niu, G., du~Plessis, M.~C., and Sugiyama, M.
\newblock Positive-unlabeled learning with non-negative risk estimator.
\newblock In \emph{NIPS}, pp.\  1674--1684, 2017.

\bibitem[Krause et~al.(2010)Krause, Perona, and Gomes]{Krause:2010}
Krause, A., Perona, P., and Gomes, R.
\newblock Discriminative clustering by regularized information maximization.
\newblock In \emph{NIPS}, pp.\  775--783, 2010.

\bibitem[Laine \& Aila(2017)Laine and Aila]{Laine:2017}
Laine, S. and Aila, T.
\newblock Temporal ensembling for semi-supervised learning.
\newblock In \emph{ICLR}, 2017.

\bibitem[Li \& Vasconcelos(2015)Li and Vasconcelos]{Li:2015}
Li, W. and Vasconcelos, N.
\newblock Multiple instance learning for soft bags via top instances.
\newblock In \emph{CVPR}, pp.\  4277--4285, 2015.

\bibitem[Li \& Liu(2009)Li and Liu]{Li:2009:ICCV}
Li, Z. and Liu, J.
\newblock Constrained clustering by spectral kernel learning.
\newblock In \emph{ICCV}, pp.\  421--427, 2009.

\bibitem[Li et~al.(2008)Li, Liu, and Tang]{Li:2008}
Li, Z., Liu, J., and Tang, X.
\newblock Pairwise constraint propagation by semidefinite programming for
  semi-supervised classification.
\newblock In \emph{ICML}, pp.\  576--583, 2008.

\bibitem[Lichman(2013)]{Lichman:2013}
Lichman, M.
\newblock {UCI} machine learning repository, 2013.
\newblock URL \url{http://archive.ics.uci.edu/ml}.

\bibitem[Luo et~al.(2018)Luo, Zhu, Li, Ren, and Zhang]{Luo:2018}
Luo, Y., Zhu, J., Li, M., Ren, Y., and Zhang, B.
\newblock Smooth neighbors on teacher graphs for semi-supervised learning.
\newblock In \emph{CVPR}, 2018.

\bibitem[MacQueen(1967)]{MacQueen:1967}
MacQueen, J.
\newblock Some methods for classification and analysis of multivariate
  observations.
\newblock In \emph{Proceedings of the Fifth Berkeley Symposium on Mathematical
  Statistics and Probability, Volume 1: Statistics}, pp.\  281--297, Berkeley,
  Calif., 1967. University of California Press.

\bibitem[Mendelson(2008)]{Mendelson:2008}
Mendelson, S.
\newblock Lower bounds for the empirical minimization algorithm.
\newblock \emph{IEEE Transactions on Information Theory}, 54\penalty0
  (8):\penalty0 3797--3803, 2008.

\bibitem[Miech et~al.(2017)Miech, Alayrac, Bojanowski, Laptev, and
  Sivic]{Miech:2017}
Miech, A., Alayrac, J., Bojanowski, P., Laptev, I., and Sivic, J.
\newblock Learning from video and text via large-scale discriminative
  clustering.
\newblock In \emph{ICCV}, pp.\  5267--5276, 2017.

\bibitem[Miyato et~al.(2016)Miyato, Maeda, Koyama, Nakae, and
  Ishii]{Miyato:2016}
Miyato, T., Maeda, S., Koyama, M., Nakae, K., and Ishii, S.
\newblock Distributional smoothing with virtual adversarial training.
\newblock In \emph{ICLR}, 2016.

\bibitem[Mohri et~al.(2012)Mohri, Rostamizadeh, and Talwalkar]{Mohri:2012}
Mohri, M., Rostamizadeh, A., and Talwalkar, A.
\newblock \emph{Foundations of Machine Learning}.
\newblock MIT Press, 2012.

\bibitem[Natarajan et~al.(2013)Natarajan, Dhillon, Ravikumar, and
  Tewari]{Natarajan:2013}
Natarajan, N., Dhillon, I.~S., Ravikumar, P.~K., and Tewari, A.
\newblock Learning with noisy labels.
\newblock In \emph{NIPS}, pp.\  1196--1204, 2013.

\bibitem[Niu et~al.(2012)Niu, Dai, Yamada, and Sugiyama]{Niu:2012}
Niu, G., Dai, B., Yamada, M., and Sugiyama, M.
\newblock Information-theoretic semi-supervised metric learning via entropy
  regularization.
\newblock In \emph{ICML}, pp.\  1717--1762, 2012.

\bibitem[Niu et~al.(2016)Niu, du~Plessis, Sakai, Ma, and Sugiyama]{Niu:2016}
Niu, G., du~Plessis, M.~C., Sakai, T., Ma, Y., and Sugiyama, M.
\newblock Theoretical comparisons of positive-unlabeled learning against
  positive-negative learning.
\newblock In \emph{NIPS}, pp.\  1199--1207, 2016.

\bibitem[Ramaswamy et~al.(2016)Ramaswamy, Scott, and Tewari]{Ramaswamy:2016}
Ramaswamy, H.~G., Scott, C., and Tewari, A.
\newblock Mixture proportion estimation via kernel embedding of distributions.
\newblock In \emph{ICML}, pp.\  2052--2060, 2016.

\bibitem[Sakai et~al.(2017)Sakai, du~Plessis, Niu, and Sugiyama]{Sakai:2017a}
Sakai, T., du~Plessis, M.~C., Niu, G., and Sugiyama, M.
\newblock Semi-supervised classification based on classification from positive
  and unlabeled data.
\newblock In \emph{ICML}, pp.\  2998--3006, 2017.

\bibitem[Scott(2015)]{Scott:2015}
Scott, C.
\newblock A rate of convergence for mixture proportion estimation, with
  application to learning from noisy labels.
\newblock In \emph{AISTATS}, pp.\  838--846, 2015.

\bibitem[Sugiyama et~al.(2011)Sugiyama, Yamada, Kimura, and
  Hachiya]{Sugiyama:2011}
Sugiyama, M., Yamada, M., Kimura, M., and Hachiya, H.
\newblock On information-maximization clustering: {T}uning parameter selection
  and analytic solution.
\newblock In \emph{ICML}, pp.\  65--72, 2011.

\bibitem[Tarvainen \& Valpola(2017)Tarvainen and Valpola]{Tarvainen:2017}
Tarvainen, A. and Valpola, H.
\newblock Mean teachers are better role models: Weight-averaged consistency
  targets improve semi-supervised deep learning results.
\newblock In \emph{NIPS}, pp.\  1195--1204, 2017.

\bibitem[Wagstaff et~al.(2001)Wagstaff, Cardie, Rogers, and
  Schr\"{o}dl]{Wagstaff:2001}
Wagstaff, K., Cardie, C., Rogers, S., and Schr\"{o}dl, S.
\newblock Constrained k-means clustering with background knowledge.
\newblock In \emph{ICML}, pp.\  577--584, 2001.

\bibitem[Warner(1965)]{Warner:1965}
Warner, S.
\newblock Randomized response: A survey technique for eliminating evasive
  answer bias.
\newblock \emph{Journal of the American Statistical Association}, 60\penalty0
  (309):\penalty0 63--69, 1965.

\bibitem[Weinberger et~al.(2005)Weinberger, Blitzer, and Saul]{Weinberger:2005}
Weinberger, K.~Q., Blitzer, J., and Saul, L.~K.
\newblock Distance metric learning for large margin nearest neighbor
  classification.
\newblock In \emph{NIPS}, pp.\  1473--1480, 2005.

\bibitem[Xing et~al.(2002)Xing, Ng, Jordan, and Russell]{Xing:2002}
Xing, E.~P., Ng, A.~Y., Jordan, M.~I., and Russell, S.
\newblock Distance metric learning, with application to clustering with
  side-information.
\newblock In \emph{NIPS}, pp.\  521--528, 2002.

\bibitem[Xu et~al.(2005)Xu, Neufeld, Larson, and Schuurmans]{Xu:2005}
Xu, L., Neufeld, J., Larson, B., and Schuurmans, D.
\newblock Maximum margin clustering.
\newblock In \emph{NIPS}, pp.\  1537--1544, 2005.

\bibitem[Yi et~al.(2013)Yi, Zhang, Jin, Qian, and Jain]{Yi:2013}
Yi, J., Zhang, L., Jin, R., Qian, Q., and Jain, A.
\newblock Semi-supervised clustering by input pattern assisted pairwise
  similarity matrix completion.
\newblock In \emph{ICML}, pp.\  1400--1408, 2013.

\bibitem[Zhou(2018)]{Zhou:2018}
Zhou, Z.-H.
\newblock A brief introduction to weakly supervised learning.
\newblock \emph{National Science Review}, 5\penalty0 (1):\penalty0 44--53,
  2018.

\end{thebibliography}
\bibliographystyle{icml2018}

\appendix
\onecolumn
\renewcommand{\thetable}{\Alph{section}.\arabic{figure}}
\renewcommand{\thefigure}{\Alph{section}.\arabic{table}}
\makeatletter
\@addtoreset{equation}{section}
\def\theequation{\thesection.\arabic{equation}}
\makeatother

\section{Proof of Lemma~\ref{lem:pointwise-similar-conditional}}
\label{sec:proof-of-pointwise-similar-conditional}

From the assumption~\eqref{eq:pairwise-similarity-conditional}, $\mathcal{D}_\s = \{(\bm{x}_{\s,i},\bm{x}_{\s,i}')\}_{i=1}^{n_\s} \sim p_\s(\bm{x},\bm{x}')$.
In order to decompose pairwise data into pointwise,
marginalize $p_\s(\bm{x},\bm{x}')$ with respect to $\bm{x}'$:
\begin{align*}
  \int p_\s(\bm{x},\bm{x}')d\bm{x}'
  &= \frac{\pi_+^2}{\pi_+^2 + \pi_-^2}p_+(\bm{x}) \int p_+(\bm{x}')d\bm{x}'
  + \frac{\pi_-^2}{\pi_+^2 + \pi_-^2}p_-(\bm{x})\int p_-(\bm{x}')d\bm{x}'
  \\
  &= \frac{\pi_+^2}{\pi_+^2 + \pi_-^2}p_+(\bm{x}) \int \frac{p(\bm{x}',y=+1)}{p(y=+1)}d\bm{x}'
  + \frac{\pi_-^2}{\pi_+^2 + \pi_-^2}p_-(\bm{x}) \int \frac{p(\bm{x}',y=-1)}{p(y=-1)}d\bm{x}'
  \\
  &= \frac{\pi_+^2}{\pi_+^2 + \pi_-^2}p_+(\bm{x}) \frac{p(y=+1)}{p(y=+1)}
  + \frac{\pi_-^2}{\pi_+^2 + \pi_-^2}p_-(\bm{x}) \frac{p(y=-1)}{p(y=-1)}
  \\
  &= \frac{\pi_+^2}{\pi_+^2 + \pi_-^2}p_+(\bm{x}) + \frac{\pi_-^2}{\pi_+^2 + \pi_-^2}p_-(\bm{x})
  \\
  &= \tilde{p}_\s(\bm{x})
  .
\end{align*}
Since a pair $(\bm{x}_{\s,i},\bm{x}_{\s,i}') \in \mathcal{D}_\s$ is independently and identically drawn, both $\bm{x}_{\s,i}$ and $\bm{x}_{\s,i}'$ are drawn following $\tilde{p}_\s$.
\qed

\section{Proof of Theorem~\ref{thm:su-risk}}
\label{sec:proof-of-su-estimator}

To prove Theorem~\ref{thm:su-risk}, it is convenient to begin with the following Lemma~\ref{lem:psd-estimator}.

\begin{lemma}
  \label{lem:psd-estimator}
  The classification risk~\eqref{eq:risk} can be equivalently expressed as
  \begin{align}
    R&_{\mathrm{PSD},\ell}(f)
    = \frac{\pi_+}{2\pi_-} \expect_{X \sim p_+} \left[ \tilde{\ell}(f(X)) \right] \nonumber \\
    & + \pi_\s \expect_{(X,X') \sim p_\s} \left[ -\frac{\pi_+}{2\pi_-}\frac{\ell(f(X),+1) + \ell(f(X'),+1)}{2} + \frac{1+\pi_-}{2\pi_-}\frac{\ell(f(X),-1) + \ell(f(X'), -1)}{2} \right] \nonumber \\
    & + \pi_\mathrm{D} \expect_{(X,X') \sim p_\mathrm{D}} \left[ \frac{\ell(f(X),-1) + \ell(f(X'),+1)}{2} \right]
    \label{eq:psd-estimator}
    ,
  \end{align}
  where $\mathbb{E}_{X \sim p_+}[\cdot]$, $\mathbb{E}_{(X,X') \sim p_\s}[\cdot]$, and $\mathbb{E}_{(X,X') \sim p_\d}[\cdot]$
  denote expectations over $p_+(X)$, $p_\s(X,X')$, and $p_\d(X,X')$, respectively.
\end{lemma}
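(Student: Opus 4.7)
The plan is to verify the claimed equality by expanding each of the three expectations on the right-hand side into integrals against $p_+$ and $p_-$, and then checking that the coefficients collected on the four terms $\int \ell(f(\bm{x}),\pm 1)\, p_\pm(\bm{x})\, d\bm{x}$ reduce to $R(f) = \pi_+\mathbb{E}_{p_+}[\ell(f(X),+1)] + \pi_-\mathbb{E}_{p_-}[\ell(f(X),-1)]$. This is essentially a bookkeeping exercise once all the distributions are expressed in terms of $p_+$ and $p_-$.

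First, I would note that $p_\s(\bm{x},\bm{x}')$ and $p_\d(\bm{x},\bm{x}')$ are both symmetric in their two arguments. Hence, for any integrable $g$, the symmetrized-pair expectation under $p_\s$ collapses via Lemma~\ref{lem:pointwise-similar-conditional}:
\[
\pi_\s\,\mathbb{E}_{(X,X')\sim p_\s}\!\left[\tfrac{g(X)+g(X')}{2}\right]
=\int g(\bm{x})\bigl(\pi_+^2 p_+(\bm{x})+\pi_-^2 p_-(\bm{x})\bigr)\,d\bm{x}.
\]
For the dissimilar term, the marginal of $p_\d$ is $\tilde p_\d(\bm{x})=(p_+(\bm{x})+p_-(\bm{x}))/2$, so by symmetry both $\mathbb{E}_{p_\d}[\ell(f(X),-1)]$ and $\mathbb{E}_{p_\d}[\ell(f(X'),+1)]$ are integrals against $\tilde p_\d$, giving
\[
\pi_\d\,\mathbb{E}_{p_\d}\!\left[\tfrac{\ell(f(X),-1)+\ell(f(X'),+1)}{2}\right]
=\tfrac{\pi_+\pi_-}{2}\!\!\int\!\bigl(\ell(f(\bm{x}),-1)+\ell(f(\bm{x}),+1)\bigr)(p_+(\bm{x})+p_-(\bm{x}))\,d\bm{x}.
\]
The first term of \eqref{eq:psd-estimator} is already a single integral against $p_+$ after substituting $\tilde\ell(z)=\ell(z,+1)-\ell(z,-1)$.

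Next, I would collect coefficients on each of the four atomic integrals. For $\int \ell(f(\bm{x}),+1)\,p_+\,d\bm{x}$ the coefficient is $\tfrac{\pi_+}{2\pi_-}-\tfrac{\pi_+}{2\pi_-}\pi_+^2+\tfrac{\pi_+\pi_-}{2}$, which, using $1-\pi_+^2=(1-\pi_+)(1+\pi_+)=\pi_-(1+\pi_+)$ and $\pi_++\pi_-=1$, simplifies to $\pi_+$. For $\int\ell(f(\bm{x}),+1)\,p_-\,d\bm{x}$ the two cross contributions $-\tfrac{\pi_+\pi_-}{2}$ and $+\tfrac{\pi_+\pi_-}{2}$ cancel. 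For $\int\ell(f(\bm{x}),-1)\,p_+\,d\bm{x}$ the sum $-\tfrac{\pi_+}{2\pi_-}+\tfrac{\pi_+^2(1+\pi_-)}{2\pi_-}+\tfrac{\pi_+\pi_-}{2}$ reduces to $0$ using $\pi_++\pi_-=1$; and for $\int\ell(f(\bm{x}),-1)\,p_-\,d\bm{x}$ the coefficient $\tfrac{\pi_-(1+\pi_-)}{2}+\tfrac{\pi_+\pi_-}{2}$ simplifies to $\pi_-$. Putting these four together yields exactly $R(f)$.

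There is no real conceptual obstacle: the only thing that could go wrong is an algebraic slip when simplifying the $(1+\pi_-)$, $(1+\pi_+)$, and $\pi_\s=\pi_+^2+\pi_-^2$ combinations, so I would take care to push everything through the identity $\pi_++\pi_-=1$ at the very end. The design of the coefficients $\tfrac{\pi_+}{2\pi_-}$, $-\tfrac{\pi_+}{2\pi_-}$, and $\tfrac{1+\pi_-}{2\pi_-}$ in the statement is precisely what forces the cross-class integrals to vanish and the same-class integrals to pick up the correct class priors.
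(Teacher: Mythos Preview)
Your proposal is correct. You verify the identity by reducing every expectation in $R_{\mathrm{PSD},\ell}(f)$ to an integral against $p_+$ or $p_-$, then matching the four coefficients against those of $R(f)=\pi_+\mathbb{E}_{p_+}[\ell(f(X),+1)]+\pi_-\mathbb{E}_{p_-}[\ell(f(X),-1)]$. The algebra you sketched checks out, and invoking the marginalization of $p_\s$ and $p_\d$ (via Lemma~\ref{lem:pointwise-similar-conditional} and the analogous fact $\tilde p_\d=(p_++p_-)/2$) is exactly the right tool to collapse the pair expectations.

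The paper takes a genuinely different, \emph{forward} route: it rewrites $R(f)$ as a pairwise expectation $\mathbb{E}_{(X,Y),(X',Y')}[\tfrac{1}{2}(\ell(f(X),Y)+\ell(f(X'),Y'))]$, splits that into the four label-pair cases $(\pm,\pm)$, then successively substitutes the definitions of $p_\s$ and $p_\d$ and manipulates until the three claimed summands emerge. Your \emph{backward} verification is shorter and more mechanical---once the target formula is known, coefficient matching is the most direct check---whereas the paper's derivation is longer but constructive, showing how one might discover the decomposition rather than merely confirm it. Either argument is a complete proof of the lemma.
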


Note that the definitions of $p_\d$ and $\pi_\d$ are first given in Eq.~\eqref{eq:pairwise-dissimilarity-conditional}.

\begin{proof}
Eq.~\eqref{eq:risk} can be transformed into pairwise fashion:
\begin{align}
  \expect_{(X,Y) \sim p} [\ell(f(X), Y)]
  &= \expect_{(X,Y) \sim p} \left[ \frac{\ell(f(X), Y)}{2} \right] + \expect_{(X',Y') \sim p} \left[ \frac{\ell(f(X'), Y')}{2} \right] \nonumber
  \\
  &= \expect_{(X,Y),(X',Y') \sim p} \left[ \frac{\ell(f(X),Y) + \ell(f(X'),Y')}{2} \right]
  \label{eq:pairwise-transform}
  .
\end{align}
Both pairs $(X,Y)$ and $(X',Y')$ are independently and identically distributed from the joint distribution $p(\bm{x},y)$.
Thus, Eq.~\eqref{eq:pairwise-transform} can be further decomposed:
\begin{align}
  \expect_{(X,Y),(X',Y') \sim p} & \left[ \frac{\ell(f(X),Y) + \ell(f(X'),Y')}{2} \right] \nonumber
  \\
  &= \sum_{y,y'} \int \frac{\ell(f(\bm{x}),y) + \ell(f(\bm{x}'),y')}{2} p(\bm{x},y)p(\bm{x}',y') d\bm{x}d\bm{x}' \nonumber
  \\
  &= \pi_+^2 \int \frac{\ell(f(\bm{x}),+1) + \ell(f(\bm{x}'),+1)}{2} p_+(\bm{x})p_+(\bm{x}') d\bm{x}d\bm{x}' \nonumber \\
  & \quad + \pi_-^2 \int \frac{\ell(f(\bm{x}),-1) + \ell(f(\bm{x}'),-1)}{2} p_-(\bm{x})p_-(\bm{x}') d\bm{x}d\bm{x}' \nonumber \\
  & \quad + \pi_+\pi_- \int \frac{\ell(f(\bm{x}),+1) + \ell(f(\bm{x}'),-1)}{2} p_+(\bm{x})p_-(\bm{x}') d\bm{x}d\bm{x}' \nonumber \\
  & \quad + \pi_+\pi_- \int \frac{\ell(f(\bm{x}),-1) + \ell(f(\bm{x}'),+1)}{2} p_-(\bm{x})p_+(\bm{x}') d\bm{x}d\bm{x}' \nonumber
  \\
  &= \pi_+^2 \expect_{X,X' \sim p_+} \left[ \frac{\ell(f(X),+1) + \ell(f(X'),+1)}{2} \right] \nonumber \\
  & \qquad + \pi_-^2 \expect_{X,X' \sim p_-} \left[ \frac{\ell(f(X),-1) + \ell(f(X'),-1)}{2} \right] \nonumber \\
  & \qquad + \pi_+ \pi_- \expect_{X \sim p_+, X' \sim p_-} \left[ \frac{\ell(f(X),+1) + \ell(f(X'),-1)}{2} \right] \nonumber \\
  & \qquad + \pi_+ \pi_- \expect_{X \sim p_-, X' \sim p_+} \left[ \frac{\ell(f(X),-1) + \ell(f(X'),+1)}{2} \right]
  .
  \label{eq:pairwise-risk-decomposition}
\end{align}
Using Eq.~\eqref{eq:pairwise-similarity-conditional}, the following equation is obtained:
\begin{align}
  \pi_\s& \expect_{(X,X') \sim p_\s} \left[ \frac{\ell(f(X),-1) + \ell(f(X'),-1)}{2} \right] \nonumber
  \\
  &= \pi_+^2 \expect_{X,X' \sim p_+} \left[ \frac{\ell(f(X),-1) + \ell(f(X'),-1)}{2} \right]
  + \pi_-^2 \expect_{X,X' \sim p_-} \left[ \frac{\ell(f(X),-1) + \ell(f(X'),-1)}{2} \right]
  .
  \label{eq:pairwise-similarity-conditional-sub}
\end{align}
Similarly, the following equation is obtained from Eq.~\eqref{eq:pairwise-dissimilarity-conditional}:
\begin{align}
  2&\pi_-\pi_+ \expect_{(X,X') \sim p_\mathrm{D}} \left[ \frac{\ell(f(X),-1) + \ell(f(X'),+1)}{2} \right] \nonumber
  \\
  &= \pi_+\pi_- \expect_{X \sim p_+, X' \sim p_-} \left[ \frac{\ell(f(X),-1) + \ell(f(X'),+1)}{2} \right]
  + \pi_+\pi_- \expect_{X \sim p_-, X \sim p_+} \left[ \frac{\ell(f(X),-1) + \ell(f(X'),+1)}{2} \right]
  .
  \label{eq:pairwise-dissimilarity-conditional-sub}
\end{align}
Combining Eqs.~\eqref{eq:pairwise-risk-decomposition},~\eqref{eq:pairwise-similarity-conditional-sub} and~\eqref{eq:pairwise-dissimilarity-conditional-sub},
the expected risk $R(f)$ is written as
\begin{align}
  R(f)
  &= \pi_\s \expect_{(X,X') \sim p_\s} \left[ \frac{\ell(f(X),-1) + \ell(f(X'),-1)}{2} \right]
  + \pi_\mathrm{D} \expect_{(X,X') \sim p_\mathrm{D}} \left[ \frac{\ell(f(X),-1) + \ell(f(X'),+1)}{2} \right] \nonumber \\
  & \quad + \pi_+^2 \expect_{X,X' \sim p_+} \left[ \frac{\tilde{\ell}(f(X)) + \tilde{\ell}(f(X'))}{2} \right]
  + \pi_+\pi_- \expect_{X \sim p_+, X' \sim p_-} \left[ \frac{\tilde{\ell}(f(X)) - \tilde{\ell}(f(X'))}{2} \right]
  .
  \label{eq:risk-decomposition-sub}
\end{align}
Here
\begin{itemize}
  \item
  the second term on the RHS of Eq.~\eqref{eq:pairwise-similarity-conditional-sub} is substituted into the second term in the last line of Eq.~\eqref{eq:pairwise-risk-decomposition}.

  \item
  the second term on the RHS of Eq.~\eqref{eq:pairwise-dissimilarity-conditional-sub} is substituted into the fourth term in the last line of Eq.~\eqref{eq:pairwise-risk-decomposition}.
\end{itemize}
On the third and fourth term on the RHS of Eq.~\eqref{eq:risk-decomposition-sub},
\begin{align}
  \pi_+^2 & \expect_{X,X' \sim p_+} \left[ \frac{\tilde{\ell}(f(X)) + \tilde{\ell}(f(X'))}{2} \right]
  + \pi_+\pi_- \expect_{X \sim p_+, X' \sim p_-} \left[ \frac{\tilde{\ell}(f(X)) - \tilde{\ell}(f(X'))}{2} \right] \nonumber
  \\
  &= \pi_+^2 \left\{ \frac{1}{2}\expect_{X \sim p_+}\left[\tilde{\ell}(f(X))\right] + \frac{1}{2}\expect_{X' \sim p_+}\left[\tilde{\ell}(f(X'))\right] \right\}
  + \pi_+\pi_- \left\{ \frac{1}{2}\expect_{X \sim p_+}\left[\tilde{\ell}(f(X))\right] - \frac{1}{2}\expect_{X' \sim p_-}\left[\tilde{\ell}(f(X'))\right] \right\} \nonumber
  \\
  &= \pi_+^2 \expect_{X \sim p_+} \left[ \tilde{\ell}(f(X)) \right] + \frac{1}{2} \pi_+\pi_- \expect_{X \sim p_+} \left[ \tilde{\ell}(f(X)) \right]
  - \frac{1}{2} \pi_+\pi_- \expect_{X \sim p_-} \left[ \tilde{\ell}(f(X)) \right] \nonumber
  \\
  &= \frac{\pi_+(1 + \pi_+)}{2} \expect_{X \sim p_+} \left[ \tilde{\ell}(f(X)) \right]
  - \frac{\pi_+(1 - \pi_+)}{2} \expect_{X \sim p_-} \left[ \tilde{\ell}(f(X)) \right] \nonumber
  \\
  &= (\ast)
  .
  \label{eq:risk-decomposition-sub2}
\end{align}
Here similarly to derivation of Eq.~\eqref{eq:pairwise-similarity-conditional-sub},
\begin{align}
  \pi_\s& \expect_{(X,X') \sim p_\s} \left[ \frac{\tilde{\ell}(f(X)) + \tilde{\ell}(f(X'))}{2} \right] \nonumber
  \\
  &= \pi_+^2 \expect_{X,X' \sim p_+} \left[ \frac{\tilde{\ell}(f(X)) + \tilde{\ell}(f(X'))}{2} \right]
  + \pi_-^2 \expect_{X,X' \sim p_-} \left[ \frac{\tilde{\ell}(f(X)) + \tilde{\ell}(f(X'))}{2} \right] \nonumber
  \\
  &= \pi_+^2 \expect_{X \sim p_+} \left[ \tilde{\ell}(f(X)) \right] + \pi_-^2 \expect_{X \sim p_-} \left[ \tilde{\ell}(f(X)) \right]
  .
  \label{eq:pairwise-similarity-conditional-sub2}
\end{align}
Combining Eqs.~\eqref{eq:risk-decomposition-sub2} and \eqref{eq:pairwise-similarity-conditional-sub2},
\begin{align}
  (\ast)
  &= \frac{\pi_+(1 + \pi_+)}{2} \expect_{X \sim p_+} \left[ \tilde{\ell}(f(X)) \right]
  - \frac{\pi_+(1 - \pi_+)}{2 \pi_-^2} \left\{ - \pi_+^2 \expect_{X \sim p_+} \left[ \tilde{\ell}(f(X)) \right] + \pi_\s \expect_{(X,X') \sim p_\s} \left[ \frac{\tilde{\ell}(f(X)) + \tilde{\ell}(f(X'))}{2} \right] \right\} \nonumber
  \\
  &= \frac{\pi_+}{2\pi_-} \expect_{X \sim p_+} \left[ \tilde{\ell}(f(X)) \right]
  - \frac{\pi_+\pi_\s}{2\pi_-} \expect_{(X,X') \sim p_\s} \left[ \frac{\tilde{\ell}(f(X)) + \tilde{\ell}(f(X'))}{2} \right]
  .
  \label{eq:risk-decomposition-sub3}
\end{align}
Finally from Eqs.~\eqref{eq:risk-decomposition-sub} and \eqref{eq:risk-decomposition-sub3}, the expected risk $R(f)$ is written as
\begin{align}
  R(f)
  &= \pi_\s \expect_{(X,X') \sim p_\s} \left[ \frac{\ell(f(X),-1) + \ell(f(X'),-1)}{2} \right]
  + \pi_\mathrm{D} \expect_{(X,X') \sim p_\mathrm{D}} \left[ \frac{\ell(f(X),-1) + \ell(f(X'),+1)}{2} \right] \nonumber \\
  & \quad + \frac{\pi_+}{2\pi_-} \expect_{X \sim p_+} \left[ \tilde{\ell}(f(X)) \right]
  - \frac{\pi_+\pi_\s}{2\pi_-} \expect_{(X,X') \sim p_\s} \left[ \frac{\tilde{\ell}(f(X)) + \tilde{\ell}(f(X'))}{2} \right] \nonumber
  \\
  &= R_{\mathrm{PSD},\ell}(f)
  .
\end{align}
\end{proof}

Now we give a proof for Theorem~\ref{thm:su-risk}.
\begin{proof}[Proof of Theorem~\ref{thm:su-risk}]
  By Lemma~\ref{lem:psd-estimator}, it is enough to show $R_{\mathrm{SU},\ell}(f) = R_{\mathrm{PSD},\ell}(f)$.

  From Eq.~\eqref{eq:pairwise-marginal},
  \begin{align}
    \expect_{X \sim p} & \left[ \frac{\ell(f(X),-1) + \ell(f(X),+1)}{2} \right] \nonumber
    \\
    &= \expect_{X \sim p}\left[\frac{\ell(f(X),-1)}{2}\right] + \expect_{X \sim p}\left[\frac{\ell(f(X)),+1}{2}\right]
    \qquad \qquad \text{($\because$ linearity of the expectation)}
    \nonumber
    \\
    &= \expect_{X \sim p}\left[\frac{\ell(f(X),-1)}{2}\right] + \expect_{X' \sim p}\left[\frac{\ell(f(X')),+1}{2}\right] \nonumber
    \\
    &= \expect_{X,X' \sim p} \left[ \frac{\ell(f(X),-1) + \ell(f(X'),+1)}{2} \right] \nonumber
    \\
    &= \pi_\s \expect_{(X,X') \sim p_\s} \left[ \frac{\ell(f(X),-1) + \ell(f(X'),+1)}{2} \right]
    + \pi_\mathrm{D} \expect_{(X,X') \sim p_\mathrm{D}} \left[ \frac{\ell(f(X),-1) + \ell(f(X'),+1)}{2} \right]
    \label{eq:pairwise-marginal-sub}
    ,
  \end{align}
  where $\mathbb{E}_{X \sim p}[\cdot]$ denotes the expectation over the marginal distribution $p(X)$
  and the last equality is obtained from Eq.~\eqref{eq:pairwise-marginal}.
  Eq.~\eqref{eq:pairwise-marginal-sub} produces an alternative expression of the expectation over $p_\d$ (the third term on the RHS of Eq.~\eqref{eq:psd-estimator}):
  \begin{align}
    \pi_\d & \expect_{(X,X') \sim p_\mathrm{D}} \left[ \frac{\ell(f(X),-1) + \ell(f(X'),+1)}{2} \right] \nonumber
    \\
    &= \expect_{X \sim p} \left[ \frac{\ell(f(X),-1) + \ell(f(X),+1)}{2} \right]
    - \pi_\s \expect_{(X,X') \sim p_\s} \left[ \frac{\ell(f(X),-1) + \ell(f(X'),+1)}{2} \right]
    \label{eq:expectation-of-dissimilar-alt}
    .
  \end{align}

  Next, we obtain an alternative expression of the expectation over positive data (the first term in RHS of Eq.~\eqref{eq:psd-estimator}).
  The following two equations~\eqref{eq:expectation-of-positive-alt-sub1} and \eqref{eq:expectation-of-positive-alt-sub2} are useful:
  \begin{align}
    \expect_{X \sim p} & \left[ \tilde{\ell}(f(X)) \right]
    = \pi_+ \expect_{X \sim p_+} \left[ \tilde{\ell}(f(X)) \right] + \pi_- \expect_{X \sim p_-} \left[ \tilde{\ell}(f(X)) \right] ,
    \label{eq:expectation-of-positive-alt-sub1}
  \end{align}
  which can simply be obtained from Eq.~\eqref{eq:marginal}.
  \begin{align}
    \pi_\s \expect_{(X,X') \sim p_\s} \left[ \frac{\tilde{\ell}(f(X)) + \tilde{\ell}(f(X'))}{2} \right]
    &= \pi_+^2 \expect_{X,X' \sim p_+} \left[\frac{\tilde{\ell}(f(X)) + \tilde{\ell}(f(X'))}{2}\right]
    + \pi_-^2 \expect_{X,X' \sim p_-} \left[\frac{\tilde{\ell}(f(X)) + \tilde{\ell}(f(X'))}{2}\right] \nonumber
    \\
    &= \pi_+^2 \left\{ \frac{1}{2}\expect_{X \sim p_+}\left[\tilde{\ell}(f(X))\right] + \frac{1}{2}\expect_{X' \sim p_+}\left[\tilde{\ell}(f(X'))\right] \right\} \nonumber \\
    & \qquad + \pi_-^2 \left\{ \frac{1}{2}\expect_{X \sim p_-}\left[\tilde{\ell}(f(X))\right] + \frac{1}{2}\expect_{X' \sim p_-}\left[\tilde{\ell}(f(X'))\right] \right\} \nonumber
    \\
    &= \pi_+^2 \expect_{X \sim p_+} \left[ \tilde{\ell}(f(X)) \right] + \pi_-^2 \expect_{X \sim p_-} \left[ \tilde{\ell}(f(X)) \right] ,
    \label{eq:expectation-of-positive-alt-sub2}
  \end{align}
  which is obtained from Eq.~\eqref{eq:pairwise-similarity-conditional}.
  By calculating $\text{\eqref{eq:expectation-of-positive-alt-sub2}}-\pi_-\times\text{\eqref{eq:expectation-of-positive-alt-sub1}}$ and organizing, we obtain
  \begin{align}
    \frac{\pi_+}{2\pi_-} & \expect_{X \sim p_+} \left[ \tilde{\ell}(f(X)) \right]
    = \frac{\pi_\s}{2\pi_-(2\pi_+ - 1)} \expect_{(X,X') \sim p_\s} \left[ \frac{\tilde{\ell}(f(X)) + \tilde{\ell}(f(X'))}{2} \right]
    - \frac{1}{2(2\pi_+ - 1)} \expect_{X \sim p} \left[ \tilde{\ell}(f(X)) \right]
    \label{eq:expectation-of-positive-alt}
    .
  \end{align}

  Substituting Eqs.~\eqref{eq:expectation-of-dissimilar-alt} and \eqref{eq:expectation-of-positive-alt} into Eq.~\eqref{eq:psd-estimator},
  \begin{align}
    R_{\mathrm{PSD},\ell}(f)
    &= \frac{\pi_\s}{2\pi_-(2\pi_+ - 1)} \expect_{(X,X') \sim p_\s} \left[ \frac{\tilde{\ell}(f(X)) + \tilde{\ell}(f(X'))}{2} \right]
    - \frac{1}{2(2\pi_+ - 1)} \expect_{X \sim p} \left[ \tilde{\ell}(f(X)) \right] \nonumber \\
    & \quad + \pi_\s \expect_{(X,X') \sim p_\s} \left[ -\frac{\pi_+}{2\pi_-}\frac{\ell(f(X),+1) + \ell(f(X'),+1)}{2} + \frac{1+\pi_-}{2\pi_-}\frac{\ell(f(X),-1) + \ell(f(X'), -1)}{2} \right] \nonumber \\
    & \quad + \expect_{X \sim p} \left[ \frac{\ell(f(X),-1) + \ell(f(X),+1)}{2} \right]
    - \pi_\s \expect_{(X,X') \sim p_\s} \left[ \frac{\ell(f(X),-1) + \ell(f(X'),+1)}{2} \right] \nonumber
    \\
    &= \pi_\s\expect_{(X,X') \sim p_\s} \left[ \frac{1 + 2\pi_+}{4(2\pi_+ - 1)}\tilde{\ell}(f(X)) + \frac{1 + 2\pi_-}{4(2\pi_+ - 1)}\tilde{\ell}(f(X')) \right] \nonumber \\
    & \quad + \expect_{X \sim p} \left[ -\frac{\pi_-}{2\pi_+ - 1}\ell(f(X), +1) + \frac{\pi_+}{2\pi_+ - 1}\ell(f(X), -1) \right] \nonumber
    \\
    &= \pi_\s\expect_{(X,X') \sim p_\s} \left[ \frac{\frac{1}{2\pi_+ - 1}\tilde{\ell}(f(X)) + \frac{1}{2\pi_+ - 1}\tilde{\ell}(f(X'))}{2} \right] \nonumber \\
    & \qquad + \expect_{X \sim p} \left[ -\frac{\pi_-}{2\pi_+ - 1}\ell(f(X), +1) + \frac{\pi_+}{2\pi_+ - 1}\ell(f(X), -1) \right]
    \label{eq:symmetric-condition}
    \\
    &= R_{\mathrm{SU},\ell}(f) \nonumber
    ,
  \end{align}
  which concludes the proof.
  The third equality of Eq.~\eqref{eq:symmetric-condition} holds because $X$ and $X'$ are symmetric and
  \begin{align*}
    \frac{1 + 2\pi_+}{4(2\pi_+ - 1)}\tilde{\ell}(\cdot) + \frac{1 + 2\pi_-}{4(2\pi_+ - 1)}\tilde{\ell}(\cdot)
    = \frac{1}{2\pi_+ - 1}\tilde{\ell}(\cdot)
    = \frac{\frac{1}{2\pi_+ - 1}\tilde{\ell}(\cdot) + \frac{1}{2\pi_+ - 1}\tilde{\ell}(\cdot)}{2}
    .
  \end{align*}
\end{proof}

\section{Discussion on Variance of Risk Estimator}

\subsection{Proof of Lemma~\ref{lem:pairing-fact}}
\label{sec:proof-of-pairing-fact}

The statement can be simply confirmed as follows:
\begin{align*}
  \expect_{(X,X') \sim p_\s} \left[ \frac{\mathcal{L}_{\s,\ell}(f(X)) + \mathcal{L}_{\s,\ell}(f(X'))}{2} \right]
  &= \expect_{X \sim \tilde{p}_\s} \left[ \frac{\mathcal{L}_{\s,\ell}(f(X))}{2} \right] + \expect_{X' \sim \tilde{p}_\s} \left[ \frac{\mathcal{L}_{\s,\ell}(f(X'))}{2} \right] \\
  &= \expect_{X \sim \tilde{p}_\s} \left[ \mathcal{L}_{\s,\ell}(f(X)) \right] \\
  &= \expect_{X \sim \tilde{p}_\s} \left[ \alpha\mathcal{L}_{\s,\ell}(f(X)) + (1-\alpha)\mathcal{L}_{\s,\ell}(f(X)) \right] \\
  &= \expect_{X \sim \tilde{p}_\s} \left[ \alpha\mathcal{L}_{\s,\ell}(f(X)) \right] + \expect_{X' \sim \tilde{p}_\s} \left[ (1-\alpha)\mathcal{L}_{\s,\ell}(f(X')) \right] \\
  &= \expect_{(X,X') \sim p_\s} \left[ \alpha\mathcal{L}_{\s,\ell}(f(X)) + (1-\alpha)\mathcal{L}_{\s,\ell}(f(X')) \right]
  .
\end{align*}
\qed

\subsection{Proof of Theorem~\ref{thm:minimum-variance-pairing}}
\label{sec:proof-of-minimum-variance}

We show Eq.~\eqref{eq:minimum-variance-s-estimator} is the variance minimizer of
\begin{align*}
  S(f;\alpha) \triangleq
  \frac{1}{n_\s} \sum_{i=1}^{n_\s} \left\{ \alpha\mathcal{L}_{\s,\ell}(f(\bm{x}_{\s,i})) + (1-\alpha)\mathcal{L}_{\s,\ell}(f(\bm{x}_{\s,i}')) \right\}
  ,
\end{align*}
with respect to $\alpha \in \mathbb{R}$.
Let $\mu_1 \triangleq \expect_{\{(\bm{x}_{\s,i},\bm{x}_{\s,i}')\} \sim p_\s}[S(f;\alpha)]$ and
\begin{align*}
  \tilde{\mu}_1 &\triangleq \expect_{\{\bm{x}_{\s,i}\} \sim \tilde{p}_\s} \left[ \frac{1}{n_\s}\sum_{i=1}^{n_\s}\mathcal{L}_{\s,\ell}(f(\bm{x}_{\s,i})) \right]
  = \expect_{\{\bm{x}_{\s,i}'\} \sim \tilde{p}_\s} \left[ \frac{1}{n_\s}\sum_{i=1}^{n_\s}\mathcal{L}_{\s,\ell}(f(\bm{x}_{\s,i}')) \right],
  \\
  \tilde{\mu}_2 &\triangleq \expect_{\{\bm{x}_{\s,i}\} \sim \tilde{p}_\s} \left[ \left( \frac{1}{n_\s}\sum_{i=1}^{n_\s}\mathcal{L}_{\s,\ell}(f(\bm{x}_{\s,i})) \right)^2 \right]
  = \expect_{\{\bm{x}_{\s,i}'\} \sim \tilde{p}_\s} \left[ \left( \frac{1}{n_\s}\sum_{i=1}^{n_\s}\mathcal{L}_{\s,\ell}(f(\bm{x}_{\s,i}')) \right)^2 \right]
  .
\end{align*}
Then,
\begin{align*}
  \mathrm{Var}(S(f;\alpha))
  &= \expect_{\{(\bm{x}_{\s,i},\bm{x}_{\s,i}')\} \sim p_\s} \left[ \left( S(f;\alpha) - \mu_1 \right)^2 \right]
  \\
  &= \expect_{\{(\bm{x}_{\s,i},\bm{x}_{\s,i}')\} \sim p_\s} \left[ S(f;\alpha)^2 \right] - \mu_1^2
  \\
  &= \alpha^2 \expect_{\{\bm{x}_{\s,i}\} \sim \tilde{p}_\s} \left[ \left( \frac{1}{n_\s}\sum_{i=1}^{n_\s}\mathcal{L}_{\s,\ell}(f(\bm{x}_{\s,i})) \right)^2 \right] \\
  & \qquad + 2\alpha(1-\alpha)
    \expect_{\{\bm{x}_{\s,i}\} \sim \tilde{p}_\s} \left[ \frac{1}{n_\s}\sum_{i=1}^{n_\s}\mathcal{L}_{\s,\ell}(f(\bm{x}_{\s,i})) \right]
    \expect_{\{\bm{x}_{\s,i}'\} \sim \tilde{p}_\s} \left[ \frac{1}{n_\s}\sum_{i=1}^{n_\s}\mathcal{L}_{\s,\ell}(f(\bm{x}_{\s,i}')) \right] \\
  & \qquad + (1-\alpha)^2 \expect_{\{\bm{x}_{\s,i}'\} \sim \tilde{p}_\s} \left[ \left( \frac{1}{n_\s}\sum_{i=1}^{n_\s}\mathcal{L}_{\s,\ell}(f(\bm{x}_{\s,i}')) \right)^2 \right]
  - \mu_1^2
  \\
  &= \tilde{\mu}_2\alpha^2 + 2\tilde{\mu}_1^2\alpha(1-\alpha) + \tilde{\mu}_2(1-\alpha)^2 - \mu_1^2
  \\
  &= 2(\tilde{\mu}_2 - \tilde{\mu}_1^2)\left\{ \left( \alpha - \frac{1}{2} \right)^2 - \frac{1}{4} \right\} + \tilde{\mu}_2 - \mu_1^2
  .
\end{align*}
Since $\tilde{\mu}_2 - \tilde{\mu}_1^2$ is the variance of $\frac{1}{n_\s}\sum_i\mathcal{L}_{\s,\ell}(f(\bm{x}_{\s,i}))$, $\tilde{\mu}_2 - \tilde{\mu}_1^2 \ge 0$.
Thus, $\mathrm{Var}(S(f;\alpha))$ is minimized when $\alpha = \frac{1}{2}$.
\qed

\section{Proof of Theorem~\ref{thm:convex-su}}
\label{sec:proof-of-convex-su}

Since $\ell$ is a twice differentiable margin loss,
there is a twice differentiable function $\psi: \mathbb{R} \rightarrow \mathbb{R}_+$ such that $\ell(z,t) = \psi(tz)$.
Taking the derivative of
\begin{align*}
\hat{J}_\ell(\bm{w})
&= \frac{\lambda}{2}\bm{w}^\top\bm{w}
-\frac{\pi_\s}{2n_\s(2\pi_+ - 1)} \sum_{i=1}^{2n_\s} \bm{w}^\top\bm{\phi}(\tilde{\bm{x}}_{\s,i}) \\
& \qquad + \frac{1}{n_\u(2\pi_+ - 1)} \sum_{i=1}^{n_\u} \left\{ -\pi_-\ell(\bm{w}^\top\bm{\phi}(\bm{x}_{\u,i}),+1) + \pi_+\ell(\bm{w}^\top\bm{\phi}(\bm{x}_{\u,i}),-1) \right\}
\end{align*}
with respect to $\bm{w}$,
\begin{align*}
\frac{\partial}{\partial\bm{w}}\hat{J}_\ell(\bm{w})
&= \lambda\bm{w}
- \frac{\pi_\s}{2n_\s(2\pi_+ - 1)} \sum_{i=1}^{2n_\s} \bm{\phi}(\tilde{\bm{x}}_{\s,i})
+ \frac{1}{n_\u(2\pi_+ - 1)} \sum_{i=1}^{n_\u} \left\{ -\pi_-\frac{\partial\ell(\xi_i,+1)}{\partial\xi_i} + \pi_+\frac{\partial\ell(\xi_i,-1)}{\partial\xi_i} \right\}\bm{\phi}(\bm{x}_{\u,i})
,
\end{align*}
where $\xi_i \triangleq \bm{w}^\top\bm{\phi}(\bm{x}_{\u,i})$.
Here, the second-order derivative of $\ell(z,t)$ with respect to $z$ is
\begin{align*}
\frac{\partial^2\ell(z,t)}{\partial z^2}
= \frac{\partial^2\psi(tz)}{\partial z^2}
= \frac{\partial}{\partial z}\left(t\frac{\partial\psi(\xi)}{\partial \xi}\right)
= t^2\frac{\partial^2\psi(\xi)}{\partial\xi^2}
= \frac{\partial^2\psi(\xi)}{\partial\xi^2}
,
\end{align*}
where $\xi = tz$ is employed in the second equality and $t \in \{+1,-1\}$ is employed in the last equality.
Thus the Hessian of $\hat{J}_\ell$ is
\begin{align*}
\bm{H}\hat{J}_\ell(\bm{w})
&= \lambda I + \frac{1}{n_\u(2\pi_+ - 1)} \sum_{i=1}^{n_\u}
\left\{ -\pi_-\frac{\partial}{\partial\bm{w}}\frac{\partial\ell(\xi_i,+1)}{\partial\xi_i} +\pi_+\frac{\partial}{\partial\bm{w}}\frac{\partial\ell(\xi_i,-1)}{\partial\xi_i} \right\}\bm{\phi}(\bm{x}_{\u,i})^\top
\\
&= \lambda I + \frac{1}{n_\u(2\pi_+ - 1)} \sum_{i=1}^{n_\u}
\left\{ -\pi_-\frac{\partial^2\ell(\xi_i,+1)}{\partial\xi_i^2}\frac{\partial\xi_i}{\partial\bm{w}} + \pi_+\frac{\partial^2\ell(\xi_i,-1)}{\partial\xi_i^2}\frac{\partial\xi_i}{\partial\bm{w}} \right\}\bm{\phi}(\bm{x}_{\u,i})^\top
\\
&= \lambda I + \frac{1}{n_\u(2\pi_+ - 1)} \sum_{i=1}^{n_\u}
\left\{-\pi_-\frac{\partial^2\ell(\xi_i,+1)}{\partial\xi_i^2} + \pi_+\frac{\partial^2\ell(\xi_i,-1)}{\partial\xi_i^2} \right\}\bm{\phi}(\bm{x}_{\u,i})\bm{\phi}(\bm{x}_{\u,i})^\top
\\
&= \lambda I + \frac{1}{n_\u(2\pi_+ - 1)} \sum_{i=1}^{n_\u}
(\pi_+ - \pi_-)\frac{\partial^2\psi(\xi)}{\partial \xi^2} \bm{\phi}(\bm{x}_{\u,i})\bm{\phi}(\bm{x}_{\u,i})^\top
\\
&= \lambda I + \frac{1}{n_\u} \frac{\partial^2\psi(\xi)}{\partial \xi^2} \sum_{i=1}^{n_\u} \bm{\phi}(\bm{x}_{\u,i})\bm{\phi}(\bm{x}_{\u,i})^\top
\\
& \succeq 0
,
\end{align*}
where $A \succeq 0$ means that a matrix $A$ is positive semidefinite.
Positive semidefiniteness of $\bm{H}\hat{J}_\ell(\bm{w})$ follows from $\frac{\partial^2\psi(\xi)}{\partial \xi^2} \ge 0$ ($\because$ $\ell$ is convex) and $\bm{\phi}(\bm{x}_{\u,i})\bm{\phi}(\bm{x}_{\u,i})^\top \succeq 0$.
Thus $\hat{J}_\ell(\bm{w})$ is convex.
\qed

\section{Derivation of Optimization Problems}
\label{sec:optimization-problem}

\subsection{Squared Loss}

First, substituting the linear-in-parameter model $f(\bm{x}) = \bm{w}^\top\bm{\phi}(\bm{x})$ and the squared loss $\ell_\mathrm{SQ}(z,t) = \frac{1}{4}(tz - 1)^2$ into Eq.~\eqref{eq:su-objective-function},
we obtain the following objective function:
\begin{align*}
  \hat{J}_\mathrm{SQ}(\bm{w})
  &= \frac{\pi_\s}{2(2\pi_+ - 1)n_\s} \sum_{i=1}^{2n_\s} \frac{(\bm{w}^\top\bm{\phi}(\tilde{\bm{x}}_{\s,i}) - 1)^2 - (\bm{w}^\top\bm{\phi}(\tilde{\bm{x}}_{\s,i}) + 1)^2}{4} \\
  & \quad + \frac{1}{n_\u} \sum_{i=1}^{n_\u} \frac{-\pi_-\cdot\tfrac{1}{4}(\bm{w}^\top\bm{\phi}(\bm{x}_{\u,i}) - 1)^2 + \pi_+\cdot\tfrac{1}{4}(\bm{w}^\top\bm{\phi}(\bm{x}_{\u,i}) + 1)^2}{2\pi_+ - 1} \\
  & \quad + \frac{\lambda}{2}\|\bm{w}\|^2
  \\
  &= \frac{1}{2\pi_+ - 1} \left\{ - \frac{\pi_\s}{2n_\s} \sum_{i=1}^{2n_\s} \bm{w}^\top\bm{\phi}(\tilde{\bm{x}}_{\s,i})
  + \frac{1}{4n_\u} \sum_{i=1}^{n_\u} \left\{ (2\pi_+ - 1)(\bm{w}^\top\bm{\phi}(\bm{x}_{\u,i})\bm{\phi}(\bm{x}_{\u,i})^\top\bm{w} + 1) + 2\bm{\phi}(\bm{x}_{\u,i})^\top\bm{w} \right\} \right\} \\
  & \quad + \frac{\lambda}{2}\|\bm{w}\|^2
  \\
  &= \bm{w}^\top\left(\frac{1}{4n_\u}X_\u^\top X_\u + \frac{\lambda}{2}I\right)\bm{w}
  + \frac{1}{2\pi_+ - 1}\left(-\frac{\pi_\s}{2n_\s}\bm{1}^\top X_\s + \frac{1}{2n_\u}\bm{1}^\top X_\u\right)\bm{w}
  .
\end{align*}
Taking the derivative with respect to $\bm{w}$,
\begin{align*}
  \frac{\partial}{\partial\bm{w}}\hat{J}_\mathrm{SQ}(\bm{w})
  &= \frac{1}{2n_\u} \left(X_\u^\top X_\u + 2n_\u\lambda I\right)\bm{w}
  - \frac{1}{2\pi_+ - 1}\left(\frac{\pi_\s}{2n_\s}X_\s^\top\bm{1} - \frac{1}{2n_\u}X_\u^\top\bm{1}\right)
  .
\end{align*}
Solving $\frac{\partial}{\partial\bm{w}}\hat{J}_\mathrm{SQ}(\bm{w}) = 0$, we obtain the analytical solution:
\begin{align*}
  \bm{w} = \frac{n_\u}{2\pi_+ - 1}(X_\u^\top X_\u + 2n_\u\lambda I)^{-1}\left(\frac{\pi_\s}{n_\s}X_\s^\top\bm{1} - \frac{1}{n_\u}X_\u^\top\bm{1}\right)
  .
\end{align*}

\subsection{Double-Hinge Loss}

Using the double-hinge loss $\ell_\mathrm{DH}(z,t)=\max(-tz,\max(0,\frac{1}{2}-\frac{1}{2}tz))$,
we obtain the following objective function:
\begin{align*}
  \hat{J}_\mathrm{DH}(\bm{w})
  &= -\frac{\pi_\s}{2n_\s(2\pi_+ - 1)} \sum_{i=1}^{2n_\s} \bm{w}^\top\bm{\phi}(\tilde{\bm{x}}_{\s,i}) \\
  & \qquad + \frac{1}{n_\u(2\pi_+ - 1)} \sum_{i=1}^{n_\u} \left\{ -\pi_-\ell_{\mathrm{DH}}(\bm{w}^\top\bm{\phi}(\bm{x}_{\u,i})) + \pi_+\ell_{\mathrm{DH}}(-\bm{w}^\top\bm{\phi}(\bm{x}_{\u,i})) \right\} \\
  & \qquad + \frac{\lambda}{2}\bm{w}^\top\bm{w}
  .
\end{align*}
Using slack variables $\bm{\xi}, \bm{\eta} \in \mathbb{R}^{n_\u}$, the objective function can be rewritten into the following optimization problem:
\begin{align*}
  \min_{\bm{w},\bm{\xi},\bm{\eta}} &
  -\frac{\pi_\s}{2n_\s(2\pi_+ - 1)} \bm{1}^\top X_\s\bm{w}
  -\frac{\pi_-}{n_\s(2\pi_+ - 1)} \bm{1}^\top \bm{\xi}
  +\frac{\pi_+}{n_\u(2\pi_+ - 1)} \bm{1}^\top \bm{\eta}
  +\frac{\lambda}{2} \bm{w}^\top\bm{w}
\end{align*}
\begin{alignat*}{3}
  \text{s.t.} \qquad
  & \bm{\xi} \ge \bm{0}, \quad
  & \bm{\xi} \ge \frac{1}{2}\bm{1} + \frac{1}{2}X_\u\bm{w}, \quad
  & \bm{\xi} \ge X_\u\bm{w}, \\
  & \bm{\eta} \ge \bm{0}, \quad
  & \bm{\eta} \ge \frac{1}{2}\bm{1} - \frac{1}{2}X_\u\bm{w}, \quad
  & \bm{\eta} \ge -X_\u\bm{w},
\end{alignat*}
where $\ge$ for vectors denotes the element-wise inequality.

Below, we rewrite the optimization problem into the standard QP form.
Let $\bm{\gamma} \triangleq [\bm{w}^\top\; \bm{\xi}^\top\; \bm{\eta}^\top]^\top \in \mathbb{R}^{d + 2n_\u}$ be a objective variable and
\begin{alignat*}{3}
  P &\triangleq \left[
  \begin{array}{ccc}
    \lambda I_{d} & O_{d,n_\u} & O_{d,n_\u} \\
    O_{n_\u,d} & O_{n_\u,n_\u} & O_{n_\u,n_\u} \\
    O_{n_\u,d} & O_{n_\u,n_\u} & O_{n_\u,n_\u}
  \end{array}
  \right], \qquad &
  \bm{q} &\triangleq \left[
  \begin{array}{c}
    -\frac{\pi_\s}{2n_\s(2\pi_+ - 1)}X_\s^\top\bm{1}_{d} \\
    -\frac{\pi_-}{n_\u(2\pi_+ - 1)}\bm{1}_{n_\u} \\
    \frac{\pi_+}{n_\u(2\pi_+ - 1)}\bm{1}_{n_\u}
  \end{array}
  \right]
  \\
  G &\triangleq \left[
  \begin{array}{ccc}
    O_{n_\u,d} & -I_{n_\u} & O_{n_\u,n_\u} \\
    \frac{1}{2}X_\u & -I_{n_\u} & O_{n_\u,n_\u} \\
    X_\u & -I_{n_\u} & O_{n_\u,n_\u} \\
    O_{n_\u,d} & O_{n_\u,n_\u} & -I_{n_\u}\\
    -\frac{1}{2}X_\u & O_{n_\u,n_\u} & -I_{n_\u} \\
    -X_\u & O_{n_\u,n_\u} & -I_{n_\u}
  \end{array}
  \right], \qquad &
  \bm{h} &\triangleq \left[
  \begin{array}{c}
    \bm{0}_{n_\u} \\
    -\frac{1}{2}\bm{1}_{n_\u} \\
    \bm{0}_{n_\u} \\
    \bm{0}_{n_\u} \\
    -\frac{1}{2}\bm{1}_{n_\u} \\
    \bm{0}_{n_\u}
  \end{array}
  \right]
  ,
\end{alignat*}
where $I_k$ means $k \times k$ identity matrix,
$O_{k,l}$ means $k \times l$ all-zero matrix,
$\bm{1}_k$ is $k$-dimensional all-one vector,
and $\bm{0}_k$ is $k$-dimensional all-zero vector.
Then the optimization problem is
\begin{align*}
  \min_{\bm{\gamma}} \frac{1}{2}\bm{\gamma}^\top P \bm{\gamma} + \bm{q}^\top\bm{\gamma} \qquad \text{s.t.} \quad G\bm{\gamma} \le \bm{h}
  ,
\end{align*}
which is the standard form of QP.

\section{Proof of Theorem~\ref{thm:estimation-error-bound}}
\label{sec:proof-of-estimation-error-bound}

\newcommand{\tildesu}{\tilde{\s}\u}

First, we derive the next risk expression for convenience.
\begin{lemma}
  \label{lem:su-estimator}
  Given any function $f: \mathcal{X} \rightarrow \mathbb{R}$, let $R_{\tildesu,\ell}(f)$ be
  \begin{align*}
    R_{\tildesu,\ell}(f) = \pi_\s \expect_{X \sim \tilde{p}_{\s}} \left[ \mathcal{L}_{\s,\ell}(f(X)) \right] + \expect_{X \sim p} \left[ \mathcal{L}_{\u,\ell}(f(X)) \right]
    ,
  \end{align*}
  then $R_{\mathrm{SU},\ell}(f) = R_{\tildesu,\ell}(f)$.
\end{lemma}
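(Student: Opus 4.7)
The plan is to start from the expression for $R_{\mathrm{SU},\ell}(f)$ established in Theorem~\ref{thm:su-risk} and reduce the pairwise expectation to a pointwise one by marginalization. Since the second (unlabeled) term in $R_{\mathrm{SU},\ell}$ already matches the second term of $R_{\tildesu,\ell}$ verbatim, I only need to rewrite the similar-pair term
\begin{align*}
\pi_\s \expect_{(X,X')\sim p_\s}\left[\frac{\mathcal{L}_{\s,\ell}(f(X))+\mathcal{L}_{\s,\ell}(f(X'))}{2}\right]
\end{align*}
as $\pi_\s\expect_{X\sim\tilde p_\s}[\mathcal{L}_{\s,\ell}(f(X))]$.

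First, I would apply linearity of expectation to split the integrand into its two summands, yielding two separate expectations under $p_\s(x,x')$, each depending on only one of the two arguments. Each such expectation can be reduced to an integral against a marginal of $p_\s$: the expectation of $\mathcal{L}_{\s,\ell}(f(X))$ involves only the $\bm{x}$-marginal $\int p_\s(\bm{x},\bm{x}')\,d\bm{x}'$, and the expectation of $\mathcal{L}_{\s,\ell}(f(X'))$ involves only the $\bm{x}'$-marginal $\int p_\s(\bm{x},\bm{x}')\,d\bm{x}$. By Lemma~\ref{lem:pointwise-similar-conditional}, both of these marginals equal $\tilde p_\s$, so each of the two terms equals $\tfrac{1}{2}\expect_{X\sim\tilde p_\s}[\mathcal{L}_{\s,\ell}(f(X))]$. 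Summing them and multiplying by $\pi_\s$ yields the desired expression.

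This is essentially the same symmetry argument already used in Lemma~\ref{lem:pairing-fact} specialized to $\alpha=\tfrac{1}{2}$, so no genuine obstacle arises; the proof amounts to one application of Fubini/linearity together with the marginalization identity of Lemma~\ref{lem:pointwise-similar-conditional}. The only minor point to keep straight is the bookkeeping of the factor $\tfrac{1}{2}$ and the constant $\pi_\s$, and the observation that $p_\s$ is symmetric in its two arguments, so both marginals genuinely coincide with $\tilde p_\s$ rather than producing two different densities.
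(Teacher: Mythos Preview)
Your proposal is correct and follows essentially the same route as the paper: split the pairwise expectation by linearity, reduce each half to an expectation over the marginal $\tilde p_\s$ via Lemma~\ref{lem:pointwise-similar-conditional}, and recombine. The paper's proof is exactly this two-line computation, so there is nothing to add.
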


\begin{proof}
  The first term on the RHS of Eq.~\eqref{eq:su-estimator} can be transformed as follows:
  \begin{align*}
    \pi_\s \expect_{(X,X') \sim p_\s} \left[ \frac{\mathcal{L}_{\s,\ell}(f(X)) + \mathcal{L}_{\s,\ell}(f(X'))}{2} \right]
    &= \pi_\s \left\{ \frac{1}{2} \expect_{X \sim \tilde{p}_\s} \left[ \mathcal{L}_{\s,\ell}(f(X)) \right]
    + \frac{1}{2} \expect_{X' \sim \tilde{p}_\s} \left[ \mathcal{L}_{\s,\ell}(f(X')) \right] \right\}
    \\
    &= \pi_\s \expect_{X \sim \tilde{p}_\s} \left[ \mathcal{L}_{\s,\ell}(f(X)) \right]
    .
  \end{align*}
\end{proof}

Next, we show the uniform deviation bound, which is useful to derive estimation error bounds.
The proof can be found in the textbooks such as \citet{Mohri:2012} (Theorem 3.1).
\begin{lemma}
  \label{lem:uniform-deviation-bound}
  Let $Z$ be a random variable drawn from a probability distribution with density $\mu$,
  $\mathcal{H} = \{h: \mathcal{Z} \rightarrow [0, M]\}$ ($M > 0$) be a class of measurable functions,
  $\{z_i\}_{i=1}^n$ be i.i.d.~samples drawn from the distribution with density $\mu$.
  Then, with probability at least $1 - \delta$,
  \begin{align*}
    \sup_{h \in \mathcal{H}} \left| \expect_{Z \sim \mu}[h(Z)] - \frac{1}{n}\sum_{i=1}^n h(z_i) \right| \le & 2\mathfrak{R}(\mathcal{H};n,\mu) + \sqrt{\frac{M^2\log\frac{2}{\delta}}{2n}}
    .
  \end{align*}
\end{lemma}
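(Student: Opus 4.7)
The plan is a textbook argument combining McDiarmid's bounded differences inequality with symmetrization by a ghost sample and Rademacher variables. Let $S = (z_1, \ldots, z_n)$ denote the sample, and define the two one-sided suprema $\Phi_+(S) = \sup_{h \in \mathcal{H}} \bigl(\expect_{Z \sim \mu}[h(Z)] - \tfrac{1}{n}\sum_i h(z_i)\bigr)$ and $\Phi_-(S) = \sup_{h \in \mathcal{H}} \bigl(\tfrac{1}{n}\sum_i h(z_i) - \expect_{Z \sim \mu}[h(Z)]\bigr)$. Since $\sup_h |\expect[h] - \tfrac{1}{n}\sum_i h(z_i)| = \max(\Phi_+(S), \Phi_-(S))$, it suffices to control each $\Phi_\pm$ separately with probability at least $1-\delta/2$ and then union-bound.

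First I would verify the bounded-differences property. Because every $h$ takes values in $[0, M]$, replacing a single coordinate $z_i$ by any $z_i'$ changes $\tfrac{1}{n}\sum_i h(z_i)$ by at most $M/n$ uniformly in $h$, hence $|\Phi_\pm(S) - \Phi_\pm(S^{(i)})| \le M/n$, where $S^{(i)}$ denotes $S$ with the $i$-th coordinate replaced. Applying McDiarmid's inequality with constants $c_i = M/n$ yields, for either sign,
\begin{align*}
\Pr\!\Bigl( \Phi_\pm(S) > \expect[\Phi_\pm(S)] + t \Bigr) \;\le\; \exp\!\Bigl(-\tfrac{2 n t^2}{M^2}\Bigr).
\end{align*}
Setting the right-hand side to $\delta/2$ and solving for $t$ gives $t = \sqrt{M^2 \log(2/\delta)/(2n)}$; a union bound over the two signs then guarantees both one-sided inequalities simultaneously with probability at least $1-\delta$.

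Next I would control the expectation $\expect[\Phi_+(S)]$ by the classical symmetrization trick. Introducing an independent ghost sample $S' = (z_1', \ldots, z_n') \stackrel{\text{i.i.d.}}{\sim} \mu$, I rewrite $\expect_{Z \sim \mu}[h(Z)] = \expect_{S'}[\tfrac{1}{n}\sum_i h(z_i')]$ and use Jensen's inequality to push the expectation outside the supremum, obtaining $\expect_S[\Phi_+(S)] \le \expect_{S, S'}\!\bigl[\sup_h \tfrac{1}{n}\sum_i (h(z_i') - h(z_i))\bigr]$. Since the pairs $(z_i, z_i')$ are exchangeable, inserting independent Rademacher signs $\sigma_i$ preserves the joint distribution, so this upper bound equals $\expect_{S, S', \bm{\sigma}}\!\bigl[\sup_h \tfrac{1}{n}\sum_i \sigma_i (h(z_i') - h(z_i))\bigr]$. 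Splitting via the triangle inequality and using symmetry of the Rademacher law yields $2\,\mathfrak{R}(\mathcal{H}; n, \mu)$; the identical argument bounds $\expect[\Phi_-(S)]$. Combining with the McDiarmid step completes the proof.

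This is classical, so I do not anticipate a genuine obstacle. The only delicate points are (i) verifying that the bounded-differences constant is exactly $M/n$ for the \emph{supremum} (not just for a fixed $h$), which relies on the inequality $|\sup_h a(h) - \sup_h b(h)| \le \sup_h |a(h) - b(h)|$, and (ii) keeping track of the $\log(2/\delta)$ factor arising from the union bound over the two one-sided directions. Both are standard bookkeeping, and the derivation mirrors Theorem~3.1 of \citet{Mohri:2012}, which is why the author simply cites it rather than reproducing the proof.
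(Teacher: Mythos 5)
Your proposal is correct and is precisely the standard McDiarmid-plus-symmetrization argument behind Theorem~3.1 of \citet{Mohri:2012}, which the paper cites in lieu of giving its own proof. The constants check out: the bounded-differences constant $M/n$ gives the $\sqrt{M^2\log(2/\delta)/(2n)}$ term after the union bound over the two one-sided suprema, and the ghost-sample symmetrization yields the $2\mathfrak{R}(\mathcal{H};n,\mu)$ term, matching the stated bound exactly.
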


Let us begin with the estimation error $R(\hat{f}) - R(f^*)$.
For convenience, let
\begin{alignat*}{2}
  R_{\tilde{\s},\ell}(f) &\triangleq \expect_{X \sim \tilde{p}_\s} \left[ \mathcal{L}_{\s,\ell}(f(X)) \right], \qquad
  \hat{R}_{\tilde{\s},\ell}(f) &\triangleq \frac{1}{2n_\s}\sum_{i=1}^{2n_\s} \mathcal{L}_{\s,\ell}(f(\tilde{\bm{x}}_{\s,i})), \\
  R_{\u,\ell}(f) &\triangleq \expect_{X \sim p} \left[ \mathcal{L}_{\u,\ell}(f(X)) \right], \qquad
  \hat{R}_{\u,\ell}(f) &\triangleq \frac{1}{n_\u}\sum_{i=1}^{n_\u} \mathcal{L}_{\u,\ell}(f(\bm{x}_{\u,i})), \\
  \hat{R}_{\tildesu,\ell}(f) &\triangleq \pi_\s \hat{R}_{\tilde{\s},\ell}(f) + \hat{R}_{\u,\ell}(f).
\end{alignat*}
Note that
\begin{align}
  \hat{R}_{\s\u,\ell}(f) = \hat{R}_{\tilde{\s}\u,\ell}(f)
  \label{eq:su-risk-equivalence}
\end{align}
by Eq.~\eqref{eq:su-estimator}.
Then,
\begin{align}
  R(\hat{f}) - R(f^*)
  &= R_{\s\u,\ell}(\hat{f}) - R_{\s\u,\ell}(f^*)
  && \text{($\because$ Theorem~\ref{thm:su-risk})} \nonumber
  \\
  &= (R_{\s\u,\ell}(\hat{f}) - \hat{R}_{\s\u,\ell}(\hat{f})) + (\hat{R}_{\s\u,\ell}(\hat{f}) - \hat{R}_{\s\u,\ell}(f^*)) \nonumber \\
  & \qquad + (\hat{R}_{\s\u,\ell}(f^*) - R_{\s\u,\ell}(f^*))
  \nonumber
  \\
  & \le (R_{\s\u,\ell}(\hat{f}) - \hat{R}_{\s\u,\ell}(\hat{f})) + 0 + (\hat{R}_{\s\u,\ell}(f^*) - R_{\s\u,\ell}(f^*))
  && \text{($\because$ by the definition of $f^*$ and $\hat{f}$)} \nonumber
  \\
  & \le 2\sup_{f \in \mathcal{F}} \left| R_{\s\u,\ell}(f) - \hat{R}_{\s\u,\ell}(f) \right| \nonumber
  \\
  & = 2\sup_{f \in \mathcal{F}} \left| R_{\tildesu,\ell}(f) - \hat{R}_{\tildesu,\ell}(f) \right|
  && \text{($\because$ Lemma~\ref{lem:su-estimator} and Eq.~\eqref{eq:su-risk-equivalence})} \nonumber
  \\
  & \le 2\pi_\s \sup_{f \in \mathcal{F}} \left| R_{\tilde{\s},\ell}(f) - \hat{R}_{\tilde{\s},\ell}(f) \right| + 2\sup_{f \in \mathcal{F}} \left| R_{\u,\ell}(f) - \hat{R}_{\u,\ell}(f) \right|
  && \text{($\because$ subaditivity of $\sup$)}
  \label{eq:su-risk-decomposition}
  .
\end{align}
Each term in the last line is bounded in next two lemmas with probability at least $1 - \frac{\delta}{2}$.

\begin{lemma}
  \label{lem:rs-bound}
  Assume the loss function $\ell$ is $\rho$-Lipschitz with respect to the first argument ($0 < \rho < \infty$),
  and all functions in the model class $\mathcal{F}$ are bounded, i.e.,
  there exists a constant $C_b$ such that $\|f\|_\infty \le C_b$ for any $f \in \mathcal{F}$.
  Let $C_\ell \triangleq \sup_{t\in\{\pm 1\}} \ell(C_b,t)$.
  For any $\delta > 0$, with probability at least $1 - \frac{\delta}{2}$,
  \begin{align*}
    \sup_{f \in \mathcal{F}} \left| R_{\tilde{\s},\ell}(f) - \hat{R}_{\tilde{\s},\ell}(f) \right|
    \le \frac{4\rho C_\mathcal{F} + \sqrt{2C_\ell^2\log\frac{4}{\delta}}}{|2\pi_+ - 1|\sqrt{2n_\s}}
  \end{align*}
\end{lemma}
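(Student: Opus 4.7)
My plan is to apply the uniform deviation bound of Lemma~\ref{lem:uniform-deviation-bound} to the composed class $\mathcal{G} = \{x \mapsto \mathcal{L}_{\s,\ell}(f(x)) : f \in \mathcal{F}\}$. By Lemma~\ref{lem:pointwise-similar-conditional}, the $2n_\s$ points $\tilde{\bm{x}}_{\s,i}$ are i.i.d.\ draws from $\tilde{p}_\s$, so $\sup_{f \in \mathcal{F}}|R_{\tilde{\s},\ell}(f) - \hat{R}_{\tilde{\s},\ell}(f)|$ is precisely the supremum empirical-versus-population deviation over $\mathcal{G}$ at sample size $2n_\s$. The proof then reduces to two standard ingredients: a uniform range bound on $\mathcal{G}$ and a Rademacher-complexity bound on $\mathcal{G}$.

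For the range bound, I will combine $\|f\|_\infty \le C_b$ with the $\rho$-Lipschitz hypothesis and the definition $C_\ell = \sup_{t \in \{\pm 1\}} \ell(C_b,t)$ to conclude $0 \le \ell(f(x),t) \le C_\ell$, whence $|\mathcal{L}_{\s,\ell}(f(x))| \le C_\ell/|2\pi_+-1|$ and each $g \in \mathcal{G}$ has range at most $M = 2C_\ell/|2\pi_+-1|$; a shift by a constant brings $\mathcal{G}$ into $[0,M]$ without affecting either the deviation or the Rademacher complexity, so Lemma~\ref{lem:uniform-deviation-bound} is applicable. For the complexity bound, I will observe that $\mathcal{L}_{\s,\ell}$ is $\tfrac{2\rho}{|2\pi_+-1|}$-Lipschitz (each $\ell(\cdot,\pm 1)$ is $\rho$-Lipschitz, their difference is $2\rho$-Lipschitz, then scale by $1/|2\pi_+-1|$), and apply the Ledoux--Talagrand contraction principle together with assumption~\eqref{eq:model-assumption} to obtain
\begin{align*}
  \mathfrak{R}(\mathcal{G};2n_\s,\tilde{p}_\s) \le \frac{2\rho}{|2\pi_+-1|}\,\mathfrak{R}(\mathcal{F};2n_\s,\tilde{p}_\s) \le \frac{2\rho C_\mathcal{F}}{|2\pi_+-1|\sqrt{2n_\s}}.
\end{align*}

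Invoking Lemma~\ref{lem:uniform-deviation-bound} with confidence parameter $\delta/2$ (leaving the matching $\delta/2$ budget for the companion $\u$-lemma, so that the subadditivity split in~\eqref{eq:su-risk-decomposition} delivers total confidence $1-\delta$) produces a Rademacher contribution of $4\rho C_\mathcal{F}/(|2\pi_+-1|\sqrt{2n_\s})$ and a concentration contribution of $M\sqrt{\log(4/\delta)/(4n_\s)} = \sqrt{2C_\ell^2\log(4/\delta)}/(|2\pi_+-1|\sqrt{2n_\s})$; collecting these over the common denominator $|2\pi_+-1|\sqrt{2n_\s}$ recovers exactly the stated bound. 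The only real obstacle is bookkeeping: I must invoke the form of the contraction lemma that delivers the sharp Lipschitz constant $2\rho/|2\pi_+-1|$ with no extra factor of $2$, and I must justify the shift trick carefully so that Lemma~\ref{lem:uniform-deviation-bound}'s $[0,M]$ hypothesis introduces no additional constants -- both are routine, but they need explicit verification to land on the precise numerator $4\rho C_\mathcal{F} + \sqrt{2C_\ell^2\log(4/\delta)}$ in the claim.
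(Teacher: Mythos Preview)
Your proposal is correct and lands on exactly the stated constants, but it organizes the argument differently from the paper. The paper does \emph{not} apply Lemma~\ref{lem:uniform-deviation-bound} to the composed class $\mathcal{G}=\{\mathcal{L}_{\s,\ell}\circ f\}$. Instead it first pulls out the factor $1/|2\pi_+-1|$, then splits $\tilde{\ell}(z)=\ell(z,+1)-\ell(z,-1)$ via the triangle inequality into two suprema, one over $\{\ell(f(\cdot),+1):f\in\mathcal{F}\}$ and one over $\{\ell(f(\cdot),-1):f\in\mathcal{F}\}$. Each of these classes already sits in $[0,C_\ell]$ because $\ell\ge 0$, so Lemma~\ref{lem:uniform-deviation-bound} applies directly without any shift trick; summing the two $2\mathfrak{R}(\ell\circ\mathcal{F})$ contributions yields $4\mathfrak{R}(\ell\circ\mathcal{F})$, and Talagrand's contraction with Lipschitz constant $\rho$ plus assumption~\eqref{eq:model-assumption} finishes. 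Your route---one application to $\mathcal{G}$ with range $M=2C_\ell/|2\pi_+-1|$ after a constant shift, and one contraction with Lipschitz constant $2\rho/|2\pi_+-1|$---is equally valid and arguably cleaner, since it invokes the uniform deviation bound only once; the paper's split avoids the shift but pays for it by doubling the number of applications. Both paths rely on the same form of Talagrand's lemma (no extra factor of $2$), and both arrive at the identical numerator $4\rho C_\mathcal{F}+\sqrt{2C_\ell^2\log(4/\delta)}$.
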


\begin{proof}
  By Lemma~\ref{lem:uniform-deviation-bound},
  \begin{align*}
    \sup_{f \in \mathcal{F}} \left| R_{\tilde{\s},\ell}(f) - \hat{R}_{\tilde{\s},\ell}(f) \right|
    &= \frac{1}{|2\pi_+ - 1|} \sup_{f \in \mathcal{F}} \left| \expect_{X \sim \tilde{p}_\s}\left[ \tilde{\ell}(f(X)) \right] - \frac{1}{2n_\s} \sum_{i=1}^{2n_\s} \tilde{\ell}(f(\tilde{\bm{x}}_{\s,i})) \right|
    \\
    &\le \frac{1}{|2\pi_+ - 1|} \left\{ \sup_{f \in \mathcal{F}} \left| \expect_{X \sim \tilde{p}_\s}\left[ \ell(f(X),+1) \right] - \frac{1}{2n_\s} \sum_{i=1}^{2n_\s} \ell(f(\tilde{\bm{x}}_{\s,i}),+1) \right| \right. \\
    & \quad \left. + \sup_{f \in \mathcal{F}} \left| \expect_{X \sim \tilde{p}_\s}\left[ \ell(f(X),-1) \right] - \frac{1}{2n_\s} \sum_{i=1}^{2n_\s} \ell(f(\tilde{\bm{x}}_{\s,i}),-1) \right| \right\}
    \\
    &\le \frac{1}{|2\pi_+ - 1|}\left\{ 4\mathfrak{R}(\ell \circ \mathcal{F}; 2n_\s,p_\s) + \sqrt{\frac{2C_\ell^2\log\frac{4}{\delta}}{2n_\s}} \right\}
    ,
  \end{align*}
  where $\ell \circ \mathcal{F}$ in the last line means $\{\ell \circ f \mid f \in \mathcal{F}\}$.
  The last inequality holds from Lemma~\ref{lem:uniform-deviation-bound}.
  By Talagrand's lemma (Lemma 4.2 in \citet{Mohri:2012}),
  \begin{align*}
    \mathfrak{R}(\ell\circ\mathcal{F};2n_\s,p_\s) \le \rho\mathfrak{R}(\mathcal{F};2n_\s,p_\s)
    .
  \end{align*}
  Together with Eq.~\eqref{eq:model-assumption}, we obtain
  \begin{align*}
    \sup_{f \in \mathcal{F}} \left| R_{\tilde{\s},\ell}(f) - \hat{R}_{\tilde{\s},\ell}(f) \right|
    &\le \frac{1}{|2\pi_+ - 1|} \left\{ 4\rho \frac{C_\mathcal{F}}{\sqrt{2n_\s}} + \sqrt{\frac{2C_\ell^2\log\frac{4}{\delta}}{2n_\s}} \right\}
    \\
    &= \frac{4\rho C_\mathcal{F} + \sqrt{2C_\ell^2\log\frac{4}{\delta}}}{|2\pi_+ - 1|\sqrt{2n_\s}}
    .
  \end{align*}
\end{proof}

\begin{lemma}
  \label{lem:ru-bound}
  Assume the loss function $\ell$ is $\rho$-Lipschitz with respect to the first argument ($0 < \rho < \infty$),
  and all functions in the model class $\mathcal{F}$ are bounded, i.e.,
  there exists a constant $C_b$ such that $\|f\|_\infty \le C_b$ for any $f \in \mathcal{F}$.
  Let $C_\ell \triangleq \sup_{t\in\{\pm 1\}} \ell(C_b,t)$.
  For any $\delta > 0$, with probability at least $1 - \frac{\delta}{2}$,
  \begin{align*}
    \sup_{f \in \mathcal{F}} \left| R_{\u,\ell}(f) - \hat{R}_{\u,\ell}(f) \right|
    \le \frac{2\rho C_\mathcal{F} + \sqrt{\frac{1}{2}C_\ell^2\log\frac{4}{\delta}}}{|2\pi_+ - 1|\sqrt{n_\u}}
  \end{align*}
\end{lemma}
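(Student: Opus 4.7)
The plan is to mirror the proof of Lemma~\ref{lem:rs-bound} almost verbatim, with two small bookkeeping changes reflecting (i) that $\mathcal{L}_{\u,\ell}$ is a $\pi_-,\pi_+$ convex combination of $\ell(\cdot,+1)$ and $\ell(\cdot,-1)$ rather than a difference thereof, and (ii) that the underlying sample $\mathcal{D}_\u$ has size $n_\u$ and is drawn from $p$. First, I would pull the factor $\tfrac{1}{|2\pi_+ - 1|}$ out of $\mathcal{L}_{\u,\ell}$, then use the triangle inequality and subadditivity of $\sup$ to bound
\begin{align*}
 \sup_{f \in \mathcal{F}} \left| R_{\u,\ell}(f) - \hat{R}_{\u,\ell}(f) \right|
 &\le \tfrac{\pi_-}{|2\pi_+ - 1|} \sup_{f \in \mathcal{F}} \left| \expect_{X\sim p}[\ell(f(X),+1)] - \tfrac{1}{n_\u}\textstyle\sum_{i=1}^{n_\u}\ell(f(\bm{x}_{\u,i}),+1) \right| \\
 &\quad + \tfrac{\pi_+}{|2\pi_+ - 1|} \sup_{f \in \mathcal{F}} \left| \expect_{X\sim p}[\ell(f(X),-1)] - \tfrac{1}{n_\u}\textstyle\sum_{i=1}^{n_\u}\ell(f(\bm{x}_{\u,i}),-1) \right|.
\end{align*}

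Next, I would apply Lemma~\ref{lem:uniform-deviation-bound} to each of the two suprema. Each involves the loss class $\ell\circ\mathcal{F}$ which is bounded in $[0,C_\ell]$ by assumption. With probability at least $1-\tfrac{\delta}{4}$ each (so $1-\tfrac{\delta}{2}$ after a union bound), each supremum is bounded by $2\mathfrak{R}(\ell\circ\mathcal{F};n_\u,p) + \sqrt{C_\ell^2 \log(4/\delta)/(2n_\u)}$. Talagrand's contraction lemma (as used in the proof of Lemma~\ref{lem:rs-bound}) gives $\mathfrak{R}(\ell\circ\mathcal{F};n_\u,p) \le \rho\,\mathfrak{R}(\mathcal{F};n_\u,p)$, and the model assumption~\eqref{eq:model-assumption} bounds the latter by $C_\mathcal{F}/\sqrt{n_\u}$.

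Combining, and using $\pi_+ + \pi_- = 1$ to collapse the two $\pi$-weighted copies into a single contribution, I would obtain
\begin{align*}
 \sup_{f \in \mathcal{F}} \left| R_{\u,\ell}(f) - \hat{R}_{\u,\ell}(f) \right|
 \le \frac{1}{|2\pi_+-1|}\left(2\rho\,\frac{C_\mathcal{F}}{\sqrt{n_\u}} + \sqrt{\frac{C_\ell^2\log\frac{4}{\delta}}{2n_\u}}\right),
\end{align*}
which rearranges to the stated bound. Note that the factor $4\rho C_\mathcal{F}$ and $\sqrt{2C_\ell^2\log(4/\delta)}$ appearing in Lemma~\ref{lem:rs-bound} get replaced by $2\rho C_\mathcal{F}$ and $\sqrt{C_\ell^2\log(4/\delta)/2}$ here: the $\pi_-,\pi_+$ weights sum to $1$ rather than contributing two independent unit terms, shrinking the constant by a factor of two.

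There is no real obstacle: the argument is routine once one has Lemma~\ref{lem:uniform-deviation-bound}, Talagrand's contraction, and assumption~\eqref{eq:model-assumption}. The only care points are the probability accounting (allocating $\delta/4$ to each of the two suprema so that the combined failure probability is at most $\delta/2$, which together with Lemma~\ref{lem:rs-bound} yields the total $\delta$ budget used in Theorem~\ref{thm:estimation-error-bound} via~\eqref{eq:su-risk-decomposition}) and the correct tracking of the $\pi_-/\pi_+$ weights so that they compress to the advertised constants.
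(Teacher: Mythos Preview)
Your proposal is correct and takes essentially the same approach as the paper, which simply states that the lemma ``can be proven similarly to Lemma~\ref{lem:rs-bound}.'' Your explicit observation that the $\pi_-,\pi_+$ weights sum to $1$ (rather than the two unit coefficients in $\tilde\ell$) is exactly what accounts for the halved constants relative to Lemma~\ref{lem:rs-bound}.
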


\begin{proof}
  This lemma can be proven similarly to Lemma~\ref{lem:rs-bound}.
\end{proof}

Combining Lemma~\ref{lem:rs-bound}, Lemma~\ref{lem:ru-bound} and Eq.~\eqref{eq:su-risk-decomposition}, Theorem~\ref{thm:estimation-error-bound} is proven.
\qed


\end{document}